\newtheorem{thm}{Theorem}
\newtheorem{lem}{Lemma}
\newtheorem{cor}{Corollary}
\theoremstyle{definition}
\newtheorem{defn}{Definition}
\newtheorem{rem}{Remark}
\newcommand{\cD}{\mathcal{D}}
\newcommand{\cH}{\mathcal{H}}
\newcommand{\cT}{\mathcal{T}}
\newcommand{\cX}{\mathcal{X}}
\newcommand{\bb}{\bm{b}}
\newcommand{\bx}{\bm{x}}
\newcommand{\bz}{\bm{z}}
\newcommand{\bw}{\bm{w}}
\newcommand{\bA}{\bm{A}}
\newcommand{\bX}{\bm{X}}
\newcommand{\bZ}{\bm{Z}}
\newcommand{\Reals}{\mathbb{R}}
\newcommand{\defined}{\triangleq}
\newcommand{\dif}{\textrm{d}}
\newcommand{\EEE}[2]{\mathbb{E}_{#1}\big[ #2 \big]}
\newcommand{\Varr}[2]{\mathsf{Var}_{#1}\left(#2\right)}
\newcommand{\Var}[1]{\mathsf{Var}\left(#1\right)}
\newcommand{\Unc}[1]{\mathsf{Unc}\left(#1\right)}
\newcommand{\Sim}[1]{\mathsf{Sim}\left(#1\right)}
\newcommand{\Perf}[1]{\mathsf{Perf}_t\left(#1\right)}
\newcommand{\myPerf}[1]{\mathsf{Perf}_t \left(g_{h,#1} \circ h, ~ \bx^*\right)}  %
\newcommand{\pH}{\mathcal{P}_{\mathcal{H}}}
\newcommand{\hao}{}
\newcommand{\yj}{}
\newcommand{\rev}{}  %
\newcommand{\rdim}{d}
\newcommand{\amk}{\mathsf{Dist}_{k}}
\newcommand{\amone}{\mathsf{Dist}_{1}}
\newcommand{\kendall}{Kendall's $\tau$ coefficient }
\title{Quantifying Representation Reliability in Self-Supervised Learning Models}
\author[1]{Young-Jin Park}
\author[2]{Hao Wang}
\author[ ]{Shervin Ardeshir}
\author[1]{Navid Azizan}
\affil[1]{%
    Massachusetts Institute of Technology
}
\affil[2]{%
    MIT-IBM Watson AI Lab
}
\begin{document}
\maketitle

\begin{abstract}
Self-supervised learning models extract general-purpose representations from data. Quantifying the reliability of these representations is crucial, as many downstream models rely on them as input for their own tasks. To this end, we introduce a formal definition of \emph{representation reliability}: the representation for a given test point is considered to be reliable if the downstream models built on top of that representation can consistently generate accurate predictions for that test point. However, accessing downstream data to quantify the representation reliability is often infeasible or restricted due to privacy concerns. We propose an ensemble-based method for estimating the representation reliability without knowing the downstream tasks a priori. Our method is based on the concept of \emph{neighborhood consistency} across distinct pre-trained representation spaces. The key insight is to find shared neighboring points as anchors to align these representation spaces before comparing them. We demonstrate through comprehensive numerical experiments that our method effectively captures the representation reliability with a high degree of correlation, achieving robust and favorable performance compared with baseline methods.
The code is available at \url{https://github.com/azizanlab/repreli}.
\end{abstract}

\section{Introduction}
\label{sec::intro}

Self-supervised learning (SSL) has opened the door to the development of general-purpose embedding functions, often referred to as \emph{foundation models}, that can be used or fine-tuned for various downstream tasks \citep{jaiswal2020survey, jing2020self}. 
These embedding functions are pre-trained on large corpora of different data modalities, spanning visual \citep{chen2020simple}, textual \citep{brown2020language}, audio \citep{al2021clar}, and their combinations \citep{radford2021learning, morgado2021audio}, aimed at being general-purpose and agnostic to the downstream tasks they may be utilized for. For instance, the recent surge in large pre-trained models such as CLIP \citep{radford2021learning} and ChatGPT \citep{chatgpt} has resulted in the development of many prompt-based or dialogue-based downstream use cases, none of which are known a priori when the pre-trained model is being deployed. %

Embedding functions learned through SSL may produce unreliable outputs. For example, large language models can generate factually inaccurate information with a high level of confidence \citep{bommasani2021opportunities, tran2022plex}. With the increasing use of SSL to generate textual, visual, and audio content, unreliable embedding functions could have significant implications. Furthermore, given that these embedding functions are frequently employed as frozen backbones for various downstream use cases, %
adding more labeled downstream data may not improve the performance if the initial representation is unreliable. %
Therefore, having notion(s) of \textbf{reliability/uncertainty} for such pre-trained models, alongside their abstract representations, would be a key enabler for their reliable deployment, especially in safety-critical settings.

In this paper, we introduce a formal definition of representation reliability based on its impact on downstream tasks. Our definition pertains to a representation of a given test point produced by an embedding function. If a variety of downstream tasks that build upon this representation consistently yield accurate results for the test point, we consider this representation reliable. Existing uncertainty quantification (UQ) frameworks mostly focus on the supervised learning setting, where they rely on the consistency of predictions across various predictive models. We provide a counter-example showing that they cannot be directly applied to our setting, as representations lack a ground truth for comparison. In other words, inconsistent predictions often indicate that the predictions are not reliable, but inconsistent representations do not necessarily imply that the representations are unreliable. Hence, it is critical to align representation spaces in such a way that corresponding regions have similar semantic meanings before comparing them.

We propose an ensemble-based approach for estimating representation reliability without knowing downstream tasks a priori. We prove that a test point has a reliable representation if it has a reliable neighbor which remains consistently close to the test point, across multiple representation spaces generated by different embedding functions. Based on this theoretical insight, we select a set of embedding functions and reference data (e.g., data used for training the embedding functions). We then compute the number of consistent neighboring points among the reference data to estimate the representation reliability. The underlying reasoning is that a test point with more consistent neighbors is more likely to have a reliable and consistent neighbor. This reliable and consistent neighbor can be used to align different representation spaces that are generated by different embedding functions.

\rev{We conduct extensive numerical experiments to validate our approach and compare it with baselines, including state-of-the-art out-of-distribution (OOD) detection measures and UQ in supervised learning. Our approach consistently captures the representation reliability in all different applications. These applications range from predicting the performance of embedding functions when adapting them to in-distribution or out-of-distribution downstream tasks to ranking the reliability of different embedding functions. While the baselines may occasionally surpass our approach, their performance fluctuates significantly across different settings and can even become negative, posing a risk when used to assess reliability in safety-critical settings. 
}

In summary, our main contributions are:
\begin{itemize}[leftmargin=1.0em]
    
    \item We present a formal definition of representation reliability, which quantifies how well an abstract representation can be used across various downstream tasks. 
    To the best of our knowledge, this is the first comprehensive study to investigate uncertainty in representation space.

    \item We provide a counter-example, showing that existing UQ tools in supervised learning cannot be directly applied to estimate the representation reliability.

    \item We prove a theorem stating that a reliable and consistent neighbor of a test point can serve as an anchor point for aligning different representation spaces and assuring the representation reliability.
    
    \item Based on our theoretical findings, we introduce an ensemble-based approach that uses neighborhood consistency to estimate the representation reliability.
    
    \item We conduct comprehensive numerical experiments, showing that our approach consistently captures the representation reliability whereas the baselines could not. 
\end{itemize}

\paragraph{Broad Impact and Implication.}
Our work introduces a way to quantify the reliability of pre-trained models prior to their deployment in specific downstream tasks, which has several implications. Imagine a practitioner has access to multiple pre-trained models that have been trained using distinct learning paradigms, data, or architectures. Our method helps compare and rank these models based on their reliability scores, enabling the practitioner to select the model with the highest reliability score. This is particularly valuable when transmitting downstream training data is challenging due to privacy concerns, or when the downstream tasks shift over time. 
Similarly, in cases where a pre-trained model yields incorrect (or harmful) decisions for specific individuals, our method can help explain whether the issue stems from the abstract representation or the projection heads added to the pre-trained model. 
We hope our effort can push the frontiers of self-supervised learning towards more responsible and reliable deployment, encouraging further research to ensure the transferability of knowledge acquired during pre-training across diverse tasks and domains.

\subsection*{Related Works}

\paragraph{Uncertainty Quantification in Supervised Learning.} Existing work on UQ mostly focused on supervised learning settings.
For example, Bayesian inference quantifies uncertainty by placing a prior distribution over model parameters, updating this prior distribution with observed data to obtain a posterior distribution, and examining the inconsistency of predictions derived from the posterior distribution \citep{neal1996bayesian, mackay1992practical,kendall2017uncertainties,depeweg2018decomposition}. Since the posterior distribution may not have an analytical form, many approximating approaches have been introduced, including Monte Carlo dropout \citep{gal2016dropout}, deep ensembles \citep{osband2018randomized, lakshminarayanan2017simple, wen2020batchensemble}, and Laplace approximation \citep{daxberger2021laplace,sharma2021sketching}. In this paper, we focus on quantifying the uncertainty of representations and prove that standard supervised-learning frameworks cannot be directly applied to investigate representation uncertainty (see Section~\ref{sec:consistency} for more details).

\paragraph{Novelty Detection and Representation Reliability.}  Self-supervised learning is increasingly used for novelty/OOD detection. These approaches train self-supervised models and then compute an OOD score for a new test point based on its distance from the training data in the representation space \citep{lee2018simple, van2020uncertainty, tack2020csi, mirzae2022fake}.
However, OOD detection and our representation reliability are different concepts. 
\rev{The former identifies whether a test point belongs to the same distribution as the (pre-)training data, while the latter evaluates the possibility that a test point can receive accurate predictions when the self-supervised learning model is adapted to various downstream tasks (see Section~\ref{sec:defin} for more details).}

To compare with this line of work, we conduct comprehensive numerical experiments (Section~\ref{sec::exp}). 
The results suggest that our approach more robustly captures the representation reliability compared with state-of-the-art OOD detection measures and the empirical metrics proposed in \citet{ardeshir2022uncertainty}.
Finally, our representation reliability extends the notion of probe as in \citet{haochen2021provable} to multiple downstream tasks. We introduce an algorithm for estimating the representation reliability without prior knowledge of the specific downstream tasks.

\paragraph{Uncertainty-Aware Representation Learning.} 
There is a growing body of research aimed at training robust self-supervised models that map input points to a distribution in the representation space, rather than to a single point \citep{vilnis2014word,neelakantan2015efficient,karaletsos2015bayesian,bojchevski2017deep,oh2018modeling,chen2020simple,wu2020simple,zhang2021temperature,almecija2022uncertaintyaware}. 
They rely on special neural network architectures and/or introduce alternative training schemes. 
For example, the approach by \citet{zhang2021temperature} requires an additional output (i.e., a temperature parameter) and the approach by \citet{oh2018modeling} requires the network to output means and variances of a mixture of Gaussian distributions.
In contrast, we avoid making any assumptions about the training process of the embedding functions, while only needing black-box access to them. 
Furthermore, we provide a theoretical analysis of our method and explore the impact of the representation reliability on the performance of downstream tasks.
We provide a more in-depth discussion about related works in Appendix~\ref{app:summary}.

\section{Background: Uncertainty Quantification in Supervised Learning} 
\label{sec:bg}

We recall a Bayesian-inference view of epistemic uncertainty in supervised learning. Consider predictive models, where each model (e.g., a neural network) $f$ outputs a predictive probability $p(y|\bx, f)$ for an input variable $\bx$. In the Bayesian framework, a prior probability distribution $p(f)$ is first introduced and a posterior distribution is learned given a training dataset $\mathcal{D}$: $p(f | \cD) \propto p(\cD | f) \cdot p(f)$. For a new test point $\bx^*$, its  posterior predictive distribution is obtained by averaging the predictive probabilities over models: 
\begin{equation}
\label{eq::pred_post}
    p(\hat{y}^* \mid \bx^*, \cD) = \int p(\hat{y}^* \mid \bx^*, f) p(f \mid \cD) \dif f.
\end{equation}

Since the posterior distribution does not have an analytical expression for complex neural networks, Monte Carlo approaches could be used to approximate the integral in Equation~\eqref{eq::pred_post}.
One of the most prominent approaches is deep ensembles \citep{lakshminarayanan2017simple}, in which a class of neural networks $\mathcal{F}$ are trained with $M$ different random initialization of the learnable parameters $\{\theta_i\}_{i=1}^M$.
The posterior predictive distribution is then approximated by:
\begin{equation}
p(\hat{y}^* \mid \bx^*, \cD) \approx \frac{1}{M} \sum_{i=1}^{M} \big[ \delta(\hat{y}^* - f_{\theta_i}(\bx^*)) \big].
\end{equation}

Finally, the uncertainty can be assessed by the \emph{inconsistency} of the output across sampled functions, which can be quantified by metrics such as variance\footnote{For a random vector, its variance is defined as the trace of its covariance matrix.} (for regression tasks) or entropy (for classification tasks).
For example, in regression tasks, the uncertainty can be estimated by
\begin{align}
    & \Unc{\bx^* \mid \cD} = \Var{\hat{y}^* \mid \bx^*, \cD} \label{eq:unc_sup} \\ 
    &\approx \Varr{i\sim[M]}{f_{\theta_i}(\bx^*)} 
    = \frac{1}{M^2} \sum_{i < j} \lVert f_{\theta_i}(\bx^*) - f_{\theta_j}(\bx^*) \rVert_2^2 \nonumber,
\end{align}
where $\Varr{i\sim[M]}{\cdot}$ is the population variance over index $i \in \{1, \cdots, M\}$.
When different models disagree on their predictions, it suggests a high level of uncertainty. Conversely, if multiple predictive models give similar outputs, the prediction can be considered certain. 
However, this principle does not hold in self-supervised learning, as representations do not have a ground truth and different pre-trained models can carry different semantic meanings.

\begin{figure*}[t!]
  \centering
  \includegraphics[width=0.95\textwidth]{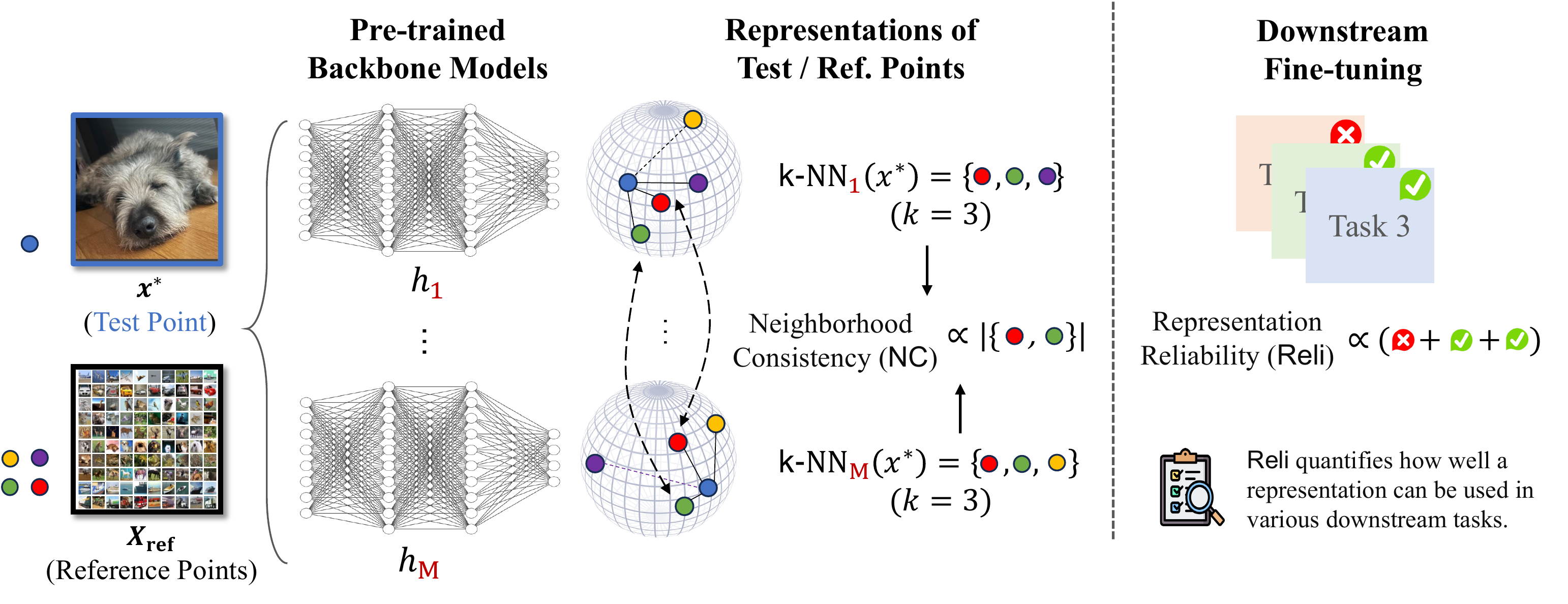}
  \caption{
  Illustration of \emph{representation reliability} ($\mathsf{Reli}$) and \emph{neighborhood consistency} ($\mathsf{NC}$). For a test point $\bx^*$ and a class of pre-trained backbone models $\mathcal{H} = \{h_1, \cdots, h_M\}$, the representation reliability is defined as the average performance of downstream models when using the representations of $\bx^*$ provided by the backbones in $\mathcal{H}$.
  \textbf{Our $\mathsf{NC}$ estimates $\mathsf{Reli}$ without requiring any prior knowledge of the downstream tasks.} It operates by measuring the number of consistent neighbors of $\bx^*$ among reference points across different representation spaces.}
  \label{fig:repr_reli}
\end{figure*}

\section{Quantifying Representation Reliability}

\hao{In this section, we introduce a framework for assessing the reliability of representations assigned by pre-trained models. We discuss limitations of existing UQ frameworks in supervised learning when applied to representation spaces. 
Then we present an ensemble-based method that examines the consistency of neighboring points in the representation space.
Our method effectively captures the representation reliability without the need for a priori knowledge of downstream tasks. 
}

\subsection{Representation Reliability} \label{sec:defin}

We introduce a formal definition of \emph{representation reliability} by examining its impact on various downstream tasks. Intuitively, a reliable representation assigned by a pre-trained model should consistently yield accurate outcomes when the model is adapted to these downstream tasks.

We introduce some notations that will be used in our definition. We define $\mathcal{H}$ as a class of embedding functions trained by a self-supervised learning algorithm (e.g., SimCLR \citep{chen2020simple}) using a pre-trained dataset (e.g., ImageNet \citep{deng2009imagenet}). Each embedding function $h:\mathcal{X} \to \mathcal{Z}$ maps a data point (e.g., an image) to an abstract representation. Here the representation space $\mathcal{Z}$ can be either a real $d$-space $\Reals^d$ or a unit hyper-sphere $\mathcal{S}^{d-1}$.

We consider a collection of downstream tasks (e.g., classifications), denoted as $\mathcal{T}$. Each task is associated with a set of downstream heads (i.e., additional layers added on top of the embedding function) and a population risk function that assesses the performance of each downstream head. For each task $t$, we optimize the risk function to obtain an optimal head $g_{h,t}$. This way, we can eliminate the impact of downstream training processes on our definition. The representation reliability of a new test point $\bx^*$ is measured by the (average) performance of these downstream models on $\bx^*$. A formal definition is provided below.

\begin{defn}
\label{defn::rep^rel}
Let $\mathcal{T}$ be a collection of downstream tasks. For each task $t \in \mathcal{T}$, we take an embedding function $h$ (uniformly at random) from $\mathcal{H}$ and find an optimal downstream head $g_{h,t}$ based on $h$. We define the \emph{representation reliability} for a test point $\bx^* \in \cX$ as:
\begin{equation} \label{eq:rep_reli}
    \mathsf{Reli}(\bx^*; \cH, \mathcal{T}) \defined \frac{1}{|\cT|} \sum_{t \in \cT}{\myPerf{t}}
\end{equation}
where $\Perf{\cdot}$ measures the performance of the predictive model $g_{h,t} \circ h$ on $\bx^*$ for task $t$. %
\end{defn}
For classification downstream tasks with $C$ classes, an example of $\myPerf{t}$ is the negative Brier score, defined as:
\begin{align*}
-\sum_{c=1}^{C} \Big( \frac{1}{|\cH|}\sum_{h \in \mathcal{H}}{g_{h,t} \circ h(\bx^*)}_{[c]} - y_{t[c]}^* \Big)^2.
\end{align*}
Here $y_t^* \in \{0,1\}^C$ represents the label of $\bx^*$ on task $t$. For additional examples of $\Perf{\cdot}$, please refer to Appendix~\ref{app:performance}.

The above definition assumes that the set of downstream tasks (and ground-truth labels) are accessible. 
In practice, this may not always be the case (see Broad Impact and Implication in Section~\ref{sec::intro} for examples). Next, we discuss how to estimate the representation reliability based on the properties of the representation itself, without prior knowledge of the downstream tasks.

\subsection{First Attempt: Representation Consistency} 
\label{sec:consistency}

Our first attempt is directly applying standard supervised-learning techniques (see Section~\ref{sec:bg}) to estimate the representation reliability. Recall that if multiple predictive models give different predictions for the same test point, then it is likely that their predictions are uncertain. One may wonder whether the same idea can be applied to estimate the representation reliability. We present a negative result, showing that even if different embedding functions produce completely different representations, their downstream predictions can still be consistent. Below, we provide an (informal) theorem and defer a more rigorous statement along with its proof to Appendix~\ref{append::sl_count_ex}.

\begin{thm}
\label{thm::sl_count_exam_inf}
For any constant $A$ and a test point $\bx^*$, there exist embedding functions $h_1,\cdots,h_{M} \in \mathcal{H}$ such that $\Varr{i\sim [M]}{h_i(\bx^*)} \geq A$ but $\Varr{{i\sim[M]}}{g_{i,t} \circ h_i(\bx^*)} = 0$ for any downstream task $t$. Here $g_{i,t}$ is an optimal downstream head for $h_i$ under task $t$. 
\end{thm}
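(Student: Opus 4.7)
The plan is to exploit the fact that representation spaces have no canonical frame: any invertible transformation of an embedding function can be ``undone'' by the downstream head, leaving all downstream predictions invariant while allowing the raw representations to be spread arbitrarily far apart. Concretely, I would fix one embedding function $h_1 \in \mathcal{H}$ together with an optimal head $g_{1,t}$ for each task $t \in \cT$, and then construct $h_i \defined T_i \circ h_1$ for $i = 2, \ldots, M$, where each $T_i : \cZ \to \cZ$ is an invertible map. The candidate optimal head for $h_i$ on task $t$ will be taken to be $g_{i,t} \defined g_{1,t} \circ T_i^{-1}$.

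For concreteness, working in the case $\cZ = \Reals^d$, I would take $T_i(\bz) = \bz + c_i \be$ to be a translation along a fixed unit vector $\be$, with scalars $c_1 = 0, c_2, \ldots, c_M$ chosen so that $\tfrac{1}{M}\sum_i (c_i - \bar c)^2 \geq A$ (trivially achievable by spreading the $c_i$ widely). Then $h_i(\bx^*) = h_1(\bx^*) + c_i \be$, so a direct computation gives $\Varr{i\sim[M]}{h_i(\bx^*)} = \tfrac{1}{M}\sum_i (c_i - \bar c)^2 \geq A$, establishing the first claim.

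For the second claim, observe that $g_{i,t} \circ h_i = g_{1,t} \circ T_i^{-1} \circ T_i \circ h_1 = g_{1,t} \circ h_1$ as functions on $\cX$. Hence $g_{i,t}$ attains exactly the same population risk as $g_{1,t}$ on task $t$, and is therefore also optimal for $h_i$. In particular, the predictions $g_{i,t} \circ h_i(\bx^*) = g_{1,t} \circ h_1(\bx^*)$ coincide for every $i$, so $\Varr{i\sim[M]}{g_{i,t} \circ h_i(\bx^*)} = 0$ for every task $t \in \cT$.

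The main obstacle is purely a matter of bookkeeping: the argument implicitly requires that $\mathcal{H}$ be expressive enough to contain the compositions $T_i \circ h_1$ (natural when $\mathcal{H}$ is closed under affine post-processing, as is typical for representations from deep networks) and that the admissible downstream-head class is closed under precomposition with the affine $T_i^{-1}$ (automatic whenever heads may include a linear layer). The rigorous version in the appendix should make these closure assumptions explicit and perhaps handle the $\cZ = \mathcal{S}^{d-1}$ case separately using rotations instead of translations (with the caveat that $A$ is then bounded by the geometry of the sphere); apart from this, no further technical machinery is needed.
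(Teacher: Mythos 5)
Your approach is essentially the same as the paper's: relate the embedding functions by an invertible map of the representation space, pair each with a head that precomposes with the inverse, and observe that the composite predictors coincide while the raw representations spread out. The paper realizes the same idea with an invertible matrix $\bA$ (so $h_1 = \bA \circ h_2$ and $g_{2,t} = g_{1,t}\circ\bA$) and alternates two functions to get $2M$, whereas you use translations $T_i(\bz)=\bz+c_i\be$ to get $M$ distinct functions directly. Two small points are worth flagging. First, your step ``$g_{i,t}$ attains the same population risk as $g_{1,t}$, and is therefore also optimal for $h_i$'' is asserted rather than argued; it is true, but only because the closure of the head class under composition with $T_i$ and $T_i^{-1}$ makes the optimal achievable risk over heads the same for $h_1$ and $h_i$ --- the paper makes this explicit with a short contradiction argument (if a strictly better head $g'$ existed for $h_i$, then $g'\circ T_i$ would beat $g_{1,t}$ for $h_1$). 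You should include that step. Second, your translation construction drives $\Varr{i\sim[M]}{h_i(\bx^*)}$ to arbitrary $A$ only when $\cZ$ is unbounded; the paper's formal statement (Theorem~\ref{thm::sl_count_exam}) correspondingly restricts to $A < (\mathsf{diam}(\cZ)/2)^2$, which your sphere caveat correctly anticipates. Also note the paper's stated closure hypothesis is closure under linear (not affine) post-maps, so if you stick with translations you should state the affine closure assumption explicitly rather than inheriting the paper's; alternatively, replace your $T_i$ with invertible linear maps to match the hypothesis as written.
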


The key insight behind our proof is that an input point's representation is not unique (e.g., rotated spaces are in fact equivalent). In other words, even if different embedding functions assign distinct representations to the same test point, downstream heads built on these embedding functions can also vary, ultimately leading to similar predictions.

\subsection{Proposed Framework: Neighborhood Consistency} \label{sec:nb_consistency}

To address the aforementioned issue, we propose the idea of using an ``anchor'' point to align different representation spaces. The anchor point serves as a bridge that transforms different representation spaces into the same space. The graphical visualization of this idea is illustrated in Figure~\ref{fig:repr_reli} and Figure~\ref{fig:sketch_proof}. We formalize this intuition more rigorously in the following theorem. It states that if a test point has a (reliable) consistent neighboring point across all representation spaces, then uncertainty of its downstream predictions are bounded above.

\begin{thm}
\label{thm::nb_consistency}
For a test point $\bx^*$, suppose that there exists a \textbf{consistent} neighbor $\bx^r$ across all embedding functions $\mathcal{H} = \{h_1, \cdots, h_M\}$, satisfying
\begin{equation} \label{eq:nb}
    \lVert h_i(\bx^r) - h_i(\bx^*)) \rVert_2 \le \epsilon_{nb} ~, ~ \forall i \in [M] . 
\end{equation}
Suppose the downstream heads $g_{i,t}$ are Lipschitz continuous. Then, for any downstream task $t$, the variance of downstream prediction at $\bx^*$ is bounded above by:
\begin{equation} \label{eq:ens_var}
\Varr{i\sim[M]}{g_{i,t} \circ h_i(\bx^*)} \le (\sqrt{2} L_t \epsilon_{nb} + \sigma_{r,t})^2.
\end{equation}
where $\sigma_{r,t}^2=\Varr{i\sim[M]}{g_{i,t} \circ h_i(\bx^r)}$ is the reliability of $\bx^r$ as measured by variance, the function $g_{i,t}$ is the optimal downstream head for task $t$ built upon $h_i$, whose Lipschitz constant is $L_{i, t}$, and $L_t = \max_i L_{i, t}$.\footnote{When spectral normalized weights are employed with 1-Lipschitz continuous activation functions (e.g., identity, ReLU, sigmoid, or softmax), the constant $L_t = 1$.}

\end{thm}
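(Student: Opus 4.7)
The plan is to use the anchor point $\bx^r$ as a pivot in a triangle inequality for standard deviation, so that the variance at $\bx^*$ is controlled by the variance at the reliable neighbor $\bx^r$ plus a small Lipschitz residual. First I would introduce the shorthand $\xi_i \defined g_{i,t} \circ h_i(\bx^*)$ and $\eta_i \defined g_{i,t} \circ h_i(\bx^r)$ for $i \in [M]$, and view both as random vectors under the uniform draw $i \sim [M]$. Combining the Lipschitz bound on $g_{i,t}$ with the neighborhood hypothesis~\eqref{eq:nb} gives the pointwise estimate $\|\xi_i - \eta_i\|_2 \le L_{i,t}\,\epsilon_{nb} \le L_t\,\epsilon_{nb}$ for every $i$, which is the only place the assumptions of the theorem enter.

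Second, I would invoke the triangle inequality for standard deviation (valid for vector-valued random variables when variance is defined as the trace of the covariance matrix, by Cauchy--Schwarz applied to $\mathbb{E}\langle \xi - \bar{\xi},\, \eta - \bar{\eta}\rangle$):
$$\sqrt{\Varr{i\sim[M]}{\xi_i}} \le \sqrt{\Varr{i\sim[M]}{\xi_i - \eta_i}} + \sqrt{\Varr{i\sim[M]}{\eta_i}}.$$
The second summand equals $\sigma_{r,t}$ by definition, so the task reduces to bounding the first.

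Third, for the residual term I would use the pairwise form of the population variance, $\Varr{i\sim[M]}{\xi_i - \eta_i} = \tfrac{1}{M^2}\sum_{i<j}\bigl\|(\xi_i - \eta_i) - (\xi_j - \eta_j)\bigr\|_2^2$, together with the triangle inequality for $\|\cdot\|_2$ and the elementary estimate $(a+b)^2 \le 2(a^2 + b^2)$, and plug in the pointwise Lipschitz bound from the first step. A short count yields $\Varr{i\sim[M]}{\xi_i - \eta_i} \le \tfrac{2(M-1)}{M}\,L_t^2\,\epsilon_{nb}^2 \le 2\, L_t^2\, \epsilon_{nb}^2$, so $\sqrt{\Varr{i\sim[M]}{\xi_i - \eta_i}} \le \sqrt{2}\,L_t\,\epsilon_{nb}$. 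Squaring the assembled inequality recovers exactly the bound in~\eqref{eq:ens_var}.

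I do not foresee a genuine obstacle here: once the pivot $\bx^r$ is chosen, the remainder is a two-line triangle-inequality argument. The only step that demands mild care is justifying the standard-deviation triangle inequality in the vector-valued case, which I would either cite or derive inline from Cauchy--Schwarz. The conceptual force of the theorem lies not in the calculation but in recognizing that a single reliable and consistent neighbor $\bx^r$ suffices to align the otherwise \emph{unaligned} representation spaces at the level of downstream outputs, converting a statement about $M$ different embedding geometries into one that lives in the common output space where variance is well defined.
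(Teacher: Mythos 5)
Your proof is correct and arrives at the same bound by a genuinely different route. The paper's argument works at the level of pairwise differences: it bounds $\lVert f_i(\bx^*)-f_j(\bx^*)\rVert_2$ by the three-term triangle inequality, squares, sums the resulting three terms over $i<j$, and handles the linear cross term $\frac{1}{M^2}\sum_{i<j}\lVert f_i(\bx^r)-f_j(\bx^r)\rVert_2$ with a separate Cauchy--Schwarz step, then recognizes the result as a perfect square. You instead apply Minkowski's inequality at the level of standard deviations of the vector-valued random variables $\xi_i = g_{i,t}\circ h_i(\bx^*)$ and $\eta_i = g_{i,t}\circ h_i(\bx^r)$: since $\Varr{i\sim[M]}{\cdot}^{1/2}$ (with variance defined as the trace of the covariance, per the paper's footnote) is just the $L^2$ norm on centered random vectors over the $M$-point uniform space, $\sqrt{\Varr{i\sim[M]}{\xi_i}} \le \sqrt{\Varr{i\sim[M]}{\xi_i-\eta_i}} + \sqrt{\Varr{i\sim[M]}{\eta_i}}$, and the perfect-square assembly disappears. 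This is cleaner and isolates the conceptual content (a single reliable, consistent anchor suffices to align outputs across ensembles) from the bookkeeping. One further remark: your residual bound $\Varr{i\sim[M]}{\xi_i-\eta_i}\le 2L_t^2\epsilon_{nb}^2$ goes through the pairwise form and $(a+b)^2\le 2(a^2+b^2)$, which is loose; the direct estimate
\begin{equation*}
\Varr{i\sim[M]}{\xi_i-\eta_i} \;\le\; \EEE{i\sim[M]}{\lVert \xi_i-\eta_i\rVert_2^2} \;\le\; L_t^2\,\epsilon_{nb}^2
\end{equation*}
is sharper and would yield the strictly better bound $(L_t\epsilon_{nb}+\sigma_{r,t})^2$. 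The $\sqrt{2}$ in the theorem is therefore an artifact of the bounding strategy (both the paper's and yours) rather than an essential feature, which your decomposition makes visible.
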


See Appendix~\ref{pf::nb_consistency} for proof and extension to the multi-output case. The key takeaway is that if we can find a reference point consistently close to the test point, it can serve as an anchor point that helps align representation spaces with distinct semantic meanings. 
Plus, the reliability of the reference point along with its relative distance to the test point ensures a lower bound on the representation reliability of the test point. \rev{Additionally, if downstream heads are bi-Lipschitz continuous\footnote{Bi-Lipschitz continuity ensures that both the function and its inverse mapping are Lipschitz continuous, a property satisfied by e.g., linear heads.} and the reference points are reliable, the converse holds: if a test point $\bx^*$ does not have any consistent neighbors, it is unreliable. We defer the additional details to Appendix~\ref{app:necessary}.
}

In practice, identifying a reliable reference point is challenging without prior knowledge of the downstream tasks. Instead of searching for a single point as the anchor, we draw a set of reference points, denoted as $\bX_{\text{ref}} = \{\bx^{(l)}\}_{l=1}^n$. We then compute the number of consistent neighboring points within $\bX_{\text{ref}}$ and use it to estimate the representation reliability. The rationale behind this is that a test point with more consistent neighbors is more likely to have a reliable and consistent neighbor.

\paragraph{Our Algorithm.} Given an ensemble of embedding functions $h_1,\cdots, h_M$ and $\bX_{\text{ref}}$, we define the \textbf{Neighborhood Consistency (NC)} of a test point $\bx^*$ as:
\begin{equation} \label{defn::NC}
    \mathsf{NC}_{k}(\bx^*) = \frac{1}{M^2} \sum_{i < j} \Sim{k\text{-NN}_i \big( \bx^* \big), ~ k\text{-NN}_j \big( \bx^*\big) }
\end{equation}
where $k\text{-NN}_i(\bx^*)$ is the index set of $k$-nearest neighbors of $h_i(\bx^*)$ among $\{h_i(\bx) \mid \bx \in \bX_{\text{ref}}\}$ 
and $\Sim{\cdot, \cdot}$ is a measure of similarity between sets (e.g., \emph{Jaccard Similarity}). A further discussion on NC is described in Appendix~\ref{app:alternative}.

\begin{rem}
\rev{The parameter $k$ involves a trade-off between two factors: choosing a neighborhood closer to the test point and increasing the chance of incorporating a more reliable neighbor. As shown in Theorem~\ref{thm::nb_consistency}, the reliability of a test point's representation hinges on whether it has a nearby reliable neighbor. Hence, a large $k$ may result in selecting a reliable neighbor, but it could be far from the test point.
On the other hand, since the upper bound in Theorem~\ref{thm::nb_consistency} holds using any one of the neighbors as the anchor point, the reliability of the test point is bounded using the neighbor with the smallest $\sigma_{r,t}$. 
Consequently, a small $k$ may compromise the reliability of the selected neighboring point.
We present an empirical study about this trade-off in Section~\ref{sec:ablation}.}
\end{rem}

\rev{Finally, our NC requires a set of embedding functions for computing consistent neighbors. We extend this algorithm to evaluate the reliability of a single embedding function in Section~\ref{sec:individual}.}

\begin{table*}[t!] 
\centering
\fontsize{9pt}{9pt}\selectfont
\caption{Comparison of our neighborhood consistency ($\mathsf{NC}_{100}$) with baseline methods in terms of their correlation with the representation reliability.
We use \kendall to measure this correlation.
Downstream tasks are in-distribution tasks where the model is pre-trained and fine-tuned on the same dataset. 
Performance on these tasks is evaluated using either negative predictive entropy or negative Brier score. The highest and second-highest scores are highlighted in bold and underlined, respectively. As shown, our method consistently receives a favorable score compared with baselines.}

\renewcommand{\arraystretch}{1.1}
\begin{tabular}{cc|cccc|cccc}
\toprule
\multirow{3}{*}{\makecell[c]{Pretraining \\ Algorithms}} & \multirow{3}{*}{Method} & \multicolumn{4}{c|}{ResNet-18} & \multicolumn{4}{c}{ResNet-50}  \\
\cline{3-10} 
 & & \multicolumn{2}{c}{\texttt{CIFAR-10}}  & \multicolumn{2}{c|}{\texttt{CIFAR-100}} & \multicolumn{2}{c}{\texttt{CIFAR-10}}  & \multicolumn{2}{c}{\texttt{CIFAR-100}} \\
 & & Entropy  & Brier & Entropy & Brier & Entropy  & Brier & Entropy & Brier \\
\midrule
\multirow{5}{*}{SimCLR} & $\mathsf{NC}_{100}$ & \bf 0.3865 & \bf 0.3262 & \bf 0.3130 & \bf 0.2590 & \bf 0.3786 & \bf 0.3206 & \bf 0.3070 & \bf 0.2516 \\
& $\amone$ & \underline{0.3021} & \underline{0.2544} & 0.0757 & 0.0635 & 0.2750 & 0.2376 & 0.1100 & \underline{0.0919} \\
& $\mathsf{Norm}$ & 0.2907 & 0.2429 & \underline{0.1198} & \underline{0.0649} & \underline{0.2789} & \underline{0.2385} & \underline{0.1311} & 0.0710 \\
& $\mathsf{LL}$ & 0.1797 & 0.1356 & -0.0989 & -0.1067 & 0.1700 & 0.1348 & -0.0831 & -0.0959 \\
& $\mathsf{FV}$ & -0.0476 & -0.0458 & -0.1891 & -0.1710 & -0.0402 & -0.0393 & -0.1984 & -0.1870 \\
\midrule
\multirow{5}{*}{BYOL} & $\mathsf{NC}_{100}$ & \bf 0.2749 & \bf 0.1826 & \bf 0.2354 & \bf 0.1753 & \bf 0.3524 & \bf 0.2131 & \bf 0.3233 & \bf 0.1733 \\
& $\amone$ & 0.0322 & 0.0385 & -0.1477 & -0.0709 & -0.1858 & -0.1442 & -0.3124 & -0.1598 \\
& $\mathsf{Norm}$ & 0.0725 & 0.0235 & \underline{0.1659} & \underline{0.1231} & \underline{0.1990} & \underline{0.1184} & \underline{0.1990} & \underline{0.1476} \\
& $\mathsf{LL}$ & -0.0539 & -0.0196 & -0.2637 & -0.1650 & -0.2542 & -0.1784 & -0.3777 & -0.2018 \\
& $\mathsf{FV}$ & \underline{0.1439} & \underline{0.1003} & -0.1727 & -0.1284 & 0.0676 & 0.0524 & -0.0213 & -0.0625 \\
\midrule
\multirow{5}{*}{MoCo} & $\mathsf{NC}_{100}$ & \bf 0.4157 & \bf 0.3653 & \bf 0.3632 & \bf 0.3128 & 0.3757 & 0.3236 & \bf 0.2667 & \bf 0.2225 \\
& $\amone$ & \underline{0.3523} & \underline{0.3093} & \underline{0.2246} & \underline{0.1925} & \bf 0.3831 & \bf 0.3397 & \underline{0.2453} & \underline{0.2183} \\
& $\mathsf{Norm}$ & 0.3238 & 0.2708 & 0.2186 & 0.1432 & \underline{0.3798} & \underline{0.3294} & 0.2172 & 0.1659 \\
& $\mathsf{LL}$ & 0.2201 & 0.1744 & -0.0334 & -0.0736 & 0.2518 & 0.2125 & -0.0156 & -0.0343 \\
& $\mathsf{FV}$ & 0.0519 & 0.0321 & -0.1420 & -0.1578 & 0.1228 & 0.1122 & -0.1081 & -0.1100 \\
\bottomrule
\end{tabular}
\label{tab:comparison}
\end{table*}

\section{Numerical Experiments}
\label{sec::exp}

\subsection{Experiment Setup} 
\label{sec:setup}

\paragraph{Embedding Function.} We use SimCLR \citep{chen2020simple}, Bootstrap Your Own Latent (BYOL) \citep{grill2020bootstrap}, and Momentum Contrast (MoCo) \citep{he2020momentum} to pre-train ResNet-18 and ResNet-50 models \citep{he2016deep} as the embedding functions, using a single NVIDIA V100 GPU. During the pre-training stage, we do not use any class label information. 
To construct the ensemble, we train a total of $M=10$ embedding functions using each pre-training algorithm and dataset with different random seed, following \citet{lakshminarayanan2017simple}.
The impact of $M$ to our experiment is studied in Section~\ref{sec:ablation}.

\paragraph{Baselines.} We compare our proposed method --- $\mathsf{NC}_k$ in Equation~\eqref{defn::NC} --- against the following widely-recognized baseline methods:
\begin{itemize}[leftmargin=2em]
    \item $\amk$ \citep{tack2020csi, mirzae2022fake}: the average of the $k$ minimum distances from the test point to the reference data in the representation space. A lower value of $\amk$ indicates higher reliability.
    
    \item $\mathsf{Norm}$ \citep{tack2020csi}: the $L_2$ norm of the representation $\lVert h(\bx^*) \rVert_2$. A higher value of $\mathsf{Norm}$ indicates higher reliability.
    
    \item $\mathsf{LL}$ \citep{ardeshir2022uncertainty}: the log-likelihood of the Gaussian ($\mathcal{R}^{d}$) or von Mises–Fisher \citep{banerjee2005clustering} ($\mathcal{S}^{d-1}$) mixture models on the test point when fitted on the reference data. A higher value of $\mathsf{LL}$ indicates higher reliability.
    
    \item Feature Variance ($\mathsf{FV}$): representation consistency measured by $\Varr{i\sim [M]}{h_i(\bx^*)}$, as described in Theorem~\ref{thm::sl_count_exam_inf}.
    \rev{$\mathsf{FV}$ is extended from UQ in supervised learning. Specifically, the variance of neural networks’ predicted scores is often used to measure epistemic uncertainty \citep{kendall2017uncertainties, lakshminarayanan2017simple, ritter2018scalable}. Here we apply this measure to examine latent representation spaces.}
    A lower value of $\mathsf{FV}$ indicates higher reliability.
    
\end{itemize}

All baselines, except $\mathsf{FV}$, are based on a single embedding function. For a fair comparison, we consider the (point-wise) ensemble average of each score over different embedding functions for $\amk$, $\mathsf{Norm}$, and $\mathsf{LL}$.

\paragraph{Hyperparameters.} To reduce the computational cost, we randomly select $n = 5,000$ pre-training data as the reference dataset $\bX_{\text{ref}}$.
We repeat our experiments $5$ times with distinct random seeds for choosing reference data and report the average evaluation scores. Since the standard deviations of our scores are on average below 1\%, we defer the experimental results with error bars to Appendix. 
We choose $k=100$ for $\mathsf{NC}_k$ and $k=1$ for $\amk$ (see Section~\ref{sec:ablation} for an ablation study about the choice of $k$). 

For our method and $\amk$, we test both cosine distance and Euclidean distance as options for distance metric. In a similar manner, $\mathsf{LL}$ and $\mathsf{FV}$ are evaluated using both unnormalized $h(\bx) \in \Reals^d$ and normalized representations $h(\bx) / \lVert h(\bx) \rVert_2 \in \mathcal{S}^{d-1}$. 
\yj{
A discussion on the implications of these selections is provided in Section~\ref{sec:ablation}. Given the space limits, the results using normalized representations are reported in the main manuscript, while those using unnormalized representations are detailed in Appendix~\ref{app:unnormalized}.
}

\paragraph{Evaluation Protocol.} 
\hao{To evaluate the effectiveness of our method (i.e., $\mathsf{NC}_{100}$) in capturing the representation reliability (Definition~\ref{defn::rep^rel}) and to compare it against baselines\footnote{We use the negative scores of $\amk$ and $\mathsf{FV}$ since their lower values indicate higher reliability.}, we use \kendall to measure the correlation between each method and the representation reliability ($\mathsf{Reli}$).
We compute the downstream performance ($\mathsf{Perf}$) on each test point by using either 1) the negative predictive entropy, capturing downstream uncertainty; or 2) the negative Brier score, reflecting predictive accuracy.
}
The computation details are provided in Appendix~\ref{app:performance}.

We focus on downstream classification tasks. For each task, we freeze the pre-trained model and fine-tune the linear heads.
To leverage the multi-class labels and minimize the influence from the downstream training processes, we break down each $C$-class classification into a set of one-vs-one (OVO) binary classification tasks. 
As a result, the total number of downstream tasks is $|\cT| = C (C - 1) / 2$, with each data point being evaluated in $(C - 1)$ tasks.
Finally, we average the performance across all OVO tasks to compute the representation reliability.
We defer the results for the multi-class downstream tasks to Appendix~\ref{app:multi}.

\subsection{Main Results}

\subsubsection{Correlation to ID Downstream Performance}

\yj{Recall that the representation reliability relies on downstream tasks. 
Here we focus on in-distribution (ID) tasks where embedding functions are pre-trained on \texttt{CIFAR-10} or \texttt{CIFAR-100} datasets \citep{krizhevsky2009learning} without using any labeling information. Subsequently, they are fine-tuned with a linear head on the same dataset.
}

Table~\ref{tab:comparison} shows that our $\mathsf{NC}_{100}$ demonstrates a higher correlation with the representation reliability compared with baselines. Notably, regardless with the choice of the pre-training configurations or downstream tasks, our method always has a positive correlation whereas baselines occasionally demonstrates very low or even negative correlation. 
Moreover, $\mathsf{FV}$ presents low correlation and this observation aligns with the theoretical results in Section~\ref{sec:consistency}.

\rev{We offer insight into why $\mathsf{NC}_k$ has a more favorable performance compared with baselines. Imagine a test point (a fox image) has unreliable representations. It sits close to a reference point A (a dog image) and far from another point B (a cat image) in one representation space, while the opposite holds true in another representation space. In this case, $\mathsf{NC}_k$ would assign a low score due to this inconsistency. However, $\mathsf{Dist_k}$, $\mathsf{LL}$, and $\mathsf{Norm}$ suggest ``reliable'' since they rely on the relative distance of the test point to the closest reference point (or to the origin). $\mathsf{FV}$ may also indicate ``reliable'' if the representations of the test point remain consistent despite falling into clusters of dog or cat images in different representation spaces.
None of these baselines align different representation spaces before computing these relative distances. 
}

\begin{table*}[t!] 
\centering
\fontsize{9pt}{9pt}\selectfont
\caption{Comparison of our neighborhood consistency ($\mathsf{NC}_{100}$) with baseline methods in terms of their correlation with the representation reliability. Here the downstream tasks are transfer learning tasks, where embedding functions are pre-trained on $\texttt{TinyImagenet}$ and fine-tuned on $\texttt{CIFAR-10}$, $\texttt{CIFAR-100}$, and $\texttt{STL-10}$.}
\setlength{\tabcolsep}{2.5pt}
\renewcommand{\arraystretch}{1.1}
\begin{tabular}{cc|cccccc|cccccc}
\toprule

\multirow{4}{*}{\makecell[c]{Pretraining \\ Algorithms}} & \multirow{4}{*}{Method} & \multicolumn{6}{c|}{ResNet-18} & \multicolumn{6}{c}{ResNet-50}  \\  \cline{3-14} 
 & & \multicolumn{6}{c|}{$\texttt{TinyImagenet} \rightarrow$} & \multicolumn{6}{c}{$\texttt{TinyImagenet} \rightarrow$} \\
  & & \multicolumn{2}{c}{\texttt{CIFAR-10}}  & \multicolumn{2}{c}{\texttt{CIFAR-100}} & \multicolumn{2}{c|}{\texttt{STL-10}} & \multicolumn{2}{c}{\texttt{CIFAR-10}}  & \multicolumn{2}{c}{\texttt{CIFAR-100}} & \multicolumn{2}{c}{\texttt{STL-10}} \\
  & & Entropy  & Brier & Entropy & Brier & Entropy  & Brier & Entropy & Brier & Entropy  & Brier & Entropy  & Brier \\
\midrule
\multirow{5}{*}{SimCLR} & $\mathsf{NC}_{100}$ & \underline{0.1729} & \underline{0.1292} & \bf 0.1680 & \bf 0.1343 & \bf 0.2176 & \bf 0.1513 & \underline{0.1224} & \underline{0.0804} & \bf 0.1467 & \bf 0.1194 & \bf 0.1866 & \bf 0.1169 \\
& $\amone$ & 0.1311 & 0.0933 & 0.0138 & -0.0251 & 0.1098 & 0.0520 & 0.0894 & 0.0531 & -0.0246 & -0.0627 & 0.0340 & -0.0167 \\
& $\mathsf{Norm}$ & \bf 0.2124 & \bf 0.1642 & \underline{0.1220} & \underline{0.0507} & \underline{0.1641} & \underline{0.0852} & \bf 0.1549 & \bf 0.1098 & \underline{0.0895} & \underline{0.0177} & \underline{0.0608} & \underline{-0.0056} \\
& $\mathsf{LL}$ & 0.0369 & 0.0168 & -0.0991 & -0.1269 & 0.0205 & -0.0180 & 0.0225 & 0.0027 & -0.1192 & -0.1443 & -0.0149 & -0.0495 \\
& $\mathsf{FV}$ & -0.0919 & -0.0779 & -0.1660 & -0.1668 & -0.1872 & -0.1593 & -0.1078 & -0.0878 & -0.1867 & -0.1840 & -0.2178 & -0.1829 \\
\midrule
\multirow{5}{*}{BYOL} & $\mathsf{NC}_{100}$ & \bf 0.2557 & \bf 0.1431 & \bf 0.2352 & \bf 0.1525 & \underline{0.0374} & \underline{0.0232} & \bf 0.2561 & \bf 0.1258 & \bf 0.2600 & \bf 0.1005 & \bf 0.1410 & 0.0123 \\
& $\amone$ & -0.1458 & -0.0923 & -0.2716 & -0.1525 & -0.2893 & -0.2178 & -0.1885 & -0.1274 & -0.3075 & -0.1679 & -0.2371 & -0.1925 \\
& $\mathsf{Norm}$ & \underline{0.1483} & \underline{0.0749} & \underline{0.1879} & \underline{0.1315} & \bf 0.1021 & \bf 0.0772 & \underline{0.1620} & \underline{0.0918} & \underline{0.1082} & \underline{0.0816} & \underline{0.0963} & \bf 0.0380 \\
& $\mathsf{LL}$ & -0.2106 & -0.1248 & -0.3418 & -0.2006 & -0.3319 & -0.2472 & -0.3130 & -0.1763 & -0.3671 & -0.1797 & -0.3235 & -0.2002 \\
& $\mathsf{FV}$ & -0.1386 & -0.0713 & -0.1874 & -0.1202 & -0.1171 & -0.0790 & -0.1668 & -0.0737 & -0.1441 & -0.0705 & -0.0584 & \underline{0.0205} \\
\midrule
\multirow{5}{*}{MoCo} & $\mathsf{NC}_{100}$ & \underline{0.1880} & \underline{0.1313} & \bf 0.1797 & \bf 0.1446 & \bf 0.2263 & \bf 0.1463 & \underline{0.1927} & \underline{0.1300} & \bf 0.1962 & \bf 0.1560 & \underline{0.2170} & \underline{0.1433} \\
& $\amone$ & 0.1213 & 0.0718 & 0.0311 & -0.0143 & 0.0993 & 0.0223 & 0.1751 & 0.1109 & 0.0759 & 0.0257 & 0.1696 & 0.0710 \\
& $\mathsf{Norm}$ & \bf 0.1930 & \bf 0.1355 & \underline{0.1321} & \underline{0.0531} & \underline{0.1327} & \underline{0.0443} & \bf 0.2503 & \bf 0.1803 & \underline{0.1838} & \underline{0.1034} & \bf 0.2524 & \bf 0.1524 \\
& $\mathsf{LL}$ & 0.0158 & -0.0082 & -0.1046 & -0.1386 & -0.0083 & -0.0577 & 0.0458 & 0.0145 & -0.0916 & -0.1239 & 0.0133 & -0.0502 \\
& $\mathsf{FV}$ & -0.0793 & -0.0701 & -0.1499 & -0.1658 & -0.1615 & -0.1577 & -0.0560 & -0.0400 & -0.1510 & -0.1578 & -0.1735 & -0.1577 \\
\bottomrule
\end{tabular}
\label{tab:transfer}
\end{table*}
\subsubsection{Correlation to Transfer Learning Performance}

We conduct experiments with out-of-distribution tasks, particularly focusing on transfer learning tasks. We use the \texttt{TinyImagenet} dataset \citep{le2015tiny} as the source dataset for pre-training the embedding functions. Subsequently, we fine-tune and evaluate the embedding functions on three target datasets: \texttt{CIFAR-10}, \texttt{CIFAR-100}, and \texttt{STL-10} \citep{coates2011analysis}.
The correlation between each method and the representation reliability is reported in Table~\ref{tab:transfer}. 
As observed, NC consistently captures the representation reliability across a diverse range of settings.

\rev{We observe that $\mathsf{Norm}$ achieves comparable performance with NC. One possible explanation is that several factors (e.g., properties of embedding functions and characteristics of test points) influence the reliability of a test point's representation. In the context of transfer learning, OOD test points are more likely to be unreliable. The baseline we selected, $\mathsf{Norm}$, is good at identifying OOD samples \citep{tack2020csi}, thereby enabling it to capture representation reliability to some extent.
}

\begin{table*}[t!] 
\centering
\fontsize{9pt}{9pt}\selectfont
\caption{Comparison of our $\mathsf{NC}_{100}$ with baselines in selecting reliable backbones among those pre-trained by three algorithms (SimCLR, BYOL, and MOCO) on the $\texttt{TinyImagenet}$ dataset. For each test point within the $\texttt{CIFAR-10}$, $\texttt{CIFAR-100}$, and $\texttt{STL-10}$ datasets, we calculate \kendall to evaluate the correlation between the rankings given by each method and the rankings given by the actual downstream performance.}

\setlength{\tabcolsep}{4pt}
\renewcommand{\arraystretch}{1.1}
\begin{tabular}{c|cccccc|cccccc}
\toprule
\multirow{4}{*}{Method} & \multicolumn{6}{c|}{ResNet-18} & \multicolumn{6}{c}{ResNet-50}  \\  \cline{2-13} 
 & \multicolumn{6}{c|}{$\texttt{TinyImagenet} \rightarrow$} & \multicolumn{6}{c}{$\texttt{TinyImagenet} \rightarrow$} \\
 & \multicolumn{2}{c}{\texttt{CIFAR-10}}  & \multicolumn{2}{c}{\texttt{CIFAR-100}} & \multicolumn{2}{c|}{\texttt{STL-10}} & \multicolumn{2}{c}{\texttt{CIFAR-10}}  & \multicolumn{2}{c}{\texttt{CIFAR-100}} & \multicolumn{2}{c}{\texttt{STL-10}} \\
 & Entropy  & Brier & Entropy & Brier & Entropy  & Brier & Entropy & Brier & Entropy  & Brier & Entropy  & Brier \\
\midrule
$\mathsf{NC}_{100}$ & \underline{0.4841} & \underline{0.4625} & \underline{0.4211} & \underline{0.4199} & \underline{0.2342} & \underline{0.3073} & \bf 0.5051 & \bf 0.5089 & \bf 0.4865 & \bf 0.4912 & \bf 0.4512 & \bf 0.4385 \\
$\amone$ & -0.3328 & -0.3621 & -0.2863 & -0.3364 & -0.0457 & -0.1729 & -0.3552 & -0.4181 & -0.3196 & -0.3869 & -0.2162 & -0.2719 \\
$\mathsf{Norm}$ & -0.3199 & -0.3543 & -0.2619 & -0.3198 & -0.0342 & -0.1654 & -0.3543 & -0.4175 & -0.3172 & -0.3851 & -0.2154 & -0.2712 \\
$\mathsf{LL}$ & -0.3259 & -0.3575 & -0.2776 & -0.3296 & -0.0430 & -0.1692 & -0.3615 & -0.4220 & -0.3333 & -0.3963 & -0.2228 & -0.2764 \\
$\mathsf{FV}$ & \bf 0.8988 & \bf 0.7348 & \bf 0.8321 & \bf 0.6624 & \bf 0.7027 & \bf 0.6186 & \underline{0.3568} & \underline{0.2891} & \underline{0.2207} & \underline{0.1468} & \underline{0.2738} & \underline{0.2130} \\
\bottomrule
\end{tabular}
\label{tab:select}
\end{table*}

\subsection{Use Case: Rank Pre-trained models}

In practice, there are typically several off-the-shelf pre-training models available for downstream tasks. These models may vary in architecture and are trained using different learning paradigms, making it challenging to decide which one to use. We apply our $\mathsf{NC}_{100}$ to aid in this selection process by ranking these models based on their average reliability scores. We extend the previous experiments on transfer learning scenarios as follows. For each data point, we rank the three pre-training models (SimCLR, BYOL, and MoCo)  using $\mathsf{NC}_{100}$ and baselines, respectively. Then we compute the correlation between these rankings and the actual ranking (based on average downstream tasks performance) by computing  \kendall score between the two rankings. We present the experimental results in Table~\ref{tab:select}.

\rev{Our NC demonstrates the second-best performance for ResNet-18 and the best performance for ResNet-50. In contrast, the baselines ($\amone$, $\mathsf{Norm}$, and $\mathsf{LL}$) exhibit negative scores when ranking embedding functions. The only exception is $\mathsf{FV}$ which achieves a comparable performance with NC, while demonstrating a low (or even negative) correlation in the previous experiments (Tables~\ref{tab:comparison} and \ref{tab:transfer}). 
One possible interpretation is that the primary issue with $\mathsf{FV}$ lies in its failure to align different representation spaces before comparing them. When ranking different embedding functions, the degrees of misalignment between their ensembles should roughly be of the same order, as these embedding functions share the same architecture. Consequently, this error term would cancel out when ranking these embedding functions.
}

\subsection{Quantifying the Reliability of Individual Embedding Functions} \label{sec:individual}
\hao{Our $\mathsf{NC}$ requires a set $\mathcal{H}$ of embedding functions, comparing the consistent neighbors of a test point across the representation spaces generated by these functions. When $\mathcal{H}$ has a single function $h$, we can still use our $\mathsf{NC}$ by applying the same learning algorithm (with varying random seeds) and data used to train $h$, yielding additional embedding functions to augment $\mathcal{H}$. The rationale behind this is that these embedding functions will possess similar inductive biases and generate representation spaces with comparable reliability. 
We validate this intuition in Table~\ref{tab:single} and Table~\ref{tab:trasnfer_single} in Appendix~\ref{app:suppl}. The results demonstrate that we can effectively predict the reliability of individual embedding functions using this approach.
}

\subsection{Ablation Studies} \label{sec:ablation}
\paragraph{Robustness to the Choice of Distance Metric.} 
We explore how various distance metrics within the representation space affect both our method and the baseline approaches. 
While Euclidean distance is a natural choice for the representations in $\mathbb{R}^d$, we also investigate cosine distance. This choice is motivated by the widespread use of cosine similarity in self-supervised algorithms, such as SimCLR, within their loss functions.

\hao{
We present the results in Table~\ref{tab:comparison_unnormalized} and \ref{tab:transfer_unnormalized} in Appendix~\ref{app:unnormalized}. Our key observation is: the baselines, including $\amone$ and $\mathsf{LL}$, are sensitive to the choice of distance metric and may even exhibit negative correlations with the representation reliability. In contrast, $\mathsf{NC}_{100}$ consistently demonstrates a positive correlation and ranks within the top $2$ among all baseline methods, regardless of the distance metric chosen.
}

\paragraph{Performance with a Smaller Ensemble Size ($M$).}
\yj{%
We conduct an ablation study to investigate the impact of ensemble size $M$ on estimating the representation reliability. The results in Figure~\ref{fig:n_ens18} and ~\ref{fig:n_ens50} and Appendix~\ref{app:suppl} show that increasing ensemble size improves the correlation scores. Nonetheless, our method generally outperforms baseline approaches, even with a small ensemble size of $M = 2$.
Exploring cost-effective ensemble construction methods, such as low-rank approximations \citep{wen2020batchensemble} or stochastic weight averaging \citep{izmailov2018averaging, maddox2019simple}, can be a promising future direction.}

\paragraph{Trade-off on the Number of Neighbors ($k$).}
As discussed in Section~\ref{sec:nb_consistency}, the choice of $k$ in Equation~\eqref{defn::NC} leads to a trade-off between having more consistent neighbors and preserving the overall reliability of those neighbors. In order to explore this trade-off, we conduct experiments with different values of $k \in \{1, 2, 5, 10, 20, 50, 100, 200, 500, 1000\}$. The correlation between our method and the representation reliability is illustrated in Figure~\ref{fig:k18} and \ref{fig:k50} and Appendix~\ref{app:suppl}: it initially increases and then decreases as expected. We observe that the optimal performance could be achieved with $k$ around 100 (i.e., 2\% of $|\bX_{\text{ref}}|=5000$) for $\mathsf{NC}_k$ across different pre-training algorithms, model architectures, and downstream data.

\subsection{Key Observations \& Takeaways}
In summary, our main findings from the experiments are:
\begin{itemize}[leftmargin=1.0em]
    \item Our proposed method $\mathsf{NC}_{k}$ effectively captures the representation reliability and can help compare the reliability of different pre-trained models.
    
    \item More importantly, contrary to the baselines, $\mathsf{NC}_{k}$ \emph{consistently} exhibits a positive correlation with the representation reliability across all different settings. Although the baselines may occasionally surpass $\mathsf{NC}_{k}$, their performance fluctuates significantly across different settings and sometimes becomes even negative, introducing a risk when used to assess reliability in safety-critical settings.
\end{itemize}

\section{Final Remarks and Limitations}

Self-supervised learning is increasingly used for training general-purpose embedding functions that can be adapted to various downstream tasks. In this paper, we presented a systematic study to evaluate the quality of representations assigned by the embedding functions. We introduced a mathematical definition of representation reliability, demonstrated that existing UQ frameworks in supervised learning cannot be directly applied to estimate uncertainty in representation spaces, derived an estimate for the representation reliability, and validated our estimate through extensive numerical experiments.

There is a crucial need for future research to investigate and ensure the responsibility and trustworthiness of pre-trained self-supervised models. For example, representations should be interpretable and not compromise private information. Moreover, the embedding functions should exhibit robustness against adversarial attacks and incorporate a notion of uncertainty, in addition to their abstract representations. This work takes an initial step towards understanding the uncertainty of the representations. In the case where a downstream model fails to deliver a desirable output for a test point, the representation reliability can provide valuable insight into whether the mistake is due to unreliable representations or downstream heads.

There are several future directions that are worth further exploration. For example, our current method for estimating the representation reliability uses a set of embedding functions to compute neighborhood consistency. It would be interesting to investigate whether our approach can be expanded to avoid the need for training multiple embedding functions. This could potentially be achieved through techniques such as MC dropout or adding random noise to the neural network parameters in order to perturb them slightly. Additionally, while we currently assess the representation reliability through downstream prediction tasks, it would be valuable to investigate the extension of our definition to cover a broader range of downstream tasks.

\begin{figure}[t] 
    \centering
    \subfigure[]
    {
        \label{fig:n_ens18}
        \includegraphics[width=0.47\columnwidth]{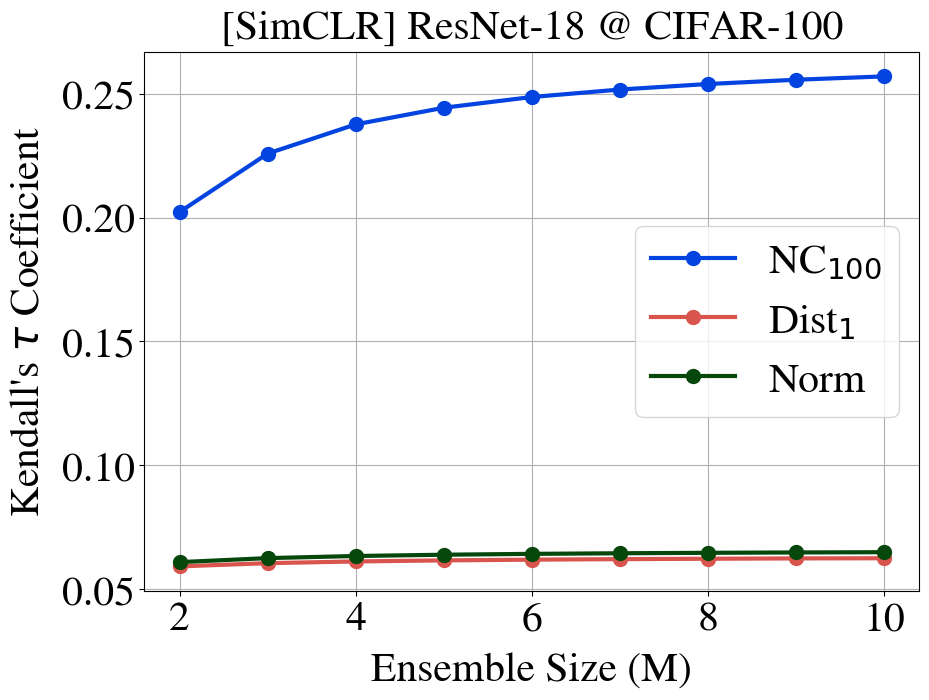}
    }
    \subfigure[]
    {
        \label{fig:n_ens50}
        \includegraphics[width=0.47\columnwidth]{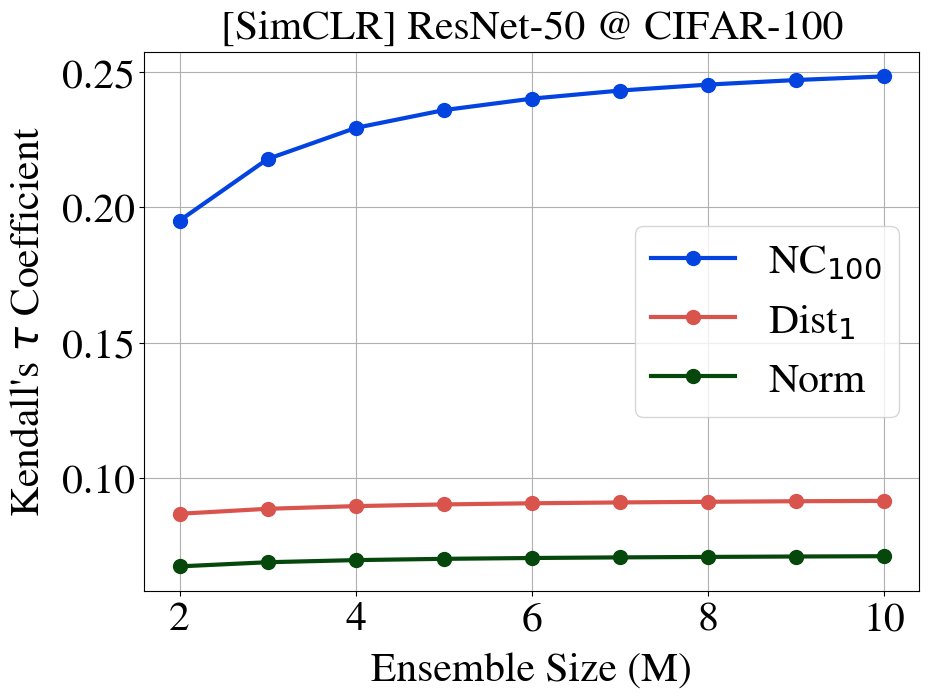}
    }
    \subfigure[]
    {   
        \label{fig:k18}
        \includegraphics[width=0.47\columnwidth]{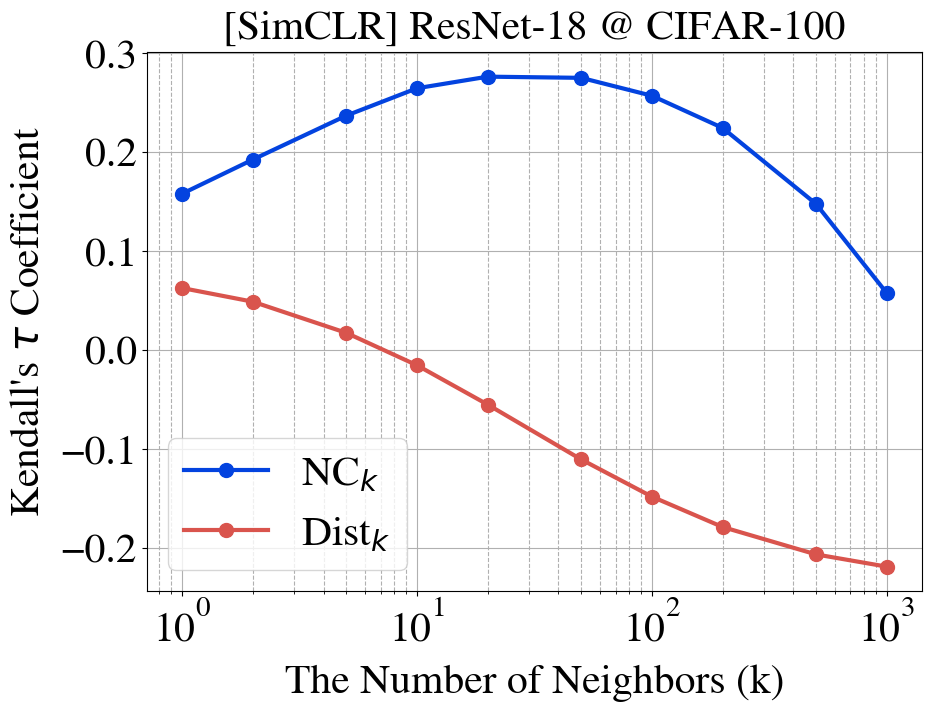}
    }
    \subfigure[]
    {   
        \label{fig:k50}
        \includegraphics[width=0.47\columnwidth]{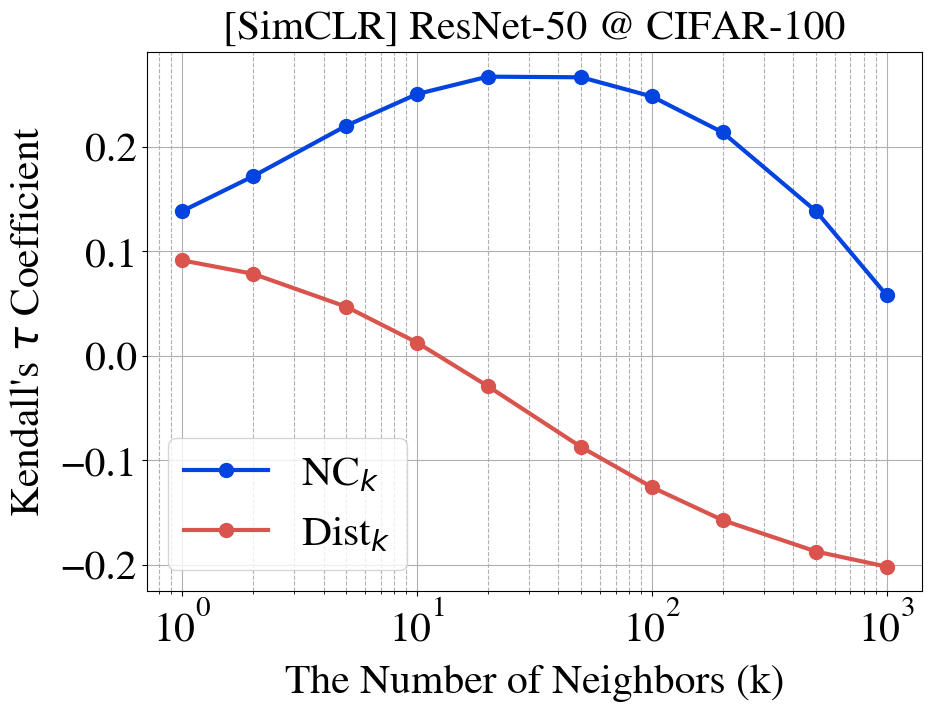}
    }
    \caption{Ablation studies on the ensemble size ($M$) and the number of neighbors ($k$) for $\mathsf{NC}_k$ (ours) and baselines. Brier score is used for the downstream performance metric. The comprehensive results can be found in Appendix~\ref{app:suppl}.}
\end{figure}

\begin{acknowledgements}
The authors would like to thank Dr. Akash Srivastava for insightful discussions.
This work was supported in part by the MIT-IBM Watson AI Lab, MathWorks, and Amazon. The authors also acknowledge the MIT SuperCloud and Lincoln Laboratory Supercomputing Center for providing computing resources that have contributed to the results reported within this paper. %
\end{acknowledgements}

\bibliography{references}

\newpage

\onecolumn

\appendix

\section{Theoretical Results and Omitted Proofs}

\subsection{Proof of Theorem~\ref{thm::sl_count_exam_inf}}
\label{append::sl_count_ex}

We present a more rigorous statement of Theorem~\ref{thm::sl_count_exam_inf} along with its proof.
\begin{thm}
\label{thm::sl_count_exam}
Consider downstream tasks as prediction tasks, with the class of downstream heads closed under linear transformations (i.e., if $g$ belongs to this class, $g\circ \bA$ also belongs to it for any matrix $\bA$). Let $\mathsf{diam}(\mathcal{Z}) = \sup_{\bz,\bz'\in\mathcal{Z}} \|\bz-\bz'\|_2$ be the diameter of the representation space. Then for any $A < (\mathsf{diam}(\mathcal{Z})/2)^2$ and a test point $\bx^*$, there exist embedding functions $h_1,\cdots,h_{2M} \in \mathcal{H}$ such that $\Varr{i\sim [2M]}{h_i(\bx^*)} \geq A$ but $\Varr{{i\sim[2M]}}{g_{i,t} \circ h_i(\bx^*)} = 0$ for any downstream task $t$, where $g_{i,t}$ is an optimal downstream head for $h_i$ under task $t$. 
\end{thm}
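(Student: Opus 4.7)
The plan is to construct $h_1, \ldots, h_{2M}$ as linear (or orthogonal, in the spherical case) reparametrizations of a single base embedding, chosen so that the image of $\bx^*$ under these reparametrizations is spread out across $\mathcal{Z}$, while the optimality of the downstream head ``unwinds'' each reparametrization and thus yields identical downstream predictions. Concretely, I would fix any $h\in\mathcal{H}$ (with $h(\bx^*)\neq 0$, which we may assume without loss of generality by perturbing if necessary), and then exhibit vectors $\bv_1,\ldots,\bv_{2M}\in\mathcal{Z}$ with empirical variance exceeding $A$, together with invertible linear maps $\bA_i$ satisfying $\bA_i\, h(\bx^*)=\bv_i$. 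Setting $h_i \defined \bA_i\circ h$ then gives $h_i(\bx^*)=\bv_i$, so $\Varr{i\sim[2M]}{h_i(\bx^*)}\ge A$ by construction.

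For the choice of $\{\bv_i\}$, I would distinguish the two representation spaces. When $\mathcal{Z}=\Reals^d$, the diameter is infinite, so any prescribed magnitude works; picking $M$ antipodal pairs $\{\pm r\bu\}$ of arbitrary norm $r$ makes the variance $r^2$, which can exceed any given $A$. When $\mathcal{Z}=\mathcal{S}^{d-1}$, I would distribute the $\bv_i$ symmetrically on the sphere (e.g., antipodal pairs centered at the origin) so that their mean is $0$ and their variance is $1=(\mathsf{diam}(\mathcal{Z})/2)^2$; any strict lower bound $A<1$ can then be achieved by slightly perturbing these configurations. Here $\bA_i$ is chosen to be an orthogonal map sending $h(\bx^*)$ to $\bv_i$, which preserves $\mathcal{S}^{d-1}$. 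In both cases I assume $\mathcal{H}$ is rich enough to contain these $h_i$ (i.e., closed under post-composition with linear/orthogonal maps); this is the mild structural hypothesis that makes the counterexample meaningful.

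The core of the argument is then the downstream invariance. Fix any downstream task $t$, and let $g_t^\star$ be an optimal head for the base embedding $h$. By the assumed closure of the downstream head class under linear transformations, $g_{i,t}\defined g_t^\star\circ \bA_i^{-1}$ is a valid downstream head for $h_i$; moreover, since the map $g\mapsto g\circ \bA_i^{-1}$ is a bijection of the head class onto itself, the population risk satisfies
\begin{equation*}
\inf_{g}\mathrm{Risk}_t(g\circ h_i)=\inf_g\mathrm{Risk}_t\bigl(g\circ \bA_i^{-1}\circ \bA_i\circ h\bigr)=\inf_{g'}\mathrm{Risk}_t(g'\circ h),
\end{equation*}
so $g_{i,t}$ attains the infimum and is optimal. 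Evaluating at $\bx^*$ gives $g_{i,t}\circ h_i(\bx^*)=g_t^\star\circ \bA_i^{-1}\circ \bA_i\circ h(\bx^*)=g_t^\star(h(\bx^*))$, which does not depend on $i$. Hence $\Varr{i\sim[2M]}{g_{i,t}\circ h_i(\bx^*)}=0$, completing the construction.

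The main obstacle I anticipate is justifying that the $g_{i,t}$ defined above is genuinely an optimal head (not merely a valid one), which is precisely what the closure hypothesis on the downstream head class buys us; without that hypothesis the conclusion can fail, since a smaller class might prefer a different $g_{i,t}$ that breaks the alignment trick. A secondary but minor technicality is handling edge cases in the spherical setting (ensuring $A<1$ strictly, and that orthogonal transformations lie in whatever ``embedding function class'' $\mathcal{H}$ is implicitly assumed), which I would address by either stating the closure assumption explicitly or by passing to a limit of near-antipodal configurations.
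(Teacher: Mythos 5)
Your proof is correct and follows essentially the same approach as the paper's: post-compose a fixed base embedding with invertible linear (or orthogonal) maps $\bA_i$ so that the representations of $\bx^*$ are spread across $\mathcal{Z}$, then use closure of the downstream head class under linear transformations to show that $g \mapsto g\circ\bA_i^{-1}$ is a risk-preserving bijection on heads, so the optimal heads unwind each $\bA_i$ and give identical downstream predictions at $\bx^*$. The paper instantiates this with just two target points $\bz_1,\bz_2$ near-antipodal in $\mathcal{Z}$ repeated $M$ times each, and proves optimality transfer by contradiction rather than by your explicit bijection-of-the-infimum argument; you additionally tidy up the spherical case by insisting on orthogonal $\bA_i$ (so that $h_i$ actually maps into $\mathcal{S}^{d-1}$) and note the need for $h(\bx^*)\neq 0$ and for $\mathcal{H}$ to be closed under these post-compositions --- all of which the paper leaves implicit but which do strengthen the exposition.
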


\begin{proof}
We prove this theorem via construction, assuming that there are only two embedding functions $h_1$, $h_2$, without loss of generality. Let $\delta>0$ be a small constant. We select two points $\bz_1, \bz_2 \in \mathcal{Z}$ such that $\|\bz_1 - \bz_2\|_2 \geq \mathsf{diam}(\mathcal{Z}) - \delta$. Then we define $h_1$ and $h_2$ to satisfy the condition: $h_i(\bx^*) = \bz_i$ and $h_1 = \bA \circ h_2$ where $\bA$ is an invertible matrix. Consequently,
\begin{align*}
    \Varr{i\sim [2]}{h_i(\bx^*)} 
    = \left\|\frac{\bz_1 - \bz_2}{2}\right\|_2^2
    \geq \left(\frac{\mathsf{diam}(\mathcal{Z}) - \delta}{2}\right)^2.
\end{align*}
For a given task $t$, let $g_{1,t}$ be an optimal downstream head for the embedding function $h_1$. Next, we prove that $g_{1,t} \circ \bA$ is an optimal downstream head for $h_2$ under the same task. Suppose otherwise, if there exists another downstream head $g'_{2,t}$ achieving higher performance than $g_{1,t} \circ \bA$ when combined with $h_2$. Since $h_1 = \bA \circ h_2$, it implies $g'_{2,t} \circ \bA^{-1}$ can achieve higher performance than $g_{1,t}$  when combined with $h_1$. This contradicts the optimally of $g_{1,t}$. Therefore, $g_{1,t} \circ \bA$ is an optimal downstream head for $h_2$ for task $t$ so we denote it as $g_{2,t}$. Now, we have
\begin{align*}
    g_{2,t} \circ h_2(\bx^*)  = g_{1,t} \circ \bA \circ h_2(\bx^*) = g_{1,t} \circ \bA \circ \bA^{-1} \circ h_1(\bx^*) = g_{1,t} \circ h_1(\bx^*), 
\end{align*}
which leads to $\Varr{i\sim [2]}{g_{i,t} \circ h_i(\bx^*)} = 0$. As $\delta$ can be chosen arbitrarily small, we can achieve the desired result.
By setting  $h_{2i-1}(\bx^*) = \bz_1$ and $h_{2i}(\bx^*) = \bz_2$, the same proof can be extended to the case of 2$M$ case.
\end{proof}

\begin{figure}[t!]
  \centering
  \includegraphics[width=0.6\textwidth]{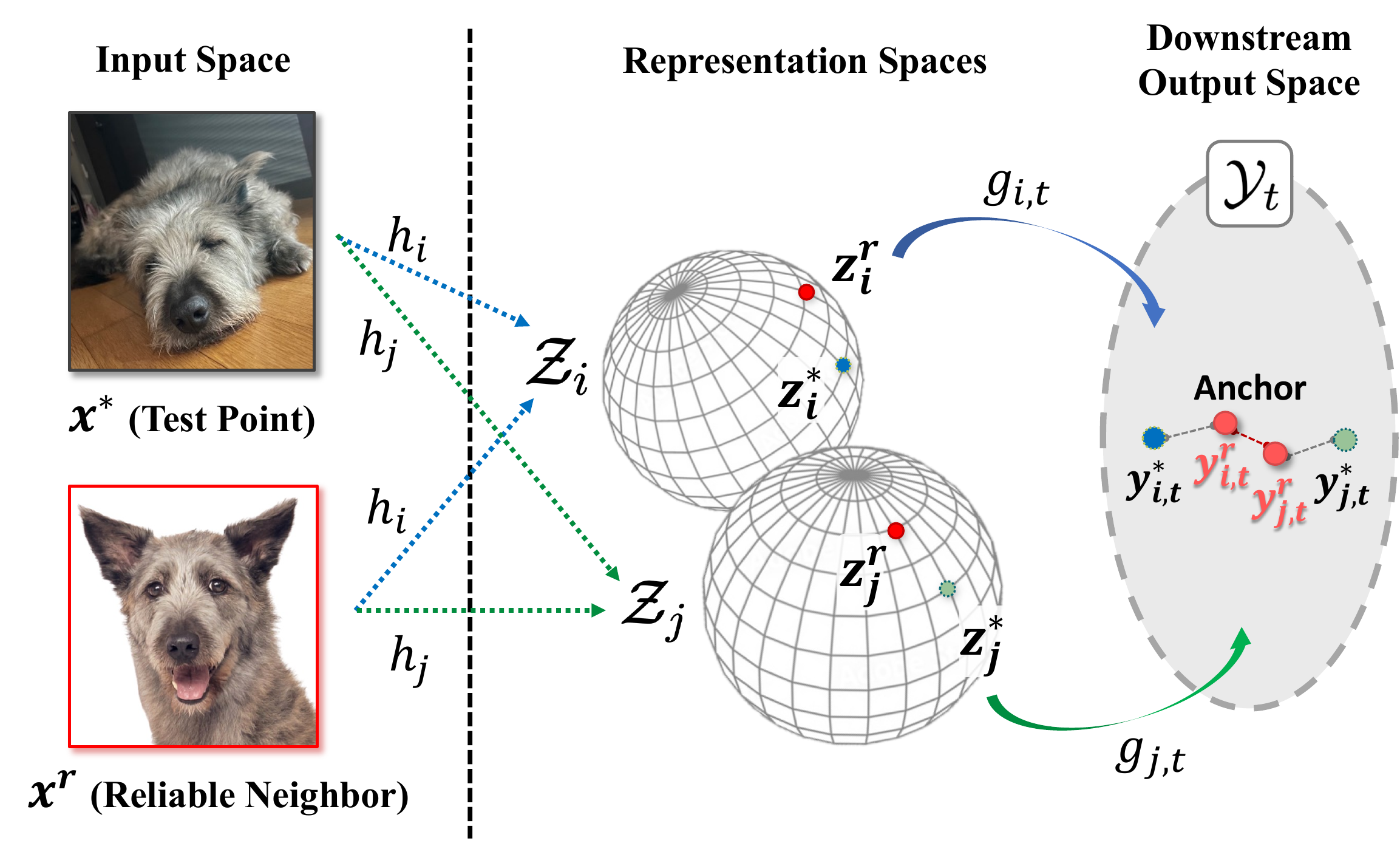}
  \caption{Graphical visualization of the sketch for the proof of Theorem~\ref{thm::nb_consistency}. Let $\mathcal{Z}i$ and $\mathcal{Z}j$ denote the representation spaces defined by the embedding functions $h_i$ and $h_j$, respectively.
  Suppose that there is a reliable neighboring point $\bx^r$ that is located close to the test point $\bx^*$ in each representation space. For any downstream task $t$, a reliable neighboring point $\bx^r$ serves as an anchor for comparing different representations $\bz_i^* = h_i(\bx^*)$ and $\bz_j^* = h_j(\bx^*)$ of the test point~$\bx^*$.
  The key idea is that $y_{i,t}^*$ and $y_{j,t}^*$  --- the downstream predictions on the test point using the two different embedding functions --- should be similar because the predictions $y_{i,t}^*$ and $y_{i,t}^r$ as well as $y_{j,t}^r$ and $y_{j,t}^*$ are similar due to the Lipschitz continuity of the downstream predictors. Additionally, since $\bx^r$ is a reliable point, the predictions $y_{i,t}^r$ and $y_{j,t}^r$ are similar. Thus, it follows that $y_{i,t}^*$ and $y_{j,t}^*$ are similar as well.
  } \label{fig:sketch_proof}
\end{figure}

\subsection{Proof of Theorem~\ref{thm::nb_consistency}} \label{pf::nb_consistency}

\begin{proof}
For the sake of notational simplicity, we denote $f_i = g_{i, t} \circ h_i$. When $g_{i, t}$ is Lipschitz continuous (see Appendix~\ref{app:lips} for more detail) and Equation~\eqref{eq:nb} --- $\lVert h_i(\bx^r) - h_i(\bx^*)) \rVert_2 \le \epsilon_{nb}$ --- holds, we have:
\begin{equation}
\lVert f_i(\bx^r) - f_i(\bx^*) \rVert_2  \le L_{i, t} \cdot \lVert h_i(\bx^r) - h_i(\bx^*) \rVert_2 \le L_{i, t} \cdot \epsilon_{nb} \le L_{t} \cdot \epsilon_{nb} ~, ~ \forall i \in [M] .
\end{equation}
By the triangle inequality, we have the following upper bound for the output difference:
\begin{align}
\lVert f_i(\bx^*) - f_j(\bx^*) \rVert_2^2 &\le \bigl( \lVert f_i(\bx^*) - f_i(\bx^r) \rVert_2 + \lVert f_i(\bx^r) - f_j(\bx^r)\rVert_2 + \lVert f_j(\bx^r) - f_j(\bx^*) \rVert_2 \bigr)^2 \nonumber \\
&\le \bigl(L_{t} \epsilon_{nb} + \lVert f_i(\bx^r) - f_j(\bx^r)\rVert_2 + L_{t} \epsilon_{nb} \bigr)^2 \nonumber \\
&= 4\bigl(L_{t} \epsilon_{nb} \bigr)^2 + 4 \bigl(L_{t} \epsilon_{nb} \bigr) \underbrace{\lVert f_i(\bx^r) - f_j(\bx^r)\rVert_2}_{L_2 ~ \text{difference}} + \underbrace{\lVert f_i(\bx^r) - f_j(\bx^r)\rVert_2^2}_{L_2 \text{-squared difference}} . 
\end{align}
Note that the ensemble variance is proportional to the average pairwise $L_2$-squared difference across the ensemble of $f_i$:
\begin{align}
\Varr{i\sim[M]}{g_{i,t} \circ h_i(\bx^*)} &= \frac{1}{M^2} \sum_{i < j} \lVert f_i(\bx^*) - f_j(\bx^*) \rVert_2 ^2 \\
\Varr{i\sim[M]}{g_{i,t} \circ h_i(\bx^r)} &= \frac{1}{M^2} \sum_{i < j} \lVert f_i(\bx^r) - f_j(\bx^r) \rVert_2 ^2 \equiv \sigma_{r, t}^2.
\end{align}

Furthermore, by Cauchy-Schwarz inequality, the average pairwise $L_2$ difference is bounded by:
\begin{align}
\frac{1}{M^2} \sum_{i < j} \lVert f_i(\bx^r) - f_j(\bx^r) \rVert_2 & \le \frac{1}{M^2} \Bigl( \sum_{i < j} \lVert f_i(\bx^r) - f_j(\bx^r) \rVert_2^2 \Bigr)^{1/2} \cdot \Bigl( \sum_{i < j} 1 \Bigr)^{1/2} \nonumber \\
& \le \frac{1}{M^2} \Bigl( M \sigma_{r, t} \Bigr) \cdot \Bigl( \frac{M(M-1)}{2} \Bigr)^{1/2} \le \frac{\sqrt{2}}{2} \sigma_{r, t}
\end{align}

Thus: 
\begin{align}
\Varr{i\sim[M]}{g_{i,t} \circ h_i(\bx^*)} &= \frac{1}{M^2} \sum_{i < j} \lVert f_i(\bx^*) - f_j(\bx^*) \rVert_2 ^2 \nonumber \\
&\le \Bigl(\frac{1}{M^2}\Bigr) \Bigl(\frac{M(M-1)}{2}\Bigr) \Bigl( 4\Bigl(L_t \epsilon_{nb}\Bigr)^2 \Bigr) + 2\sqrt{2} \bigl(L_{t} \epsilon_{nb} \bigr) \sigma_{r, t} + \sigma_{r, t}^2 \nonumber \\
&\le \Bigl(\sqrt{2}L_t \epsilon_{nb}\Bigr)^2 + 2\sqrt{2} L_{t} \epsilon_{nb} \sigma_{r, t} + \sigma_{r, t}^2 \nonumber \nonumber \\ 
&= \Bigl(\sqrt{2}L_t \epsilon_{nb} + \sigma_{r, t} \Bigr)^2.
\end{align}
\end{proof}

The reader is referred to Figure~\ref{fig:sketch_proof} for a visualization accompanying the proof sketch.

\subsection{Lipschitz Continuity of Neural Networks} \label{app:lips}

\begin{lem} \label{lem::lip_cont}
For a fully-connected layer $g_l(\bz) = a(\bw_l^T \bz + \bb_l)$ of downstream predictor, the similarity of outputs is bounded above by the Lipschitz continuity:
\begin{equation}
  \lVert g_l (\bz) - g_l (\bz^*) \rVert_2 \le \lVert \bw_l \rVert_2 \cdot \lVert \bz - \bz^* \rVert_2.
\end{equation}

\end{lem}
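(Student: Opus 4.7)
The plan is to decompose $g_l$ into its affine and nonlinear components and bound each piece with a standard Lipschitz estimate. First I would rewrite the difference as
\[
\lVert g_l(\bz) - g_l(\bz^*) \rVert_2 = \lVert a(\bw_l^T \bz + \bb_l) - a(\bw_l^T \bz^* + \bb_l) \rVert_2,
\]
and then invoke the assumption that the activation $a$ is 1-Lipschitz (covering identity, ReLU, sigmoid, and softmax, as noted in the footnote of Theorem~\ref{thm::nb_consistency}). This collapses the right-hand side to $\lVert \bw_l^T(\bz - \bz^*) \rVert_2$, since the bias $\bb_l$ cancels inside the argument of $a$.

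Second, I would apply the definition of the matrix operator norm (or equivalently Cauchy--Schwarz in the single-neuron case where $\bw_l$ is a column vector) to obtain
\[
\lVert \bw_l^T(\bz - \bz^*) \rVert_2 \le \lVert \bw_l \rVert_2 \cdot \lVert \bz - \bz^* \rVert_2,
\]
where $\lVert \bw_l \rVert_2$ is understood as the spectral norm of the weight matrix $\bw_l$. Chaining the two bounds yields the claim.

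The main obstacle is essentially notational rather than technical: I would want to state upfront that $\lVert \bw_l \rVert_2$ denotes the spectral norm when $\bw_l$ is a matrix (its largest singular value), and to make the 1-Lipschitz assumption on $a$ explicit since it is what makes the first step lossless. Beyond that, the argument is a one-line composition estimate: $g_l$ is the composition of a 1-Lipschitz activation with an affine map whose Lipschitz constant is $\lVert \bw_l \rVert_2$, so $g_l$ itself is $\lVert \bw_l \rVert_2$-Lipschitz. This directly justifies the footnote in Theorem~\ref{thm::nb_consistency} that $L_t = 1$ under spectral normalization, because iterating the bound across layers multiplies their spectral norms, which are each capped at one.
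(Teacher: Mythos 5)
Your proof is correct and follows essentially the same argument as the paper: apply the 1-Lipschitz property of the activation to reduce to the affine part, then bound $\lVert \bw_l^T(\bz - \bz^*)\rVert_2$ by the spectral norm of $\bw_l$ times $\lVert \bz - \bz^*\rVert_2$. Your additional observations about the spectral-norm convention and the connection to the $L_t=1$ footnote match the paper's intent exactly.
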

\begin{proof}
Suppose $a(\cdot)$ is 1-Lipschitz continuous (e.g., identity, ReLU, sigmoid, and softmax) ,
\begin{align}
    \lVert g_l (\bz) - g_l (\bz^*) \rVert_2 &= \lVert a\big( \bw_l^T \bz + \bb_l \big) - a\big( \bw_l^T \bz^* + \bb_l \big) \rVert_2 \nonumber \\
    &\le 1 \cdot \lVert \big( \bw_l^T \bz + \bb_l \big) - \big( \bw_l^T \bz^* + \bb_l \big) \rVert_2 \nonumber \\ 
    & = \lVert \bw_l^T \big( \bz- \bz^* \big) \rVert_2 \nonumber \\
    &\le \lVert \bw_l \rVert_2 \cdot \lVert \bz - \bz^* \rVert_2 .
\end{align}
where $\lVert \cdot \rVert_2$ is the spectral norm for a matrix and $L_2$ norm for a vector.
\end{proof}

\begin{cor} 
For a feed-forward neural networks, $g = g_1 \circ g_2 \circ ... \circ g_L$, composed of multiple fully-connected layers, the similarity of predictions at $\bx$ and $\bx^*$ is bounded above by:
\begin{equation}
  \lVert g (\bx) - g (\bx^*) \rVert_2 \le L_t \cdot \lVert \bx - \bx^* \rVert_2.
\end{equation}
where $L_t = \lVert \bw_1 \rVert_2 \cdot \lVert \bw_2 \rVert_2 \cdots \lVert \bw_L \rVert_2$ is a Lipschitz constant for $g$.
\end{cor}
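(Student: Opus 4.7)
The plan is to obtain the corollary as an immediate consequence of the single-layer bound in Lemma~\ref{lem::lip_cont}, by peeling off one layer at a time. Since Lipschitz constants compose multiplicatively under function composition, a straightforward induction on the depth $L$ should suffice; there is no additional regularity assumption beyond what Lemma~\ref{lem::lip_cont} already requires (a $1$-Lipschitz activation per layer).

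Concretely, I would set up intermediate activations along the forward pass. Define $\bz_L = \bx$ and $\bz_L^* = \bx^*$, and for $l = L, L-1, \ldots, 1$ let $\bz_{l-1} = g_l(\bz_l)$ and $\bz_{l-1}^* = g_l(\bz_l^*)$. Under the composition convention $g = g_1 \circ g_2 \circ \cdots \circ g_L$ this gives $\bz_0 = g(\bx)$ and $\bz_0^* = g(\bx^*)$. The base case $L=1$ is exactly the statement of Lemma~\ref{lem::lip_cont}. For the inductive step, Lemma~\ref{lem::lip_cont} applied to the layer $g_l$ yields $\lVert \bz_{l-1} - \bz_{l-1}^* \rVert_2 \le \lVert \bw_l \rVert_2 \cdot \lVert \bz_l - \bz_l^* \rVert_2$ for each $l$. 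Chaining these $L$ inequalities telescopes into
\begin{equation*}
\lVert g(\bx) - g(\bx^*) \rVert_2 = \lVert \bz_0 - \bz_0^* \rVert_2 \le \Bigl( \prod_{l=1}^{L} \lVert \bw_l \rVert_2 \Bigr) \cdot \lVert \bz_L - \bz_L^* \rVert_2 = L_t \cdot \lVert \bx - \bx^* \rVert_2,
\end{equation*}
which is the claim.

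There is no real obstacle in this argument: the only subtlety is ensuring that each layer's activation is $1$-Lipschitz so that Lemma~\ref{lem::lip_cont} can be invoked at every step, and this is inherited directly from the lemma's hypothesis (it is also the setting highlighted in the footnote to Theorem~\ref{thm::nb_consistency}, where spectral normalization combined with $1$-Lipschitz activations yields $L_t = 1$). I would close by remarking that the product bound is tight in the worst case, which is precisely why controlling the per-layer spectral norms $\lVert \bw_l \rVert_2$ is sufficient to control $L_t$ and hence the upper bound in Theorem~\ref{thm::nb_consistency}.
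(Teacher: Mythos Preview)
Your proposal is correct and matches the paper's approach: the paper states the corollary immediately after Lemma~\ref{lem::lip_cont} without a separate proof, treating it as the evident consequence of composing the per-layer Lipschitz bounds, which is exactly the inductive peeling argument you wrote out.
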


\subsection{Does the absence of consistent neighbors imply unreliability?} \label{app:necessary}
Starting with the conclusion, the justification holds when the downstream predictors are bi-Lipschitz continuous and the reference points are reliable. Below is a more detailed justification.

Assume that the downstream predictors are bi-Lipschitz continuous (i.e., it is Lipschitz continuous and has an inverse mapping which is also Lipschitz continuous) and that the reference points are reliable. Consider a test point $\bx^*$ that does not have any consistent neighboring points.

There are two cases. First, the test point has no neighboring point at all. This implies that the test point is out-of-distribution from the distribution of reference points \citep{tack2020csi}. As OOD points are more likely to be unreliable, the test point is more likely to be unreliable as well.

The second case is when the test point has neighboring points, but they are not consistent. Consider any pair of embedding functions, $h_1, h_2 \in \mathcal{H}$. Denote the representation spaces constructed by each function as $\mathcal{Z}_1$ and $\mathcal{Z}_2$, respectively. Let's say $\bx^*$ is close to the reference point A (i.e., $\bx^a$) within $\mathcal{Z}_1$ and is close to the other reference point B (i.e., $\bx^b$) within $\mathcal{Z}_2$:
\begin{equation}
    \lVert h_1(\bx^a) - h_1(\bx^*)) \rVert_2 \le \epsilon_{nb} , ~~~ \lVert h_2(\bx^b) - h_2(\bx^*)) \rVert_2 \le \epsilon_{nb} .  
\end{equation}

Suppose the test point is reliable. In the following proof, we assume a stronger form of reliability as follows:
\begin{equation}
    \lVert g_{i, t} \circ h_i(\bx) - g_{j, t} \circ h_j(\bx) \rVert_2 = \lVert f_i(\bx) - f_j(\bx) \rVert_2 \leq \sigma_t(\bx)  ~, ~ \forall i, j \in [M] \label{defn:::reliability_stronger}
\end{equation}
where $g_i$ and $g_j$ are bi-Lipschitz continuous downstream predictors for $h_i$ and $h_j$. $\sigma_t(\bx)$ is reliability of $\bx$ measured by maximum prediction deviation across the ensemble on the downstream task $t$. From the definition, the downstream predictions on $\bx^*$, $\bx^a$, and $\bx^b$ will have consistent predictions across the embedding functions
\begin{align}
    \lVert f_1(\bx^*) - f_2(\bx^*) \rVert_2 & \leq \sigma_t^* \\
    \lVert f_1(\bx^a) - f_2(\bx^a) \rVert_2 & \leq \sigma_t^a \\
    \lVert f_1(\bx^b) - f_2(\bx^b) \rVert_2 & \leq \sigma_t^b
\end{align}
where we denote the reliability of each point as: $\sigma_t(\bx^*) = \sigma_t^*$, $\sigma_t(\bx^a) = \sigma_t^a$, and $\sigma_t(\bx^b) = \sigma_t^b$, respectively.

By Lipschitz continuity,
\begin{align}
    \lVert f_1(\bx^a) - f_1(\bx^*) \rVert_2 & \le L_{t} \cdot \epsilon_{nb} \\ 
    \lVert f_2(\bx^b) - f_2(\bx^*) \rVert_2 & \le L_{t} \cdot \epsilon_{nb}.
\end{align}
By the triangle inequality, we have the following upper bound for the output difference between the reference points:
\begin{align}
    \lVert f_1(\bx^a) - f_1(\bx^b) \rVert_2 & \le \lVert f_1(\bx^a) - f_1(\bx^*) \rVert_2 + \lVert f_1(\bx^*) - f_2(\bx^*) \rVert_2 + \lVert f_2(\bx^*) - f_2(\bx^b) \rVert_2 + \lVert f_2(\bx^b) - f_1(\bx^b) \rVert_2 \nonumber \\ 
    & \le  \sigma_t^* + \sigma_t^b + 2 L_{t} \cdot \epsilon_{nb} . 
\end{align}
By bi-Lipschitz continuity, $\lVert h_1(\bx^a) - h_1(\bx^b) \rVert_2$ is bounded above; i.e., the reference points A and B are close to each other within both $\mathcal{Z}_1$:
\begin{equation}
\lVert h_1(\bx^a) - h_1(\bx^b) \rVert_2 \le L'_t \cdot \lVert f_1(\bx^a) - f_1(\bx^b) \rVert_2 \le L'_t \cdot (\sigma_t^* + \sigma_t^b + 2 L_{t} \cdot \epsilon_{nb})
\end{equation}
where $L'_t = \max_i L'_{i, t}$ is a bi-Lipschitz constant.
Hence:
\begin{equation}
\lVert h_1(\bx^b) - h_1(\bx^*) \rVert_2 \le \lVert h_1(\bx^b) - h_1(\bx^a) \rVert_2 + \lVert h_1(\bx^a) - h_1(\bx^*) \rVert_2 \le L'_t \cdot (\sigma_t^* + \sigma_t^b + 2 L_{t} \cdot \epsilon_{nb}) + \epsilon_{nb} .
\end{equation}
Without loss of generality, $\lVert h_2(\bx^a) - h_2(\bx^*) \rVert_2 \le L'_t \cdot (\sigma_t^* + \sigma_t^a + 2 L_{t} \cdot \epsilon_{nb}) + \epsilon_{nb}
$. Thus, the point B is also close to $\bx^*$ within $\mathcal{Z}_1$ while the point A is also close to $\bx^*$ within $\mathcal{Z}_2$. In other words, the reference points A and B are consistent neighboring points of the test point $\bx^*$, which contradicts the assumption that it does not have any consistent neighboring points.

In conclusion, the test point is unreliable in either case. Therefore, when a test point $\bx^*$ that does not have any consistent neighboring points, the test point is unreliable.

\section{Computing Representation Reliability} \label{app:performance}
Recall that the representation reliability is defined by:
\begin{equation}
    \mathsf{Reli}(\bx^*; \cH, \mathcal{T}) \defined \sum\nolimits_{t \in \cT}{\myPerf{t}} ~ / ~ |\cT| \nonumber
\end{equation}
where $\mathcal{T}$ is the collection of downstream tasks and for each task $t \in \mathcal{T}$, an embedding function $h$ is taken uniformly at random from $\cH$ and a downstream predictor $g_{h,t}$ is (optimally) trained upon $h$ on $t$.
In this section, we provide some examples to compute representation reliability using standard uncertainty or accuracy measures using an ensemble of embedding functions $\{h_1, \cdots, h_M\}$ and the corresponding downstream predictors $\{g_{1,t}, \cdots, g_{M,t}\}$.

Let's denote this distribution of $h$: $\mathcal{P}_{\mathcal{H}}$. 
Consequently, in regression tasks, the negative variance of the predictive distribution can be utilized for a performance metric:
\begin{gather} 
\myPerf{t} := -\Var{g_{h,t} \circ h(\bx^*)} \nonumber \\
\quad\quad = -\EEE{h \sim \pH}{(g_{h,t} \circ h(\bx^*) - \EEE{h \sim \pH}{g_{h,t} \circ h(\bx^*)})^2}  \nonumber \\
\quad\quad \approx  -\frac{1}{M^2} \sum_{i < j} \Big( g_{i,t} \circ h_i(\mathbf{x}^*) - g_{j,t} \circ h_j(\mathbf{x}^*) \Big)^2 \label{eq:rep_reli_var}
\end{gather}
where $g_{i,t} \equiv g_{h_i,t}$.
In the context of multi-output tasks, the trace of the covariance matrix can be utilized, which is essentially the sum of variances across each output dimension. This approach aggregates the individual uncertainties of each output, providing a comprehensive measure of overall uncertainty in the multi-output setting.

While our theoretical analysis has focused on regression tasks using variance, for classification tasks, metrics like the Brier score or entropy are more natural choices. For a given downstream classification task $t$ with $C$ classes, the ground truth label of $\bx^*$ is represented by a one-hot vector: $y_{t}^* = ( y_{t[1]}^*, \cdots, y_{t[C]}^*)^T$ where $y_{t[c]}^* \in \{0, 1\}$.
The softmax output for class $c$ via the predictive function $g_i \circ h_i (\cdot)$ is denoted as $g_i \circ h_i (\cdot)_{[c]}$. %

The negative Brier score for this setup is calculated as follows:
\begin{gather}
\myPerf{t} := -\mathsf{Brier} (\EEE{h \sim \mathcal{P}_\mathcal{H}}{g_{h,t} \circ h(\bx^*)} ; y_{t}^*) \nonumber \\
\quad\quad  = -\sum_{c=1}^{C} \Big( \EEE{h \sim \mathcal{P}_\mathcal{H}}{g_{h,t} \circ h(\mathbf{x}^*)}_{[c]} - y_{t[c]}^* \Big)^2\\
\quad\quad  -\approx \sum_{c=1}^{C} \Big( \frac{1}{M} \sum_{i=1}^{M} g_{i,t} \circ h_i(\mathbf{x}^*)_{[c]} - y_{t[c]}^* \Big)^2
\end{gather}
and the negative entropy is given by:
\begin{gather}
\myPerf{t} := -\mathsf{Entropy} ( \EEE{h \sim \mathcal{P}_\mathcal{H}}{g_{h,t} \circ h(\mathbf{x}^*)} ) \nonumber \\
\quad\quad  \approx \sum_{c=1}^{C} \Big(\frac{1}{M} \sum_{i=1}^{M} g_{i,t} \circ h_i(\mathbf{x}^*)_{[c]} \Big) \log \Big(\frac{1}{M} \sum_{i=1}^{M} g_{i,t} \circ h_i(\mathbf{x}^*)_{[c]} \Big) .
\end{gather}

While establishing a rigorous relationship between uncertainty and predictive accuracy may pose challenges, empirical studies have demonstrated a strong correlation between the two measures.
As stated in Theorem~\ref{thm::nb_consistency}, a lower bound of the representation reliability measured by $\Perf{\cdot} \coloneqq -\Varr{}{\cdot}$, depicted in Equation~\eqref{eq:rep_reli_var}, is assured by the sum of the reference point's reliability along with its relative distance to the test point.
Consequently, this explains how the $\mathsf{NC}$ score could effectively capture representation reliability, irrespective of the specific performance metrics employed.

\section{Additional Experimental Results} \label{app:suppl}

\subsection{Multi-class Downstream Classification Tasks} \label{app:multi}
As noted in Section~\ref{sec:setup}, our evaluation protocol employs one-vs-one (OVO) binary classification tasks for downstream evaluation.
While it is intuitively plausible that the average performance across OVO tasks correlates well with that of multi-class classification tasks, we undertake empirical studies on the original multi-class setups to validate this assumption.
The key distinction from the original protocol is that, the number of tasks is just one (i.e., $|\cT| = 1$) and the output size for the linear heads corresponds to the number of classes $C$.
For $\texttt{CIFAR-10}$ and $\texttt{STL-10}$ datasets, $C=10$ class labels are used, while $\texttt{CIFAR-100}$ uses $C=20$ coarse (i.e., superclass) labels.
The outcomes, presented in Table~\ref{tab:comparison_multi}, exhibit a consistent trend with previous observations.

\subsection{Performance with Unnormalized Representation} \label{app:unnormalized}
In addition to the result presented in Table~\ref{tab:comparison} (ID) and Table~\ref{tab:transfer} (Transfer), we further explore the efficacy of our method with unnormalized representation (e,g., Euclidean distance instead of cosine distance).
We also include the standard deviation resulting from the randomness involved in selecting the reference dataset.
As indicated by the results in Table~\ref{tab:comparison_unnormalized} (ID), Table~\ref{tab:transfer_unnormalized} (Transfer), our method exhibits consistent and strong performance, even when utilizing unnormalized representation.

On the other hand, the baseline methods suffer from a significant performance reduction when paired with Euclidean distance. 
Our conjecture is as follows: as noted in our results and \cite{tack2020csi}, points with a larger $L_2$-norm tend to exhibit higher reliability.
This implies that reliable points are often located in the outer regions of the representation space, resulting in larger distances between points in those regions compared to points near the origin.
This observation , however, contradicts to the $\amone$ and $\mathsf{FV}$'s expectations (i.e., smaller value is presumed to indicate higher reliability).
As a result, those measures on the unnormalized representations struggle to accurately reflect reliability, potentially indicating negative correlations that defy their assumptions.

\subsection{Ablation Studies}
In addition to the exemplary results in the main manuscript, we further provide the ablation results in Figure~\ref{fig:abl_m_full} ($M$) and Figure~\ref{fig:abl_k_full} ($k$).
These ablation results cover various combinations of pre-training algorithms, model architectures, and datasets.
Brier score is used for the downstream performance metric. 

\subsection{Quantifying the Reliability of Individual Embedding Functions}
To confirm whether the $\mathsf{NC}$ score can also be applied to gauge the reliability of the individual embedding function $h \in \mathcal{H}$, we measured the correlation between the measures and the representation reliability estimated with Brier score (i.e., downstream accuracy), for each member of the ensemble.
The results, detailed in Table~\ref{tab:single} and Table~\ref{tab:trasnfer_single}, include the average correlation and its standard deviation across the individuals.
Since metrics such as $\amone$, $\mathsf{norm}$, and $\mathsf{LL}$ are inherently applicable to individual embedding functions without using the ensemble, we also report non-ensemble scores as well, marked with a superscript star (*).

\begin{table*}[t!] 
\centering
\fontsize{9pt}{9pt}\selectfont
\caption{Comparison on the correlation between our neighborhood consistency ($\mathsf{NC}_{100}$) and baseline methods in relation to performance on in-distribution downstream tasks.}
\renewcommand{\arraystretch}{1.25}
\begin{tabular}{cc|cccc}
\toprule
\multirow{3}{*}{\makecell[c]{Pretraining \\ Algorithms}} & \multirow{3}{*}{Method} & \multicolumn{4}{c}{ResNet-18}  \\  \cline{3-6} 
 & & \multicolumn{2}{c}{\texttt{CIFAR-10}}  & \multicolumn{2}{c}{\texttt{CIFAR-100}} \\
 & & Entropy  & Brier & Entropy & Brier  \\
\midrule
\multirow{5}{*}{SimCLR} & $\mathsf{NC}_{100}$ & \textbf{0.3865} \tiny{$\pm$ 0.0024} & \textbf{0.3262} \tiny{$\pm$ 0.0022} & \textbf{0.3130} \tiny{$\pm$ 0.0031} & \textbf{0.2590} \tiny{$\pm$ 0.0028} \\
& $\amone$ & \underline{0.3021} \tiny{$\pm$ 0.0063} & \underline{0.2544} \tiny{$\pm$ 0.0051} & 0.0757 \tiny{$\pm$ 0.0084} & 0.0635 \tiny{$\pm$ 0.0075} \\
& $\mathsf{Norm}$ & 0.2907 \hspace{2.2em} & 0.2429 \hspace{2.2em} & \underline{0.1198} \hspace{2.2em} & \underline{0.0649} \hspace{2.2em} \\
& $\mathsf{LL}$ & 0.1797 \tiny{$\pm$ 0.0078} & 0.1356 \tiny{$\pm$ 0.0068} & -0.0989 \tiny{$\pm$ 0.0063} & -0.1067 \tiny{$\pm$ 0.0056} \\
& $\mathsf{FV}$ & -0.0476 \hspace{2.2em} & -0.0458 \hspace{2.2em} & -0.1891 \hspace{2.2em} & -0.1710 \hspace{2.2em} \\
\midrule
\multirow{5}{*}{BYOL} & $\mathsf{NC}_{100}$ & \textbf{0.2749} \tiny{$\pm$ 0.0049} & \textbf{0.1826} \tiny{$\pm$ 0.0030} & \textbf{0.2354} \tiny{$\pm$ 0.0048} & \textbf{0.1753} \tiny{$\pm$ 0.0042} \\
& $\amone$ & 0.0322 \tiny{$\pm$ 0.0029} & 0.0385 \tiny{$\pm$ 0.0018} & -0.1477 \tiny{$\pm$ 0.0045} & -0.0709 \tiny{$\pm$ 0.0031} \\
& $\mathsf{Norm}$ & 0.0725 \hspace{2.2em} & 0.0235 \hspace{2.2em} & \underline{0.1659} \hspace{2.2em} & \underline{0.1231} \hspace{2.2em} \\
& $\mathsf{LL}$ & -0.0539 \tiny{$\pm$ 0.0010} & -0.0196 \tiny{$\pm$ 0.0006} & -0.2637 \tiny{$\pm$ 0.0052} & -0.1650 \tiny{$\pm$ 0.0045} \\
& $\mathsf{FV}$ & \underline{0.1439} \hspace{2.2em} & \underline{0.1003} \hspace{2.2em} & -0.1727 \hspace{2.2em} & -0.1284 \hspace{2.2em} \\
\midrule
\multirow{5}{*}{MoCo} & $\mathsf{NC}_{100}$ & \textbf{0.4157} \tiny{$\pm$ 0.0042} & \textbf{0.3653} \tiny{$\pm$ 0.0038} & \textbf{0.3632} \tiny{$\pm$ 0.0019} & \textbf{0.3128} \tiny{$\pm$ 0.0024} \\
& $\amone$ & \underline{0.3523} \tiny{$\pm$ 0.0048} & \underline{0.3093} \tiny{$\pm$ 0.0037} & \underline{0.2246} \tiny{$\pm$ 0.0056} & \underline{0.1925} \tiny{$\pm$ 0.0070} \\
& $\mathsf{Norm}$ & 0.3238 \hspace{2.2em} & 0.2708 \hspace{2.2em} & 0.2186 \hspace{2.2em} & 0.1432 \hspace{2.2em} \\
& $\mathsf{LL}$ & 0.2201 \tiny{$\pm$ 0.0108} & 0.1744 \tiny{$\pm$ 0.0111} & -0.0334 \tiny{$\pm$ 0.0052} & -0.0736 \tiny{$\pm$ 0.0038} \\
& $\mathsf{FV}$ & 0.0519 \hspace{2.2em} & 0.0321 \hspace{2.2em} & -0.1420 \hspace{2.2em} & -0.1578 \hspace{2.2em} \\
\bottomrule
\end{tabular}

~
\vspace{1em}
~

\begin{tabular}{cc|cccc}
\toprule
\multirow{3}{*}{\makecell[c]{Pretraining \\ Algorithms}} & \multirow{3}{*}{Method} & \multicolumn{4}{c}{ResNet-50}  \\  \cline{3-6} 
 & & \multicolumn{2}{c}{\texttt{CIFAR-10}}  & \multicolumn{2}{c}{\texttt{CIFAR-100}} \\
 & & Entropy  & Brier & Entropy & Brier  \\
\midrule
\multirow{5}{*}{SimCLR} & $\mathsf{NC}_{100}$ & \textbf{0.3786} \tiny{$\pm$ 0.0013} & \textbf{0.3206} \tiny{$\pm$ 0.0014} & \textbf{0.3070} \tiny{$\pm$ 0.0040} & \textbf{0.2516} \tiny{$\pm$ 0.0036} \\
& $\amone$ & 0.2750 \tiny{$\pm$ 0.0052} & 0.2376 \tiny{$\pm$ 0.0044} & 0.1100 \tiny{$\pm$ 0.0060} & \underline{0.0919} \tiny{$\pm$ 0.0061} \\
& $\mathsf{Norm}$ & \underline{0.2789} \hspace{2.2em} & \underline{0.2385} \hspace{2.2em} & \underline{0.1311} \hspace{2.2em} & 0.0710 \hspace{2.2em} \\
& $\mathsf{LL}$ & 0.1700 \tiny{$\pm$ 0.0088} & 0.1348 \tiny{$\pm$ 0.0073} & -0.0831 \tiny{$\pm$ 0.0036} & -0.0959 \tiny{$\pm$ 0.0032} \\
& $\mathsf{FV}$ & -0.0402 \hspace{2.2em} & -0.0393 \hspace{2.2em} & -0.1984 \hspace{2.2em} & -0.1870 \hspace{2.2em} \\
\midrule
\multirow{5}{*}{BYOL} & $\mathsf{NC}_{100}$ & \textbf{0.3524} \tiny{$\pm$ 0.0028} & \textbf{0.2131} \tiny{$\pm$ 0.0014} & \textbf{0.3233} \tiny{$\pm$ 0.0061} & \textbf{0.1733} \tiny{$\pm$ 0.0039} \\
& $\amone$ & -0.1858 \tiny{$\pm$ 0.0022} & -0.1442 \tiny{$\pm$ 0.0010} & -0.3124 \tiny{$\pm$ 0.0019} & -0.1598 \tiny{$\pm$ 0.0014} \\
& $\mathsf{Norm}$ & \underline{0.1990} \hspace{2.2em} & \underline{0.1184} \hspace{2.2em} & \underline{0.1990} \hspace{2.2em} & \underline{0.1476} \hspace{2.2em} \\
& $\mathsf{LL}$ & -0.2542 \tiny{$\pm$ 0.0047} & -0.1784 \tiny{$\pm$ 0.0035} & -0.3777 \tiny{$\pm$ 0.0018} & -0.2018 \tiny{$\pm$ 0.0013} \\
& $\mathsf{FV}$ & 0.0676 \hspace{2.2em} & 0.0524 \hspace{2.2em} & -0.0213 \hspace{2.2em} & -0.0625 \hspace{2.2em} \\
\midrule
\multirow{5}{*}{MoCo} & $\mathsf{NC}_{100}$ & 0.3757 \tiny{$\pm$ 0.0041} & 0.3236 \tiny{$\pm$ 0.0038} & \textbf{0.2667} \tiny{$\pm$ 0.0043} & \textbf{0.2225} \tiny{$\pm$ 0.0028} \\
& $\amone$ & \textbf{0.3831} \tiny{$\pm$ 0.0032} & \textbf{0.3397} \tiny{$\pm$ 0.0027} & \underline{0.2453} \tiny{$\pm$ 0.0051} & \underline{0.2183} \tiny{$\pm$ 0.0054} \\
& $\mathsf{Norm}$ & \underline{0.3798} \hspace{2.2em} & \underline{0.3294} \hspace{2.2em} & 0.2172 \hspace{2.2em} & 0.1659 \hspace{2.2em} \\
& $\mathsf{LL}$ & 0.2518 \tiny{$\pm$ 0.0073} & 0.2125 \tiny{$\pm$ 0.0073} & -0.0156 \tiny{$\pm$ 0.0062} & -0.0343 \tiny{$\pm$ 0.0054} \\
& $\mathsf{FV}$ & 0.1228 \hspace{2.2em} & 0.1122 \hspace{2.2em} & -0.1081 \hspace{2.2em} & -0.1100 \hspace{2.2em} \\
\bottomrule
\end{tabular}
\label{tab:comparison_full}
\end{table*}
\clearpage

\begin{table*}[t!] 
\centering
\fontsize{9pt}{9pt}\selectfont
\caption{Comparison on the correlation between our neighborhood consistency ($\mathsf{NC}_{100}$) and baseline methods in relation to performance on in-distribution downstream tasks, for the \textbf{multi-class} downstream classification tasks. Our approach consistently demonstrates strong performance when compared to baseline methods, similar to the results observed when using the OVO binary classification tasks, as shown in Table~\ref{tab:comparison_full}.}
\renewcommand{\arraystretch}{1.25}
\begin{tabular}{cc|cccc}
\toprule
\multirow{3}{*}{\makecell[c]{Pretraining \\ Algorithms}} & \multirow{3}{*}{Method} & \multicolumn{4}{c}{ResNet-18}  \\  \cline{3-6} 
 & & \multicolumn{2}{c}{\texttt{CIFAR-10}}  & \multicolumn{2}{c}{\texttt{CIFAR-100}} \\
 & & Entropy  & Brier & Entropy & Brier  \\
\midrule
\multirow{5}{*}{SimCLR} & $\mathsf{NC}_{100}$ & \textbf{0.3848} \tiny{$\pm$ 0.0016} & \textbf{0.3202} \tiny{$\pm$ 0.0019} & \textbf{0.2998} \tiny{$\pm$ 0.0051} & \textbf{0.2166} \tiny{$\pm$ 0.0030} \\
& $\amone$ & 0.3025 \tiny{$\pm$ 0.0079} & \underline{0.2533} \tiny{$\pm$ 0.0055} & 0.0641 \tiny{$\pm$ 0.0070} & \underline{0.0698} \tiny{$\pm$ 0.0072} \\
& $\mathsf{Norm}$ & \underline{0.3098} \hspace{2.2em} & 0.2462 \hspace{2.2em} & \underline{0.1725} \hspace{2.2em} & 0.0467 \hspace{2.2em} \\
& $\mathsf{LL}$ & 0.1953 \tiny{$\pm$ 0.0092} & 0.1469 \tiny{$\pm$ 0.0073} & -0.0863 \tiny{$\pm$ 0.0062} & -0.0668 \tiny{$\pm$ 0.0049} \\
& $\mathsf{FV}$ & -0.0485 \hspace{2.2em} & -0.0357 \hspace{2.2em} & -0.1755 \hspace{2.2em} & -0.1358 \hspace{2.2em} \\
\midrule
\multirow{5}{*}{BYOL} & $\mathsf{NC}_{100}$ & \textbf{0.3375} \tiny{$\pm$ 0.0047} & \textbf{0.1431} \tiny{$\pm$ 0.0030} & \textbf{0.3116} \tiny{$\pm$ 0.0056} & \textbf{0.0957} \tiny{$\pm$ 0.0034} \\
& $\amone$ & 0.0466 \tiny{$\pm$ 0.0028} & 0.0540 \tiny{$\pm$ 0.0021} & -0.2296 \tiny{$\pm$ 0.0042} & 0.0093 \tiny{$\pm$ 0.0034} \\
& $\mathsf{Norm}$ & \underline{0.1723} \hspace{2.2em} & 0.0261 \hspace{2.2em} & \underline{0.2030} \hspace{2.2em} & \underline{0.0808} \hspace{2.2em} \\
& $\mathsf{LL}$ & -0.0684 \tiny{$\pm$ 0.0020} & 0.0098 \tiny{$\pm$ 0.0009} & -0.3830 \tiny{$\pm$ 0.0052} & -0.0496 \tiny{$\pm$ 0.0031} \\
& $\mathsf{FV}$ & 0.1620 \hspace{2.2em} & \underline{0.0621} \hspace{2.2em} & -0.2077 \hspace{2.2em} & -0.0791 \hspace{2.2em} \\
\midrule
\multirow{5}{*}{MoCo} & $\mathsf{NC}_{100}$ & \textbf{0.3972} \tiny{$\pm$ 0.0039} & \textbf{0.3451} \tiny{$\pm$ 0.0034} & \textbf{0.3519} \tiny{$\pm$ 0.0019} & \textbf{0.2619} \tiny{$\pm$ 0.0034} \\
& $\amone$ & \underline{0.3267} \tiny{$\pm$ 0.0060} & \underline{0.2903} \tiny{$\pm$ 0.0036} & 0.1942 \tiny{$\pm$ 0.0048} & \underline{0.1669} \tiny{$\pm$ 0.0067} \\
& $\mathsf{Norm}$ & 0.3177 \hspace{2.2em} & 0.2626 \hspace{2.2em} & \underline{0.2273} \hspace{2.2em} & 0.1046 \hspace{2.2em} \\
& $\mathsf{LL}$ & 0.2080 \tiny{$\pm$ 0.0125} & 0.1712 \tiny{$\pm$ 0.0118} & -0.0628 \tiny{$\pm$ 0.0044} & -0.0657 \tiny{$\pm$ 0.0028} \\
& $\mathsf{FV}$ & 0.0541 \hspace{2.2em} & 0.0401 \hspace{2.2em} & -0.1534 \hspace{2.2em} & -0.1386 \hspace{2.2em} \\
\bottomrule
\end{tabular}

~
\vspace{1em}
~

\begin{tabular}{cc|cccc}
\toprule
\multirow{3}{*}{\makecell[c]{Pretraining \\ Algorithms}} & \multirow{3}{*}{Method} & \multicolumn{4}{c}{ResNet-50}  \\  \cline{3-6} 
 & & \multicolumn{2}{c}{\texttt{CIFAR-10}}  & \multicolumn{2}{c}{\texttt{CIFAR-100}} \\
 & & Entropy  & Brier & Entropy & Brier  \\
\midrule
\multirow{5}{*}{SimCLR} & $\mathsf{NC}_{100}$ & \textbf{0.3756} \tiny{$\pm$ 0.0005} & \textbf{0.3143} \tiny{$\pm$ 0.0012} & \textbf{0.2969} \tiny{$\pm$ 0.0052} & \textbf{0.2091} \tiny{$\pm$ 0.0038} \\
& $\amone$ & 0.2709 \tiny{$\pm$ 0.0061} & 0.2363 \tiny{$\pm$ 0.0047} & 0.0954 \tiny{$\pm$ 0.0055} & \underline{0.0875} \tiny{$\pm$ 0.0058} \\
& $\mathsf{Norm}$ & \underline{0.2928} \hspace{2.2em} & \underline{0.2387} \hspace{2.2em} & \underline{0.1888} \hspace{2.2em} & 0.0493 \hspace{2.2em} \\
& $\mathsf{LL}$ & 0.1779 \tiny{$\pm$ 0.0085} & 0.1441 \tiny{$\pm$ 0.0070} & -0.0793 \tiny{$\pm$ 0.0037} & -0.0663 \tiny{$\pm$ 0.0031} \\
& $\mathsf{FV}$ & -0.0425 \hspace{2.2em} & -0.0319 \hspace{2.2em} & -0.1856 \hspace{2.2em} & -0.1531 \hspace{2.2em} \\
\midrule
\multirow{5}{*}{BYOL} & $\mathsf{NC}_{100}$ & \textbf{0.4480} \tiny{$\pm$ 0.0038} & \textbf{0.1542} \tiny{$\pm$ 0.0009} & \textbf{0.4393} \tiny{$\pm$ 0.0079} & \underline{0.0670} \tiny{$\pm$ 0.0017} \\
& $\amone$ & -0.2822 \tiny{$\pm$ 0.0022} & -0.1029 \tiny{$\pm$ 0.0014} & -0.4107 \tiny{$\pm$ 0.0028} & -0.0475 \tiny{$\pm$ 0.0012} \\
& $\mathsf{Norm}$ & \underline{0.2253} \hspace{2.2em} & \underline{0.0941} \hspace{2.2em} & \underline{0.2490} \hspace{2.2em} & \textbf{0.0760} \hspace{2.2em} \\
& $\mathsf{LL}$ & -0.3723 \tiny{$\pm$ 0.0057} & -0.1278 \tiny{$\pm$ 0.0023} & -0.5008 \tiny{$\pm$ 0.0050} & -0.0711 \tiny{$\pm$ 0.0021} \\
& $\mathsf{FV}$ & 0.1362 \hspace{2.2em} & 0.0252 \hspace{2.2em} & 0.0017 \hspace{2.2em} & -0.0464 \hspace{2.2em} \\
\midrule
\multirow{5}{*}{MoCo} & $\mathsf{NC}_{100}$ & \underline{0.3537} \tiny{$\pm$ 0.0044} & 0.3002 \tiny{$\pm$ 0.0035} & \textbf{0.2713} \tiny{$\pm$ 0.0049} & \underline{0.1857} \tiny{$\pm$ 0.0026} \\
& $\amone$ & 0.3532 \tiny{$\pm$ 0.0046} & \textbf{0.3161} \tiny{$\pm$ 0.0027} & 0.2199 \tiny{$\pm$ 0.0049} & \textbf{0.1904} \tiny{$\pm$ 0.0053} \\
& $\mathsf{Norm}$ & \textbf{0.3764} \hspace{2.2em} & \underline{0.3156} \hspace{2.2em} & \underline{0.2321} \hspace{2.2em} & 0.1270 \hspace{2.2em} \\
& $\mathsf{LL}$ & 0.2308 \tiny{$\pm$ 0.0075} & 0.2031 \tiny{$\pm$ 0.0078} & -0.0391 \tiny{$\pm$ 0.0064} & -0.0240 \tiny{$\pm$ 0.0045} \\
& $\mathsf{FV}$ & 0.1279 \hspace{2.2em} & 0.1196 \hspace{2.2em} & -0.1226 \hspace{2.2em} & -0.0945 \hspace{2.2em} \\
\bottomrule
\end{tabular}
\label{tab:comparison_multi}
\end{table*}
\clearpage

\begin{table*}[t!] 
\centering
\fontsize{9pt}{9pt}\selectfont
\caption{Comparison on the correlation between our neighborhood consistency ($\mathsf{NC}_{100}$) and baseline methods in relation to performance on in-distribution downstream tasks, for the \textbf{unnormalized} representation. The overall correlation is weaker compared to the one observed when using normalized representation as presented in Table~\ref{tab:comparison_full}. Nevertheless, our approach consistently shows a robust performance compared to baseline methods.}
\renewcommand{\arraystretch}{1.25}
\begin{tabular}{cc|cccc}
\toprule
\multirow{3}{*}{\makecell[c]{Pretraining \\ Algorithms}} & \multirow{3}{*}{Method} & \multicolumn{4}{c}{ResNet-18}  \\  \cline{3-6} 
 & & \multicolumn{2}{c}{\texttt{CIFAR-10}}  & \multicolumn{2}{c}{\texttt{CIFAR-100}} \\
 & & Entropy  & Brier & Entropy & Brier  \\
\midrule
\multirow{5}{*}{SimCLR} & $\mathsf{NC}_{100}$ & \textbf{0.3237} \tiny{$\pm$ 0.0023} & \textbf{0.2727} \tiny{$\pm$ 0.0021} & \textbf{0.2868} \tiny{$\pm$ 0.0014} & \textbf{0.2408} \tiny{$\pm$ 0.0021} \\
& $\amone$ & -0.0590 \tiny{$\pm$ 0.0071} & -0.0460 \tiny{$\pm$ 0.0065} & -0.0317 \tiny{$\pm$ 0.0061} & 0.0027 \tiny{$\pm$ 0.0064} \\
& $\mathsf{Norm}$ & \underline{0.2907} \hspace{2.2em} & \underline{0.2429} \hspace{2.2em} & \underline{0.1198} \hspace{2.2em} & \underline{0.0649} \hspace{2.2em} \\
& $\mathsf{LL}$ & -0.0970 \tiny{$\pm$ 0.0297} & -0.0830 \tiny{$\pm$ 0.0276} & -0.2125 \tiny{$\pm$ 0.0170} & -0.1704 \tiny{$\pm$ 0.0142} \\
& $\mathsf{FV}$ & -0.3224 \hspace{2.2em} & -0.2704 \hspace{2.2em} & -0.2370 \hspace{2.2em} & -0.1723 \hspace{2.2em} \\
\midrule
\multirow{5}{*}{BYOL} & $\mathsf{NC}_{100}$ & \textbf{0.2381} \tiny{$\pm$ 0.0051} & \textbf{0.1643} \tiny{$\pm$ 0.0037} & \textbf{0.2284} \tiny{$\pm$ 0.0044} & \textbf{0.1720} \tiny{$\pm$ 0.0040} \\
& $\amone$ & -0.0801 \tiny{$\pm$ 0.0016} & -0.0144 \tiny{$\pm$ 0.0022} & -0.2077 \tiny{$\pm$ 0.0038} & -0.1149 \tiny{$\pm$ 0.0025} \\
& $\mathsf{Norm}$ & \underline{0.0725} \hspace{2.2em} & \underline{0.0235} \hspace{2.2em} & \underline{0.1659} \hspace{2.2em} & \underline{0.1231} \hspace{2.2em} \\
& $\mathsf{LL}$ & -0.1625 \tiny{$\pm$ 0.0054} & -0.0695 \tiny{$\pm$ 0.0049} & -0.3015 \tiny{$\pm$ 0.0042} & -0.1955 \tiny{$\pm$ 0.0038} \\
& $\mathsf{FV}$ & -0.0700 \hspace{2.2em} & -0.0228 \hspace{2.2em} & -0.1734 \hspace{2.2em} & -0.1286 \hspace{2.2em} \\
\midrule
\multirow{5}{*}{MoCo} & $\mathsf{NC}_{100}$ & \textbf{0.3739} \tiny{$\pm$ 0.0068} & \textbf{0.3181} \tiny{$\pm$ 0.0067} & \textbf{0.3273} \tiny{$\pm$ 0.0042} & \textbf{0.2816} \tiny{$\pm$ 0.0032} \\
& $\amone$ & -0.0741 \tiny{$\pm$ 0.0049} & -0.0506 \tiny{$\pm$ 0.0046} & -0.0305 \tiny{$\pm$ 0.0060} & 0.0137 \tiny{$\pm$ 0.0068} \\
& $\mathsf{Norm}$ & \underline{0.3238} \hspace{2.2em} & \underline{0.2708} \hspace{2.2em} & \underline{0.2186} \hspace{2.2em} & \underline{0.1432} \hspace{2.2em} \\
& $\mathsf{LL}$ & -0.1630 \tiny{$\pm$ 0.0152} & -0.1421 \tiny{$\pm$ 0.0145} & -0.2103 \tiny{$\pm$ 0.0284} & -0.1786 \tiny{$\pm$ 0.0246} \\
& $\mathsf{FV}$ & -0.3410 \hspace{2.2em} & -0.2899 \hspace{2.2em} & -0.3155 \hspace{2.2em} & -0.2432 \hspace{2.2em} \\
\bottomrule
\end{tabular}
~
\vspace{1em}
~

\begin{tabular}{cc|cccc}
\toprule
\multirow{3}{*}{\makecell[c]{Pretraining \\ Algorithms}} & \multirow{3}{*}{Method} & \multicolumn{4}{c}{ResNet-50}  \\  \cline{3-6} 
 & & \multicolumn{2}{c}{\texttt{CIFAR-10}}  & \multicolumn{2}{c}{\texttt{CIFAR-100}} \\
 & & Entropy  & Brier & Entropy & Brier  \\
\midrule
\multirow{5}{*}{SimCLR} & $\mathsf{NC}_{100}$ & \textbf{0.3366} \tiny{$\pm$ 0.0062} & \textbf{0.2833} \tiny{$\pm$ 0.0055} & \textbf{0.2635} \tiny{$\pm$ 0.0040} & \textbf{0.2203} \tiny{$\pm$ 0.0033} \\
& $\amone$ & -0.0445 \tiny{$\pm$ 0.0056} & -0.0337 \tiny{$\pm$ 0.0052} & -0.0328 \tiny{$\pm$ 0.0056} & 0.0053 \tiny{$\pm$ 0.0055} \\
& $\mathsf{Norm}$ & \underline{0.2789} \hspace{2.2em} & \underline{0.2385} \hspace{2.2em} & \underline{0.1311} \hspace{2.2em} & \underline{0.0710} \hspace{2.2em} \\
& $\mathsf{LL}$ & 0.0047 \tiny{$\pm$ 0.0220} & 0.0053 \tiny{$\pm$ 0.0205} & -0.1066 \tiny{$\pm$ 0.0104} & -0.0739 \tiny{$\pm$ 0.0097} \\
& $\mathsf{FV}$ & -0.3155 \hspace{2.2em} & -0.2706 \hspace{2.2em} & -0.2472 \hspace{2.2em} & -0.1833 \hspace{2.2em} \\
\midrule
\multirow{5}{*}{BYOL} & $\mathsf{NC}_{100}$ & \textbf{0.3704} \tiny{$\pm$ 0.0021} & \textbf{0.2291} \tiny{$\pm$ 0.0012} & \textbf{0.3194} \tiny{$\pm$ 0.0061} & \textbf{0.1737} \tiny{$\pm$ 0.0034} \\
& $\amone$ & -0.2229 \tiny{$\pm$ 0.0023} & -0.1601 \tiny{$\pm$ 0.0011} & -0.3281 \tiny{$\pm$ 0.0021} & -0.1754 \tiny{$\pm$ 0.0013} \\
& $\mathsf{Norm}$ & \underline{0.1990} \hspace{2.2em} & \underline{0.1184} \hspace{2.2em} & \underline{0.1990} \hspace{2.2em} & \underline{0.1476} \hspace{2.2em} \\
& $\mathsf{LL}$ & -0.2289 \tiny{$\pm$ 0.0035} & -0.1691 \tiny{$\pm$ 0.0012} & -0.3320 \tiny{$\pm$ 0.0025} & -0.1778 \tiny{$\pm$ 0.0011} \\
& $\mathsf{FV}$ & -0.2001 \hspace{2.2em} & -0.1177 \hspace{2.2em} & -0.1881 \hspace{2.2em} & -0.1433 \hspace{2.2em} \\
\midrule
\multirow{5}{*}{MoCo} & $\mathsf{NC}_{100}$ & \underline{0.3586} \tiny{$\pm$ 0.0049} & \underline{0.3038} \tiny{$\pm$ 0.0060} & \textbf{0.2788} \tiny{$\pm$ 0.0089} & \textbf{0.2247} \tiny{$\pm$ 0.0067} \\
& $\amone$ & -0.1344 \tiny{$\pm$ 0.0046} & -0.1149 \tiny{$\pm$ 0.0047} & -0.0778 \tiny{$\pm$ 0.0067} & -0.0351 \tiny{$\pm$ 0.0070} \\
& $\mathsf{Norm}$ & \textbf{0.3798} \hspace{2.2em} & \textbf{0.3294} \hspace{2.2em} & \underline{0.2172} \hspace{2.2em} & \underline{0.1659} \hspace{2.2em} \\
& $\mathsf{LL}$ & -0.1138 \tiny{$\pm$ 0.0090} & -0.1032 \tiny{$\pm$ 0.0093} & -0.1433 \tiny{$\pm$ 0.0093} & -0.1131 \tiny{$\pm$ 0.0094} \\
& $\mathsf{FV}$ & -0.3707 \hspace{2.2em} & -0.3201 \hspace{2.2em} & -0.2604 \hspace{2.2em} & -0.2145 \hspace{2.2em} \\
\bottomrule
\end{tabular}
\label{tab:comparison_unnormalized}
\end{table*}
\clearpage

\begin{table*}[t!] 
\centering
\fontsize{9pt}{9pt}\selectfont
\caption{Comparison on the correlation between our neighborhood consistency ($\mathsf{NC}_{100}$) and baseline methods in relation to performance on transfer learning tasks from $\texttt{TinyImagenet}$ to $\texttt{CIFAR-10}$, $\texttt{CIFAR-100}$, and $\texttt{STL-10}$.}
\renewcommand{\arraystretch}{1.25}
\begin{tabular}{cc|cccccc}
\toprule
\multirow{4}{*}{\makecell[c]{Pretraining \\ Algorithms}} & \multirow{4}{*}{Method} & \multicolumn{6}{c}{ResNet-18} \\  \cline{3-8} 
 & & \multicolumn{6}{c}{$\texttt{TinyImagenet} \rightarrow$} \\
  & & \multicolumn{2}{c}{\texttt{CIFAR-10}}  & \multicolumn{2}{c}{\texttt{CIFAR-100}} & \multicolumn{2}{c}{\texttt{STL-10}} \\
  & & Entropy  & Brier & Entropy & Brier & Entropy  & Brier \\
\midrule
\multirow{5}{*}{SimCLR} & $\mathsf{NC}_{100}$ & \underline{0.1729} \tiny{$\pm$ 0.0090} & \underline{0.1292} \tiny{$\pm$ 0.0089} & \textbf{0.1680} \tiny{$\pm$ 0.0032} & \textbf{0.1343} \tiny{$\pm$ 0.0032} & \textbf{0.2176} \tiny{$\pm$ 0.0112} & \textbf{0.1513} \tiny{$\pm$ 0.0106} \\
& $\amone$ & 0.1311 \tiny{$\pm$ 0.0127} & 0.0933 \tiny{$\pm$ 0.0115} & 0.0138 \tiny{$\pm$ 0.0071} & -0.0251 \tiny{$\pm$ 0.0049} & 0.1098 \tiny{$\pm$ 0.0151} & 0.0520 \tiny{$\pm$ 0.0116} \\
& $\mathsf{Norm}$ & \textbf{0.2124} \hspace{2.2em} & \textbf{0.1642} \hspace{2.2em} & \underline{0.1220} \hspace{2.2em} & \underline{0.0507} \hspace{2.2em} & \underline{0.1641} \hspace{2.2em} & \underline{0.0852} \hspace{2.2em} \\
& $\mathsf{LL}$ & 0.0369 \tiny{$\pm$ 0.0063} & 0.0168 \tiny{$\pm$ 0.0056} & -0.0991 \tiny{$\pm$ 0.0065} & -0.1269 \tiny{$\pm$ 0.0050} & 0.0205 \tiny{$\pm$ 0.0035} & -0.0180 \tiny{$\pm$ 0.0033} \\
& $\mathsf{FV}$ & -0.0919 \hspace{2.2em} & -0.0779 \hspace{2.2em} & -0.1660 \hspace{2.2em} & -0.1668 \hspace{2.2em} & -0.1872 \hspace{2.2em} & -0.1593 \hspace{2.2em} \\
\midrule
\multirow{5}{*}{BYOL} & $\mathsf{NC}_{100}$ & \textbf{0.2557} \tiny{$\pm$ 0.0080} & \textbf{0.1431} \tiny{$\pm$ 0.0049} & \textbf{0.2352} \tiny{$\pm$ 0.0027} & \textbf{0.1525} \tiny{$\pm$ 0.0028} & \underline{0.0374} \tiny{$\pm$ 0.0101} & \underline{0.0232} \tiny{$\pm$ 0.0052} \\
& $\amone$ & -0.1458 \tiny{$\pm$ 0.0073} & -0.0923 \tiny{$\pm$ 0.0048} & -0.2716 \tiny{$\pm$ 0.0053} & -0.1525 \tiny{$\pm$ 0.0057} & -0.2893 \tiny{$\pm$ 0.0089} & -0.2178 \tiny{$\pm$ 0.0071} \\
& $\mathsf{Norm}$ & \underline{0.1483} \hspace{2.2em} & \underline{0.0749} \hspace{2.2em} & \underline{0.1879} \hspace{2.2em} & \underline{0.1315} \hspace{2.2em} & \textbf{0.1021} \hspace{2.2em} & \textbf{0.0772} \hspace{2.2em} \\
& $\mathsf{LL}$ & -0.2106 \tiny{$\pm$ 0.0042} & -0.1248 \tiny{$\pm$ 0.0027} & -0.3418 \tiny{$\pm$ 0.0022} & -0.2006 \tiny{$\pm$ 0.0023} & -0.3319 \tiny{$\pm$ 0.0021} & -0.2472 \tiny{$\pm$ 0.0024} \\
& $\mathsf{FV}$ & -0.1386 \hspace{2.2em} & -0.0713 \hspace{2.2em} & -0.1874 \hspace{2.2em} & -0.1202 \hspace{2.2em} & -0.1171 \hspace{2.2em} & -0.0790 \hspace{2.2em} \\
\midrule
\multirow{5}{*}{MoCo} & $\mathsf{NC}_{100}$ & \underline{0.1880} \tiny{$\pm$ 0.0080} & \underline{0.1313} \tiny{$\pm$ 0.0071} & \textbf{0.1797} \tiny{$\pm$ 0.0054} & \textbf{0.1446} \tiny{$\pm$ 0.0056} & \textbf{0.2263} \tiny{$\pm$ 0.0128} & \textbf{0.1463} \tiny{$\pm$ 0.0129} \\
& $\amone$ & 0.1213 \tiny{$\pm$ 0.0097} & 0.0718 \tiny{$\pm$ 0.0093} & 0.0311 \tiny{$\pm$ 0.0060} & -0.0143 \tiny{$\pm$ 0.0057} & 0.0993 \tiny{$\pm$ 0.0142} & 0.0223 \tiny{$\pm$ 0.0101} \\
& $\mathsf{Norm}$ & \textbf{0.1930} \hspace{2.2em} & \textbf{0.1355} \hspace{2.2em} & \underline{0.1321} \hspace{2.2em} & \underline{0.0531} \hspace{2.2em} & \underline{0.1327} \hspace{2.2em} & \underline{0.0443} \hspace{2.2em} \\
& $\mathsf{LL}$ & 0.0158 \tiny{$\pm$ 0.0087} & -0.0082 \tiny{$\pm$ 0.0074} & -0.1046 \tiny{$\pm$ 0.0048} & -0.1386 \tiny{$\pm$ 0.0040} & -0.0083 \tiny{$\pm$ 0.0074} & -0.0577 \tiny{$\pm$ 0.0067} \\
& $\mathsf{FV}$ & -0.0793 \hspace{2.2em} & -0.0701 \hspace{2.2em} & -0.1499 \hspace{2.2em} & -0.1658 \hspace{2.2em} & -0.1615 \hspace{2.2em} & -0.1577 \hspace{2.2em} \\
\bottomrule
\end{tabular}

~
\vspace{1em}
~

\begin{tabular}{cc|cccccc}
\toprule
\multirow{4}{*}{\makecell[c]{Pretraining \\ Algorithms}} & \multirow{4}{*}{Method} & \multicolumn{6}{c}{ResNet-50} \\  \cline{3-8} 
 & & \multicolumn{6}{c}{$\texttt{TinyImagenet} \rightarrow$} \\
  & & \multicolumn{2}{c}{\texttt{CIFAR-10}}  & \multicolumn{2}{c}{\texttt{CIFAR-100}} & \multicolumn{2}{c}{\texttt{STL-10}} \\
  & & Entropy  & Brier & Entropy & Brier & Entropy  & Brier \\
\midrule
\multirow{5}{*}{SimCLR} & $\mathsf{NC}_{100}$ & \underline{0.1224} \tiny{$\pm$ 0.0061} & \underline{0.0804} \tiny{$\pm$ 0.0048} & \textbf{0.1467} \tiny{$\pm$ 0.0075} & \textbf{0.1194} \tiny{$\pm$ 0.0062} & \textbf{0.1866} \tiny{$\pm$ 0.0062} & \textbf{0.1169} \tiny{$\pm$ 0.0055} \\
& $\amone$ & 0.0894 \tiny{$\pm$ 0.0077} & 0.0531 \tiny{$\pm$ 0.0081} & -0.0246 \tiny{$\pm$ 0.0072} & -0.0627 \tiny{$\pm$ 0.0066} & 0.0340 \tiny{$\pm$ 0.0142} & -0.0167 \tiny{$\pm$ 0.0117} \\
& $\mathsf{Norm}$ & \textbf{0.1549} \hspace{2.2em} & \textbf{0.1098} \hspace{2.2em} & \underline{0.0895} \hspace{2.2em} & \underline{0.0177} \hspace{2.2em} & \underline{0.0608} \hspace{2.2em} & \underline{-0.0056} \hspace{2.2em} \\
& $\mathsf{LL}$ & 0.0225 \tiny{$\pm$ 0.0077} & 0.0027 \tiny{$\pm$ 0.0062} & -0.1192 \tiny{$\pm$ 0.0051} & -0.1443 \tiny{$\pm$ 0.0041} & -0.0149 \tiny{$\pm$ 0.0062} & -0.0495 \tiny{$\pm$ 0.0043} \\
& $\mathsf{FV}$ & -0.1078 \hspace{2.2em} & -0.0878 \hspace{2.2em} & -0.1867 \hspace{2.2em} & -0.1840 \hspace{2.2em} & -0.2178 \hspace{2.2em} & -0.1829 \hspace{2.2em} \\
\midrule
\multirow{5}{*}{BYOL} & $\mathsf{NC}_{100}$ & \textbf{0.2561} \tiny{$\pm$ 0.0066} & \textbf{0.1258} \tiny{$\pm$ 0.0037} & \textbf{0.2600} \tiny{$\pm$ 0.0028} & \textbf{0.1005} \tiny{$\pm$ 0.0020} & \textbf{0.1410} \tiny{$\pm$ 0.0054} & 0.0123 \tiny{$\pm$ 0.0038} \\
& $\amone$ & -0.1885 \tiny{$\pm$ 0.0016} & -0.1274 \tiny{$\pm$ 0.0016} & -0.3075 \tiny{$\pm$ 0.0019} & -0.1679 \tiny{$\pm$ 0.0040} & -0.2371 \tiny{$\pm$ 0.0022} & -0.1925 \tiny{$\pm$ 0.0020} \\
& $\mathsf{Norm}$ & \underline{0.1620} \hspace{2.2em} & \underline{0.0918} \hspace{2.2em} & \underline{0.1082} \hspace{2.2em} & \underline{0.0816} \hspace{2.2em} & \underline{0.0963} \hspace{2.2em} & \textbf{0.0380} \hspace{2.2em} \\
& $\mathsf{LL}$ & -0.3130 \tiny{$\pm$ 0.0040} & -0.1763 \tiny{$\pm$ 0.0029} & -0.3671 \tiny{$\pm$ 0.0031} & -0.1797 \tiny{$\pm$ 0.0019} & -0.3235 \tiny{$\pm$ 0.0094} & -0.2002 \tiny{$\pm$ 0.0048} \\
& $\mathsf{FV}$ & -0.1668 \hspace{2.2em} & -0.0737 \hspace{2.2em} & -0.1441 \hspace{2.2em} & -0.0705 \hspace{2.2em} & -0.0584 \hspace{2.2em} & \underline{0.0205} \hspace{2.2em} \\
\midrule
\multirow{5}{*}{MoCo} & $\mathsf{NC}_{100}$ & \underline{0.1927} \tiny{$\pm$ 0.0045} & \underline{0.1300} \tiny{$\pm$ 0.0040} & \textbf{0.1962} \tiny{$\pm$ 0.0019} & \textbf{0.1560} \tiny{$\pm$ 0.0022} & \underline{0.2170} \tiny{$\pm$ 0.0072} & \underline{0.1433} \tiny{$\pm$ 0.0046} \\
& $\amone$ & 0.1751 \tiny{$\pm$ 0.0081} & 0.1109 \tiny{$\pm$ 0.0077} & 0.0759 \tiny{$\pm$ 0.0054} & 0.0257 \tiny{$\pm$ 0.0045} & 0.1696 \tiny{$\pm$ 0.0106} & 0.0710 \tiny{$\pm$ 0.0089} \\
& $\mathsf{Norm}$ & \textbf{0.2503} \hspace{2.2em} & \textbf{0.1803} \hspace{2.2em} & \underline{0.1838} \hspace{2.2em} & \underline{0.1034} \hspace{2.2em} & \textbf{0.2524} \hspace{2.2em} & \textbf{0.1524} \hspace{2.2em} \\
& $\mathsf{LL}$ & 0.0458 \tiny{$\pm$ 0.0108} & 0.0145 \tiny{$\pm$ 0.0098} & -0.0916 \tiny{$\pm$ 0.0060} & -0.1239 \tiny{$\pm$ 0.0055} & 0.0133 \tiny{$\pm$ 0.0036} & -0.0502 \tiny{$\pm$ 0.0040} \\
& $\mathsf{FV}$ & -0.0560 \hspace{2.2em} & -0.0400 \hspace{2.2em} & -0.1510 \hspace{2.2em} & -0.1578 \hspace{2.2em} & -0.1735 \hspace{2.2em} & -0.1577 \hspace{2.2em} \\
\bottomrule
\end{tabular}
\label{tab:trasnfer_full}
\end{table*}
\clearpage

\begin{table*}[t!] 
\centering
\fontsize{9pt}{9pt}\selectfont
\caption{Comparison on the correlation between our neighborhood consistency ($\mathsf{NC}_{100}$) and baseline methods in relation to performance on transfer learning tasks from $\texttt{TinyImagenet}$ to $\texttt{CIFAR-10}$, $\texttt{CIFAR-100}$, and $\texttt{STL-10}$, for the \textbf{multi-class} downstream classification tasks.}
\renewcommand{\arraystretch}{1.25}
\begin{tabular}{cc|cccccc}
\toprule
\multirow{4}{*}{\makecell[c]{Pretraining \\ Algorithms}} & \multirow{4}{*}{Method} & \multicolumn{6}{c}{ResNet-18} \\  \cline{3-8} 
 & & \multicolumn{6}{c}{$\texttt{TinyImagenet} \rightarrow$} \\
  & & \multicolumn{2}{c}{\texttt{CIFAR-10}}  & \multicolumn{2}{c}{\texttt{CIFAR-100}} & \multicolumn{2}{c}{\texttt{STL-10}} \\
  & & Entropy  & Brier & Entropy & Brier & Entropy  & Brier \\
\midrule
\multirow{5}{*}{SimCLR} & $\mathsf{NC}_{100}$ & \underline{0.1673} \tiny{$\pm$ 0.0080} & \underline{0.1230} \tiny{$\pm$ 0.0073} & \underline{0.1486} \tiny{$\pm$ 0.0046} & \textbf{0.1031} \tiny{$\pm$ 0.0018} & \textbf{0.2012} \tiny{$\pm$ 0.0106} & \textbf{0.1271} \tiny{$\pm$ 0.0100} \\
& $\amone$ & 0.1331 \tiny{$\pm$ 0.0122} & 0.0982 \tiny{$\pm$ 0.0116} & 0.0082 \tiny{$\pm$ 0.0064} & -0.0162 \tiny{$\pm$ 0.0051} & 0.0633 \tiny{$\pm$ 0.0122} & 0.0382 \tiny{$\pm$ 0.0107} \\
& $\mathsf{Norm}$ & \textbf{0.2294} \hspace{2.2em} & \textbf{0.1654} \hspace{2.2em} & \textbf{0.1527} \hspace{2.2em} & \underline{0.0348} \hspace{2.2em} & \underline{0.1431} \hspace{2.2em} & \underline{0.0624} \hspace{2.2em} \\
& $\mathsf{LL}$ & 0.0428 \tiny{$\pm$ 0.0065} & 0.0248 \tiny{$\pm$ 0.0057} & -0.0976 \tiny{$\pm$ 0.0077} & -0.0986 \tiny{$\pm$ 0.0048} & -0.0271 \tiny{$\pm$ 0.0030} & -0.0224 \tiny{$\pm$ 0.0035} \\
& $\mathsf{FV}$ & -0.0865 \hspace{2.2em} & -0.0627 \hspace{2.2em} & -0.1452 \hspace{2.2em} & -0.1268 \hspace{2.2em} & -0.2071 \hspace{2.2em} & -0.1423 \hspace{2.2em} \\
\midrule
\multirow{5}{*}{BYOL} & $\mathsf{NC}_{100}$ & \textbf{0.3447} \tiny{$\pm$ 0.0110} & \textbf{0.0960} \tiny{$\pm$ 0.0039} & \textbf{0.3263} \tiny{$\pm$ 0.0039} & \underline{0.0615} \tiny{$\pm$ 0.0029} & \underline{0.0867} \tiny{$\pm$ 0.0115} & \underline{-0.0033} \tiny{$\pm$ 0.0044} \\
& $\amone$ & -0.2253 \tiny{$\pm$ 0.0086} & -0.0370 \tiny{$\pm$ 0.0033} & -0.3650 \tiny{$\pm$ 0.0054} & -0.0531 \tiny{$\pm$ 0.0047} & -0.3468 \tiny{$\pm$ 0.0088} & -0.1432 \tiny{$\pm$ 0.0079} \\
& $\mathsf{Norm}$ & \underline{0.1897} \hspace{2.2em} & \underline{0.0630} \hspace{2.2em} & \underline{0.2239} \hspace{2.2em} & \textbf{0.0638} \hspace{2.2em} & \textbf{0.1282} \hspace{2.2em} & \textbf{0.0488} \hspace{2.2em} \\
& $\mathsf{LL}$ & -0.3121 \tiny{$\pm$ 0.0058} & -0.0680 \tiny{$\pm$ 0.0015} & -0.4643 \tiny{$\pm$ 0.0025} & -0.0761 \tiny{$\pm$ 0.0015} & -0.4062 \tiny{$\pm$ 0.0018} & -0.1655 \tiny{$\pm$ 0.0022} \\
& $\mathsf{FV}$ & -0.1990 \hspace{2.2em} & -0.0564 \hspace{2.2em} & -0.2246 \hspace{2.2em} & -0.0565 \hspace{2.2em} & -0.1513 \hspace{2.2em} & -0.0462 \hspace{2.2em} \\
\midrule
\multirow{5}{*}{MoCo} & $\mathsf{NC}_{100}$ & \underline{0.1674} \tiny{$\pm$ 0.0074} & \underline{0.1184} \tiny{$\pm$ 0.0062} & \textbf{0.1631} \tiny{$\pm$ 0.0053} & \textbf{0.1146} \tiny{$\pm$ 0.0054} & \textbf{0.2047} \tiny{$\pm$ 0.0136} & \textbf{0.1110} \tiny{$\pm$ 0.0122} \\
& $\amone$ & 0.0927 \tiny{$\pm$ 0.0084} & 0.0649 \tiny{$\pm$ 0.0089} & 0.0019 \tiny{$\pm$ 0.0053} & -0.0195 \tiny{$\pm$ 0.0055} & 0.0443 \tiny{$\pm$ 0.0120} & 0.0022 \tiny{$\pm$ 0.0082} \\
& $\mathsf{Norm}$ & \textbf{0.1931} \hspace{2.2em} & \textbf{0.1304} \hspace{2.2em} & \underline{0.1458} \hspace{2.2em} & \underline{0.0221} \hspace{2.2em} & \underline{0.1163} \hspace{2.2em} & \underline{0.0235} \hspace{2.2em} \\
& $\mathsf{LL}$ & -0.0049 \tiny{$\pm$ 0.0083} & -0.0086 \tiny{$\pm$ 0.0075} & -0.1335 \tiny{$\pm$ 0.0055} & -0.1229 \tiny{$\pm$ 0.0039} & -0.0669 \tiny{$\pm$ 0.0078} & -0.0620 \tiny{$\pm$ 0.0072} \\
& $\mathsf{FV}$ & -0.0811 \hspace{2.2em} & -0.0561 \hspace{2.2em} & -0.1582 \hspace{2.2em} & -0.1415 \hspace{2.2em} & -0.1919 \hspace{2.2em} & -0.1371 \hspace{2.2em} \\
\bottomrule
\end{tabular}

~
\vspace{1em}
~

\begin{tabular}{cc|cccccc}
\toprule
\multirow{4}{*}{\makecell[c]{Pretraining \\ Algorithms}} & \multirow{4}{*}{Method} & \multicolumn{6}{c}{ResNet-50} \\  \cline{3-8} 
 & & \multicolumn{6}{c}{$\texttt{TinyImagenet} \rightarrow$} \\
  & & \multicolumn{2}{c}{\texttt{CIFAR-10}}  & \multicolumn{2}{c}{\texttt{CIFAR-100}} & \multicolumn{2}{c}{\texttt{STL-10}} \\
  & & Entropy  & Brier & Entropy & Brier & Entropy  & Brier \\
\midrule
\multirow{5}{*}{SimCLR} & $\mathsf{NC}_{100}$ & \underline{0.1205} \tiny{$\pm$ 0.0055} & \underline{0.0788} \tiny{$\pm$ 0.0037} & \textbf{0.1350} \tiny{$\pm$ 0.0079} & \textbf{0.0877} \tiny{$\pm$ 0.0053} & \textbf{0.1693} \tiny{$\pm$ 0.0056} & \textbf{0.0983} \tiny{$\pm$ 0.0042} \\
& $\amone$ & 0.0824 \tiny{$\pm$ 0.0073} & 0.0599 \tiny{$\pm$ 0.0083} & -0.0303 \tiny{$\pm$ 0.0058} & -0.0482 \tiny{$\pm$ 0.0059} & -0.0197 \tiny{$\pm$ 0.0119} & -0.0260 \tiny{$\pm$ 0.0105} \\
& $\mathsf{Norm}$ & \textbf{0.1597} \hspace{2.2em} & \textbf{0.1078} \hspace{2.2em} & \underline{0.1218} \hspace{2.2em} & \underline{0.0022} \hspace{2.2em} & \underline{0.0366} \hspace{2.2em} & \underline{-0.0194} \hspace{2.2em} \\
& $\mathsf{LL}$ & 0.0147 \tiny{$\pm$ 0.0080} & 0.0070 \tiny{$\pm$ 0.0062} & -0.1214 \tiny{$\pm$ 0.0064} & -0.1115 \tiny{$\pm$ 0.0043} & -0.0671 \tiny{$\pm$ 0.0055} & -0.0533 \tiny{$\pm$ 0.0038} \\
& $\mathsf{FV}$ & -0.1159 \hspace{2.2em} & -0.0776 \hspace{2.2em} & -0.1817 \hspace{2.2em} & -0.1397 \hspace{2.2em} & -0.2504 \hspace{2.2em} & -0.1702 \hspace{2.2em} \\
\midrule
\multirow{5}{*}{BYOL} & $\mathsf{NC}_{100}$ & \textbf{0.4270} \tiny{$\pm$ 0.0101} & \textbf{0.1025} \tiny{$\pm$ 0.0034} & \textbf{0.4118} \tiny{$\pm$ 0.0071} & \underline{0.0168} \tiny{$\pm$ 0.0012} & \textbf{0.2684} \tiny{$\pm$ 0.0090} & -0.0226 \tiny{$\pm$ 0.0032} \\
& $\amone$ & -0.1754 \tiny{$\pm$ 0.0028} & -0.0533 \tiny{$\pm$ 0.0018} & -0.2933 \tiny{$\pm$ 0.0030} & -0.0617 \tiny{$\pm$ 0.0034} & -0.2536 \tiny{$\pm$ 0.0031} & -0.1122 \tiny{$\pm$ 0.0021} \\
& $\mathsf{Norm}$ & \underline{0.1403} \hspace{2.2em} & \underline{0.0674} \hspace{2.2em} & \underline{0.0850} \hspace{2.2em} & \textbf{0.0276} \hspace{2.2em} & \underline{0.1012} \hspace{2.2em} & \underline{0.0055} \hspace{2.2em} \\
& $\mathsf{LL}$ & -0.3734 \tiny{$\pm$ 0.0104} & -0.1033 \tiny{$\pm$ 0.0022} & -0.4438 \tiny{$\pm$ 0.0044} & -0.0557 \tiny{$\pm$ 0.0019} & -0.3988 \tiny{$\pm$ 0.0164} & -0.1121 \tiny{$\pm$ 0.0031} \\
& $\mathsf{FV}$ & -0.2530 \hspace{2.2em} & -0.0784 \hspace{2.2em} & -0.2430 \hspace{2.2em} & -0.0272 \hspace{2.2em} & -0.1099 \hspace{2.2em} & \textbf{0.0267} \hspace{2.2em} \\
\midrule
\multirow{5}{*}{MoCo} & $\mathsf{NC}_{100}$ & \underline{0.1738} \tiny{$\pm$ 0.0051} & \underline{0.1109} \tiny{$\pm$ 0.0041} & \underline{0.1854} \tiny{$\pm$ 0.0009} & \textbf{0.1191} \tiny{$\pm$ 0.0021} & \underline{0.1999} \tiny{$\pm$ 0.0065} & \underline{0.1030} \tiny{$\pm$ 0.0039} \\
& $\amone$ & 0.1372 \tiny{$\pm$ 0.0069} & 0.0979 \tiny{$\pm$ 0.0075} & 0.0415 \tiny{$\pm$ 0.0043} & 0.0077 \tiny{$\pm$ 0.0038} & 0.1079 \tiny{$\pm$ 0.0094} & 0.0350 \tiny{$\pm$ 0.0074} \\
& $\mathsf{Norm}$ & \textbf{0.2553} \hspace{2.2em} & \textbf{0.1715} \hspace{2.2em} & \textbf{0.2025} \hspace{2.2em} & \underline{0.0562} \hspace{2.2em} & \textbf{0.2469} \hspace{2.2em} & \textbf{0.1174} \hspace{2.2em} \\
& $\mathsf{LL}$ & 0.0184 \tiny{$\pm$ 0.0111} & 0.0156 \tiny{$\pm$ 0.0105} & -0.1330 \tiny{$\pm$ 0.0082} & -0.1153 \tiny{$\pm$ 0.0054} & -0.0595 \tiny{$\pm$ 0.0050} & -0.0618 \tiny{$\pm$ 0.0048} \\
& $\mathsf{FV}$ & -0.0508 \hspace{2.2em} & -0.0204 \hspace{2.2em} & -0.1617 \hspace{2.2em} & -0.1351 \hspace{2.2em} & -0.2007 \hspace{2.2em} & -0.1293 \hspace{2.2em} \\
\bottomrule
\end{tabular}
\label{tab:trasnfer_multi}
\end{table*}
\clearpage

\begin{table*}[t!] 
\centering
\fontsize{9pt}{9pt}\selectfont
\caption{Comparison on the correlation between our neighborhood consistency ($\mathsf{NC}_{100}$) and baseline methods in relation to performance on transfer learning tasks from $\texttt{TinyImagenet}$ to $\texttt{CIFAR-10}$, $\texttt{CIFAR-100}$, and $\texttt{STL-10}$, for the \textbf{unnormalized} representation.}
\renewcommand{\arraystretch}{1.25}
\begin{tabular}{cc|cccccc}
\toprule
\multirow{4}{*}{\makecell[c]{Pretraining \\ Algorithms}} & \multirow{4}{*}{Method} & \multicolumn{6}{c}{ResNet-18} \\  \cline{3-8} 
 & & \multicolumn{6}{c}{$\texttt{TinyImagenet} \rightarrow$} \\
  & & \multicolumn{2}{c}{\texttt{CIFAR-10}}  & \multicolumn{2}{c}{\texttt{CIFAR-100}} & \multicolumn{2}{c}{\texttt{STL-10}} \\
  & & Entropy  & Brier & Entropy & Brier & Entropy  & Brier \\
\midrule
\multirow{5}{*}{SimCLR} & $\mathsf{NC}_{100}$ & \underline{0.1392} \tiny{$\pm$ 0.0072} & \underline{0.1025} \tiny{$\pm$ 0.0067} & \textbf{0.1520} \tiny{$\pm$ 0.0052} & \textbf{0.1218} \tiny{$\pm$ 0.0057} & \underline{0.1540} \tiny{$\pm$ 0.0103} & \textbf{0.1054} \tiny{$\pm$ 0.0080} \\
& $\amone$ & -0.1318 \tiny{$\pm$ 0.0092} & -0.1101 \tiny{$\pm$ 0.0090} & -0.1067 \tiny{$\pm$ 0.0079} & -0.0671 \tiny{$\pm$ 0.0053} & -0.1211 \tiny{$\pm$ 0.0120} & -0.0838 \tiny{$\pm$ 0.0086} \\
& $\mathsf{Norm}$ & \textbf{0.2124} \hspace{2.2em} & \textbf{0.1642} \hspace{2.2em} & \underline{0.1220} \hspace{2.2em} & \underline{0.0507} \hspace{2.2em} & \textbf{0.1641} \hspace{2.2em} & \underline{0.0852} \hspace{2.2em} \\
& $\mathsf{LL}$ & -0.2160 \tiny{$\pm$ 0.0245} & -0.1896 \tiny{$\pm$ 0.0169} & -0.2046 \tiny{$\pm$ 0.0167} & -0.1679 \tiny{$\pm$ 0.0121} & -0.1988 \tiny{$\pm$ 0.0330} & -0.1736 \tiny{$\pm$ 0.0187} \\
& $\mathsf{FV}$ & -0.2816 \hspace{2.2em} & -0.2204 \hspace{2.2em} & -0.2308 \hspace{2.2em} & -0.1655 \hspace{2.2em} & -0.2912 \hspace{2.2em} & -0.2040 \hspace{2.2em} \\
\midrule
\multirow{5}{*}{BYOL} & $\mathsf{NC}_{100}$ & \textbf{0.2551} \tiny{$\pm$ 0.0078} & \textbf{0.1437} \tiny{$\pm$ 0.0047} & \textbf{0.2317} \tiny{$\pm$ 0.0026} & \textbf{0.1505} \tiny{$\pm$ 0.0026} & \underline{0.0354} \tiny{$\pm$ 0.0103} & \underline{0.0234} \tiny{$\pm$ 0.0056} \\
& $\amone$ & -0.1759 \tiny{$\pm$ 0.0052} & -0.1076 \tiny{$\pm$ 0.0032} & -0.2996 \tiny{$\pm$ 0.0045} & -0.1732 \tiny{$\pm$ 0.0046} & -0.3114 \tiny{$\pm$ 0.0080} & -0.2346 \tiny{$\pm$ 0.0061} \\
& $\mathsf{Norm}$ & \underline{0.1483} \hspace{2.2em} & \underline{0.0749} \hspace{2.2em} & \underline{0.1879} \hspace{2.2em} & \underline{0.1315} \hspace{2.2em} & \textbf{0.1021} \hspace{2.2em} & \textbf{0.0772} \hspace{2.2em} \\
& $\mathsf{LL}$ & -0.2346 \tiny{$\pm$ 0.0079} & -0.1338 \tiny{$\pm$ 0.0034} & -0.3422 \tiny{$\pm$ 0.0046} & -0.2052 \tiny{$\pm$ 0.0028} & -0.3240 \tiny{$\pm$ 0.0070} & -0.2454 \tiny{$\pm$ 0.0039} \\
& $\mathsf{FV}$ & -0.1461 \hspace{2.2em} & -0.0743 \hspace{2.2em} & -0.1886 \hspace{2.2em} & -0.1296 \hspace{2.2em} & -0.1100 \hspace{2.2em} & -0.0814 \hspace{2.2em} \\
\midrule
\multirow{5}{*}{MoCo} & $\mathsf{NC}_{100}$ & \underline{0.1728} \tiny{$\pm$ 0.0075} & \underline{0.1227} \tiny{$\pm$ 0.0064} & \textbf{0.1738} \tiny{$\pm$ 0.0042} & \textbf{0.1393} \tiny{$\pm$ 0.0041} & \textbf{0.1886} \tiny{$\pm$ 0.0108} & \textbf{0.1229} \tiny{$\pm$ 0.0092} \\
& $\amone$ & -0.0949 \tiny{$\pm$ 0.0081} & -0.0767 \tiny{$\pm$ 0.0084} & -0.0948 \tiny{$\pm$ 0.0071} & -0.0570 \tiny{$\pm$ 0.0058} & -0.0650 \tiny{$\pm$ 0.0112} & -0.0440 \tiny{$\pm$ 0.0076} \\
& $\mathsf{Norm}$ & \textbf{0.1930} \hspace{2.2em} & \textbf{0.1355} \hspace{2.2em} & \underline{0.1321} \hspace{2.2em} & \underline{0.0531} \hspace{2.2em} & \underline{0.1327} \hspace{2.2em} & \underline{0.0443} \hspace{2.2em} \\
& $\mathsf{LL}$ & -0.1594 \tiny{$\pm$ 0.0210} & -0.1421 \tiny{$\pm$ 0.0115} & -0.1662 \tiny{$\pm$ 0.0158} & -0.1439 \tiny{$\pm$ 0.0061} & -0.1339 \tiny{$\pm$ 0.0352} & -0.1382 \tiny{$\pm$ 0.0179} \\
& $\mathsf{FV}$ & -0.2590 \hspace{2.2em} & -0.1912 \hspace{2.2em} & -0.2404 \hspace{2.2em} & -0.1730 \hspace{2.2em} & -0.2452 \hspace{2.2em} & -0.1591 \hspace{2.2em} \\
\bottomrule
\end{tabular}

~
\vspace{1em}
~

\begin{tabular}{cc|cccccc}
\toprule
\multirow{4}{*}{\makecell[c]{Pretraining \\ Algorithms}} & \multirow{4}{*}{Method} & \multicolumn{6}{c}{ResNet-50} \\  \cline{3-8} 
 & & \multicolumn{6}{c}{$\texttt{TinyImagenet} \rightarrow$} \\
  & & \multicolumn{2}{c}{\texttt{CIFAR-10}}  & \multicolumn{2}{c}{\texttt{CIFAR-100}} & \multicolumn{2}{c}{\texttt{STL-10}} \\
  & & Entropy  & Brier & Entropy & Brier & Entropy  & Brier \\
\midrule
\multirow{5}{*}{SimCLR} & $\mathsf{NC}_{100}$ & \underline{0.1061} \tiny{$\pm$ 0.0104} & \underline{0.0693} \tiny{$\pm$ 0.0091} & \textbf{0.1385} \tiny{$\pm$ 0.0102} & \textbf{0.1144} \tiny{$\pm$ 0.0088} & \textbf{0.1216} \tiny{$\pm$ 0.0154} & \textbf{0.0710} \tiny{$\pm$ 0.0129} \\
& $\amone$ & -0.0954 \tiny{$\pm$ 0.0092} & -0.0787 \tiny{$\pm$ 0.0090} & -0.1092 \tiny{$\pm$ 0.0099} & -0.0748 \tiny{$\pm$ 0.0088} & -0.0775 \tiny{$\pm$ 0.0148} & -0.0544 \tiny{$\pm$ 0.0108} \\
& $\mathsf{Norm}$ & \textbf{0.1549} \hspace{2.2em} & \textbf{0.1098} \hspace{2.2em} & \underline{0.0895} \hspace{2.2em} & \underline{0.0177} \hspace{2.2em} & \underline{0.0608} \hspace{2.2em} & \underline{-0.0056} \hspace{2.2em} \\
& $\mathsf{LL}$ & -0.1479 \tiny{$\pm$ 0.0121} & -0.1284 \tiny{$\pm$ 0.0095} & -0.1677 \tiny{$\pm$ 0.0124} & -0.1412 \tiny{$\pm$ 0.0108} & -0.1293 \tiny{$\pm$ 0.0209} & -0.1208 \tiny{$\pm$ 0.0128} \\
& $\mathsf{FV}$ & -0.2344 \hspace{2.2em} & -0.1760 \hspace{2.2em} & -0.2218 \hspace{2.2em} & -0.1556 \hspace{2.2em} & -0.2328 \hspace{2.2em} & -0.1520 \hspace{2.2em} \\
\midrule
\multirow{5}{*}{BYOL} & $\mathsf{NC}_{100}$ & \textbf{0.2578} \tiny{$\pm$ 0.0067} & \textbf{0.1273} \tiny{$\pm$ 0.0039} & \textbf{0.2598} \tiny{$\pm$ 0.0029} & \textbf{0.1014} \tiny{$\pm$ 0.0021} & \textbf{0.1404} \tiny{$\pm$ 0.0058} & \underline{0.0101} \tiny{$\pm$ 0.0038} \\
& $\amone$ & -0.2212 \tiny{$\pm$ 0.0015} & -0.1375 \tiny{$\pm$ 0.0019} & -0.3117 \tiny{$\pm$ 0.0011} & -0.1690 \tiny{$\pm$ 0.0029} & -0.2528 \tiny{$\pm$ 0.0026} & -0.1894 \tiny{$\pm$ 0.0027} \\
& $\mathsf{Norm}$ & \underline{0.1620} \hspace{2.2em} & \underline{0.0918} \hspace{2.2em} & \underline{0.1082} \hspace{2.2em} & \underline{0.0816} \hspace{2.2em} & \underline{0.0963} \hspace{2.2em} & \textbf{0.0380} \hspace{2.2em} \\
& $\mathsf{LL}$ & -0.2331 \tiny{$\pm$ 0.0036} & -0.1405 \tiny{$\pm$ 0.0019} & -0.3030 \tiny{$\pm$ 0.0047} & -0.1634 \tiny{$\pm$ 0.0024} & -0.2607 \tiny{$\pm$ 0.0043} & -0.1895 \tiny{$\pm$ 0.0021} \\
& $\mathsf{FV}$ & -0.1729 \hspace{2.2em} & -0.0944 \hspace{2.2em} & -0.1309 \hspace{2.2em} & -0.0868 \hspace{2.2em} & -0.1125 \hspace{2.2em} & -0.0368 \hspace{2.2em} \\
\midrule
\multirow{5}{*}{MoCo} & $\mathsf{NC}_{100}$ & \underline{0.2118} \tiny{$\pm$ 0.0062} & \underline{0.1453} \tiny{$\pm$ 0.0052} & \textbf{0.2091} \tiny{$\pm$ 0.0045} & \textbf{0.1575} \tiny{$\pm$ 0.0035} & \underline{0.1827} \tiny{$\pm$ 0.0088} & \underline{0.1071} \tiny{$\pm$ 0.0094} \\
& $\amone$ & -0.1440 \tiny{$\pm$ 0.0047} & -0.1171 \tiny{$\pm$ 0.0049} & -0.1306 \tiny{$\pm$ 0.0076} & -0.0882 \tiny{$\pm$ 0.0062} & -0.1507 \tiny{$\pm$ 0.0081} & -0.1256 \tiny{$\pm$ 0.0052} \\
& $\mathsf{Norm}$ & \textbf{0.2503} \hspace{2.2em} & \textbf{0.1803} \hspace{2.2em} & \underline{0.1838} \hspace{2.2em} & \underline{0.1034} \hspace{2.2em} & \textbf{0.2524} \hspace{2.2em} & \textbf{0.1524} \hspace{2.2em} \\
& $\mathsf{LL}$ & -0.1417 \tiny{$\pm$ 0.0191} & -0.1256 \tiny{$\pm$ 0.0137} & -0.1374 \tiny{$\pm$ 0.0141} & -0.1093 \tiny{$\pm$ 0.0119} & -0.1313 \tiny{$\pm$ 0.0257} & -0.1411 \tiny{$\pm$ 0.0200} \\
& $\mathsf{FV}$ & -0.2906 \hspace{2.2em} & -0.2087 \hspace{2.2em} & -0.2559 \hspace{2.2em} & -0.1829 \hspace{2.2em} & -0.3238 \hspace{2.2em} & -0.2290 \hspace{2.2em} \\
\bottomrule
\end{tabular}
\label{tab:transfer_unnormalized}
\end{table*}
\clearpage

\begin{figure}[t] 
    \centering
    \subfigure[]
    {
        \includegraphics[width=0.3\columnwidth]{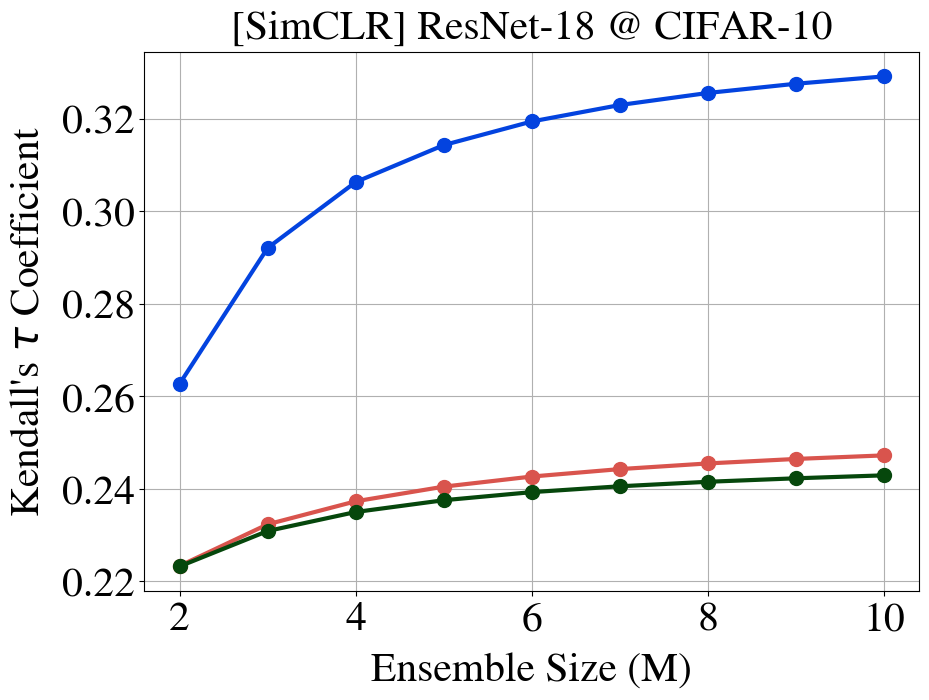}
    }
    \subfigure[]
    {   
        \includegraphics[width=0.3\columnwidth]{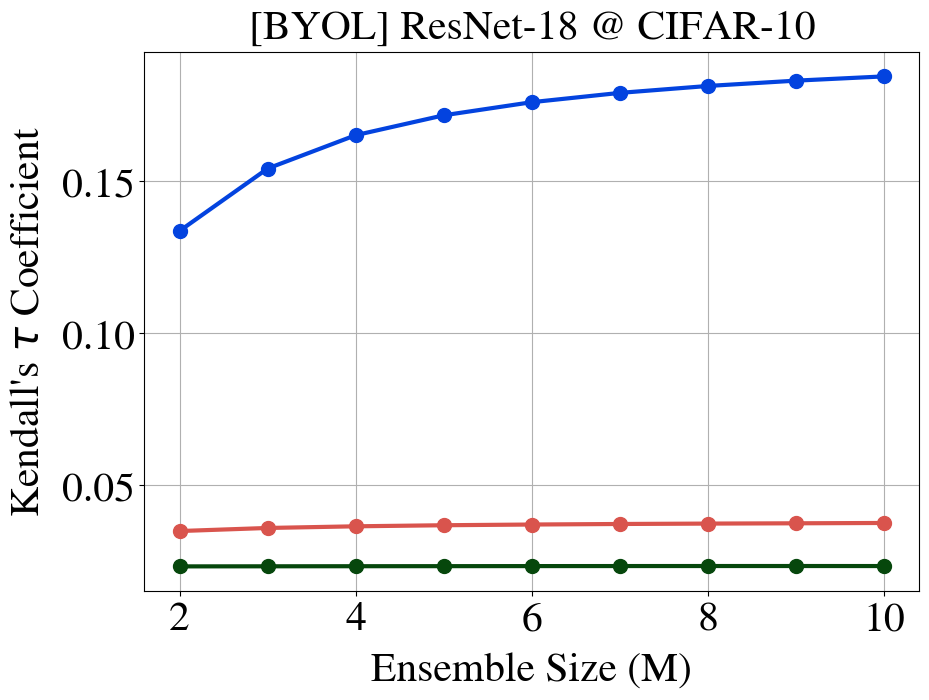}
    }
    \subfigure[]
    {   
        \includegraphics[width=0.3\columnwidth]{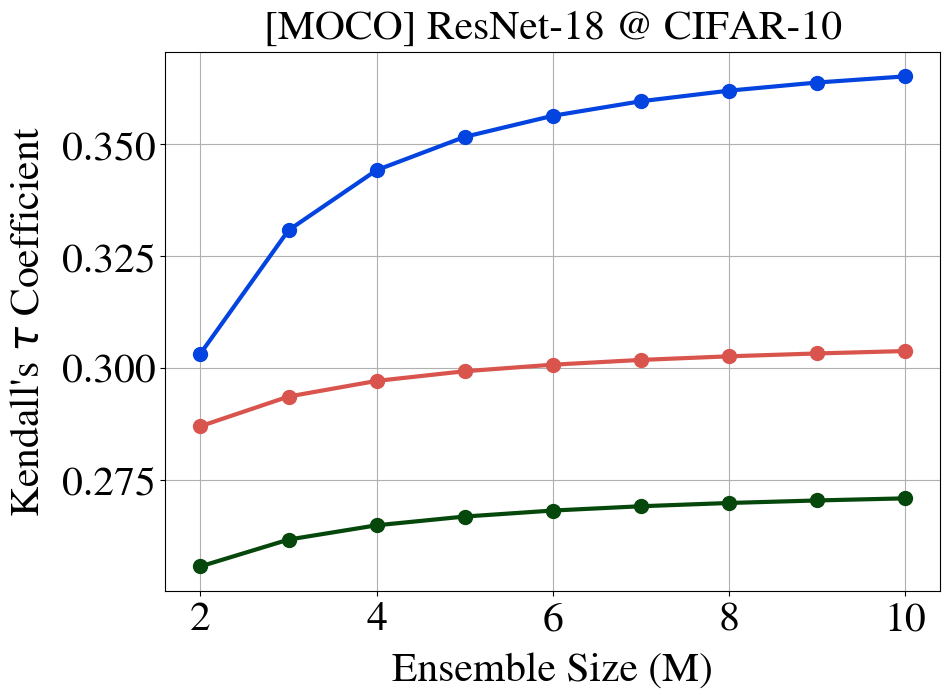}
    }
    \subfigure[]
    {
        \includegraphics[width=0.3\columnwidth]{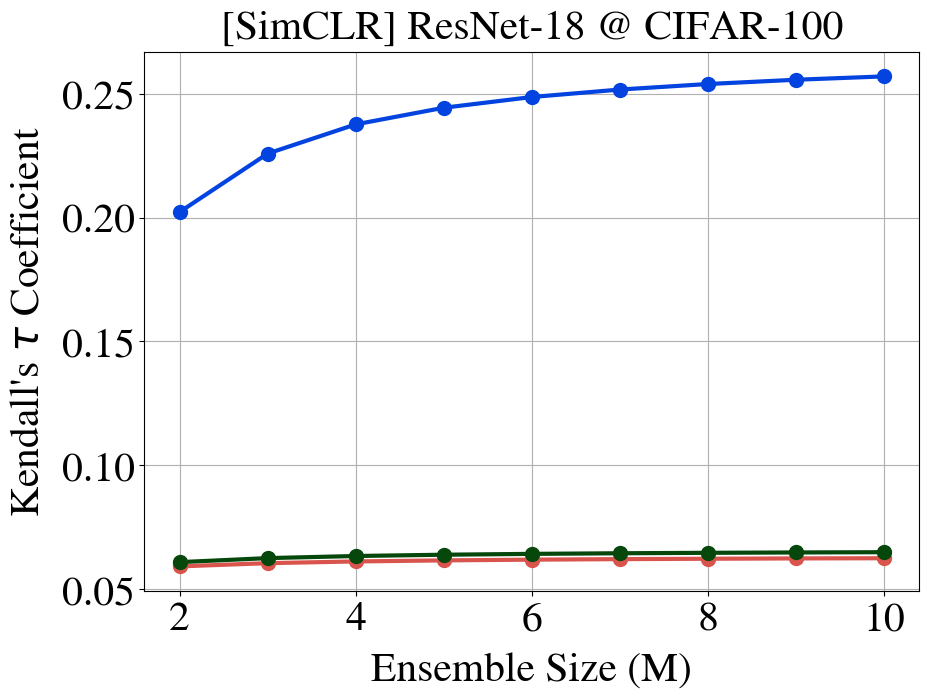}
    }
    \subfigure[]
    {   
        \includegraphics[width=0.3\columnwidth]{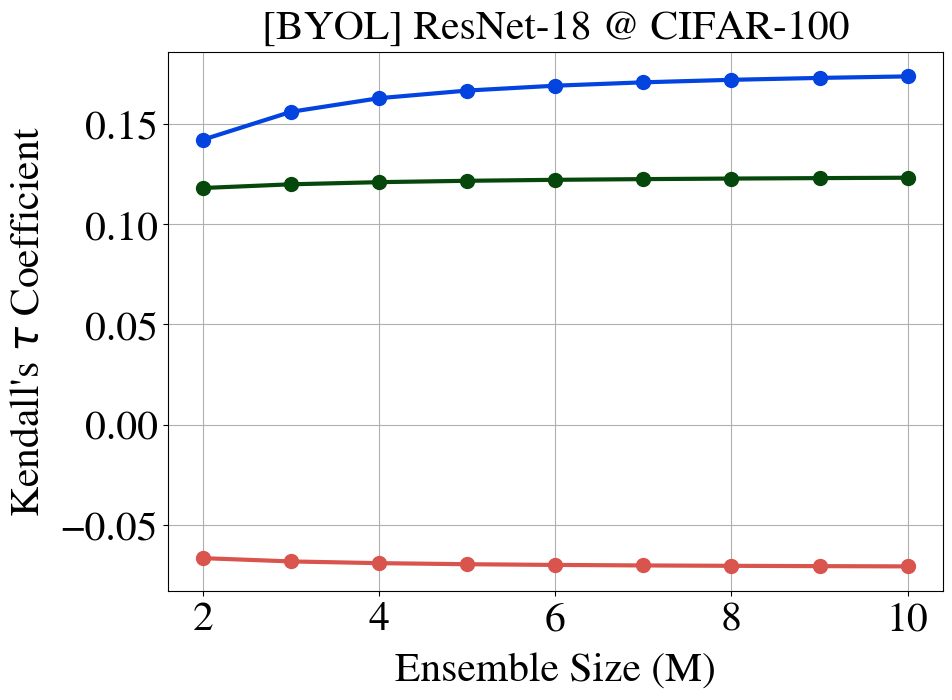}
    }
    \subfigure[]
    {   
        \includegraphics[width=0.3\columnwidth]{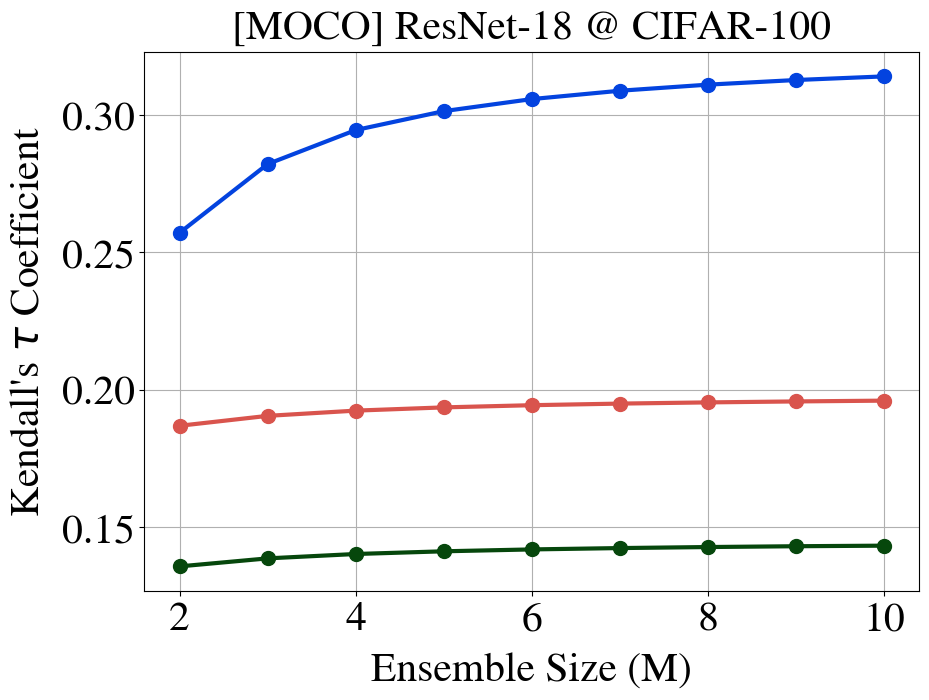}
    }
    \subfigure[]
    {
        \includegraphics[width=0.3\columnwidth]{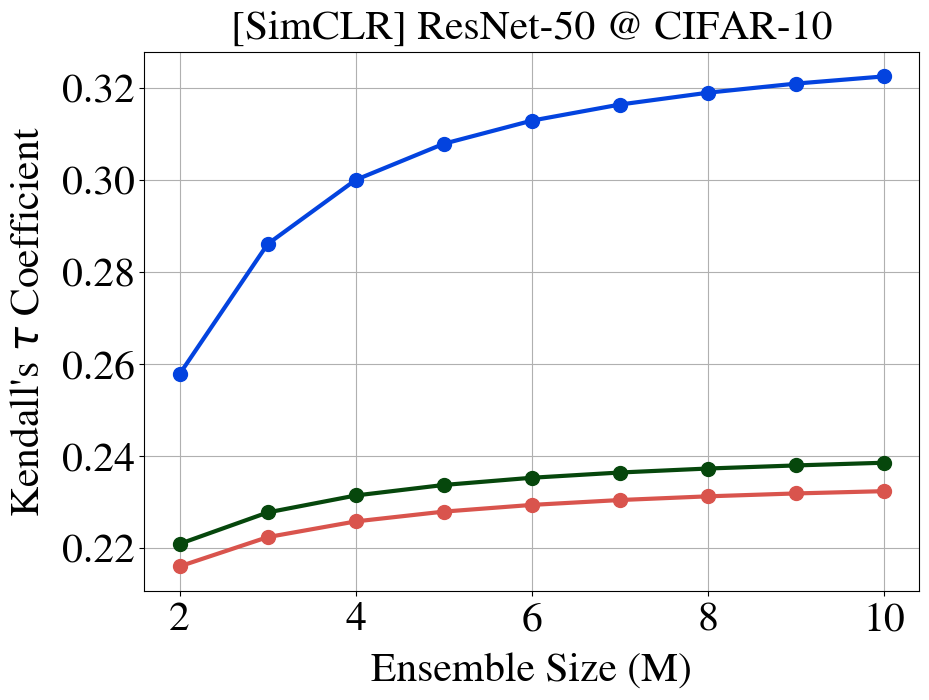}
    }
    \subfigure[]
    {   
        \includegraphics[width=0.3\columnwidth]{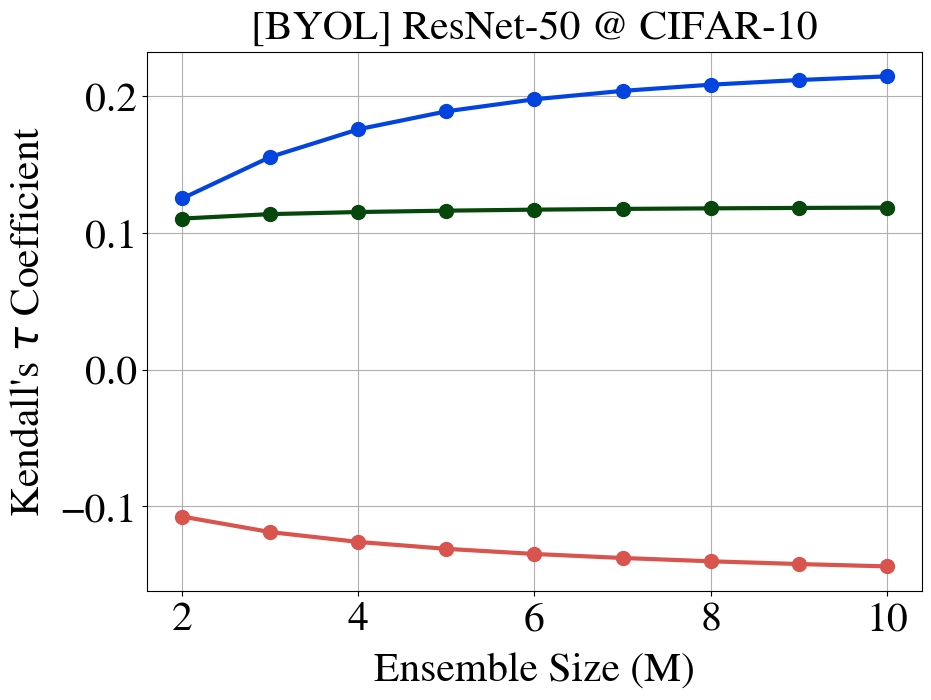}
    }
    \subfigure[]
    {   
        \includegraphics[width=0.3\columnwidth]{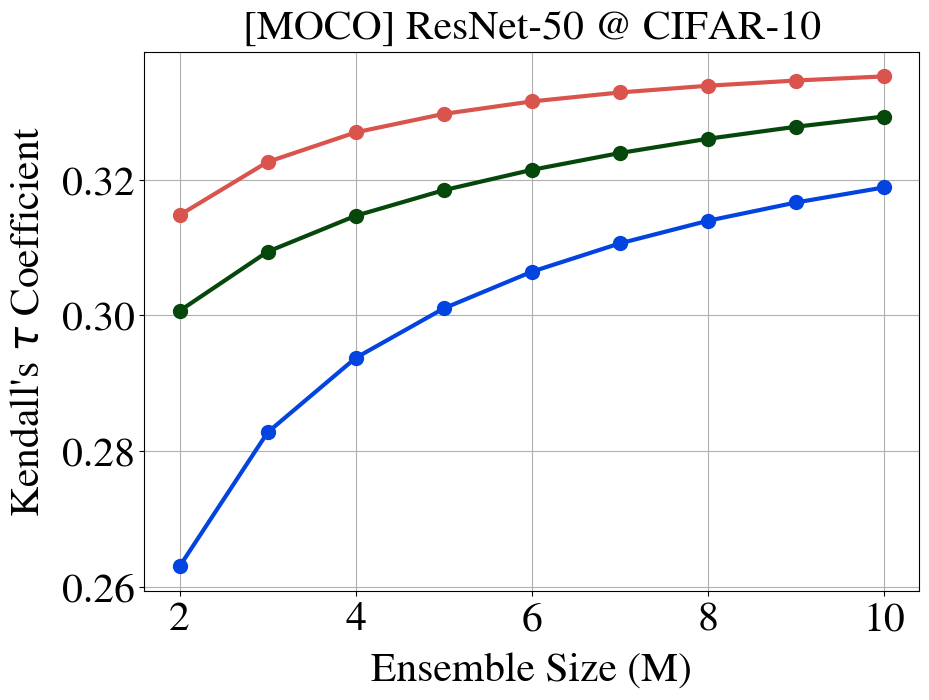}
    }
    \subfigure[]
    {
        \includegraphics[width=0.3\columnwidth]{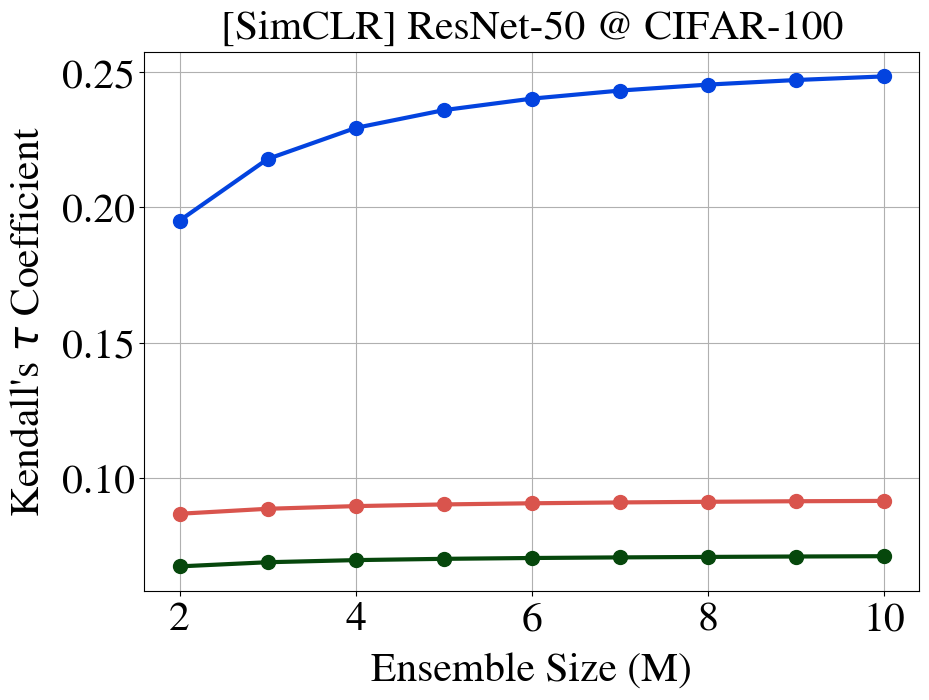}
    }
    \subfigure[]
    {   
        \includegraphics[width=0.3\columnwidth]{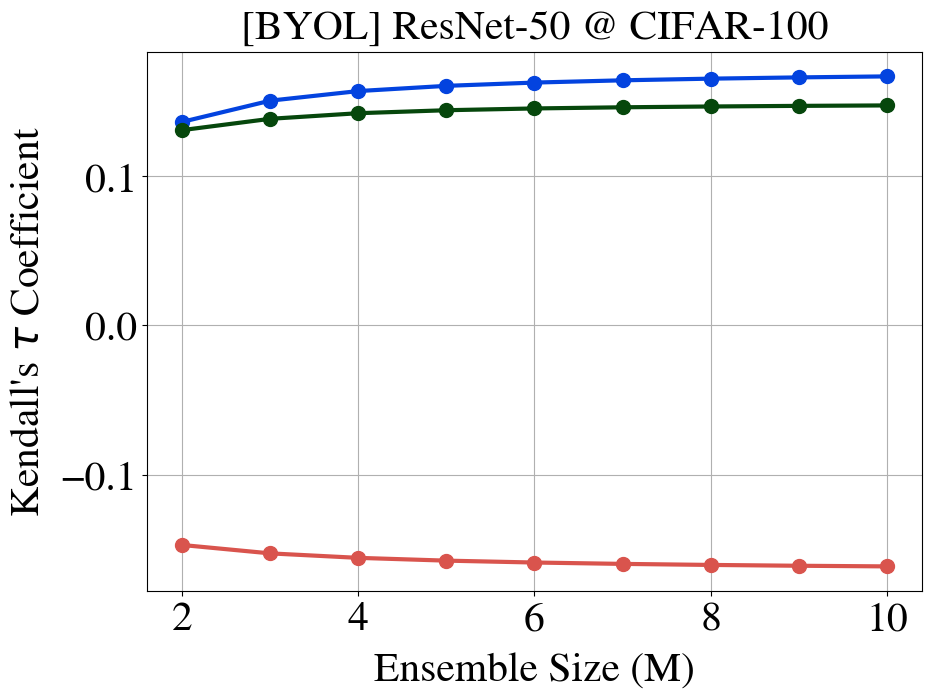}
    }
    \subfigure[]
    {   
        \includegraphics[width=0.3\columnwidth]{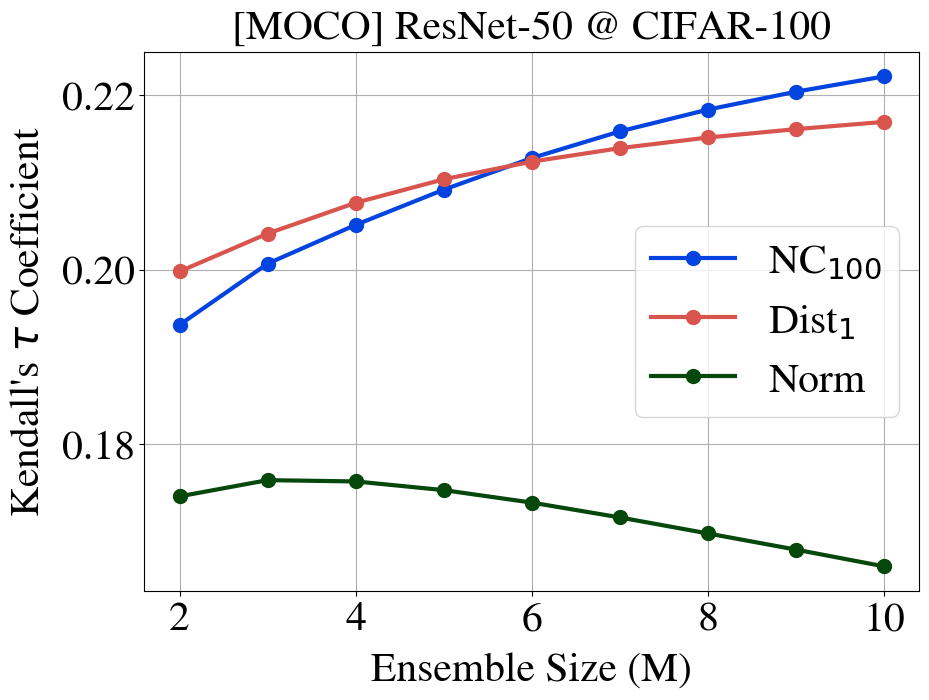}
    }
    \caption{Ablation over the ensemble size ($M$) for $\mathsf{NC}_k$ (ours) and baselines. Brier score is used for the downstream performance metric.}
    \label{fig:abl_m_full}
\end{figure}
\clearpage

\begin{figure}[t] 
    \centering
    \subfigure[]
    {
        \includegraphics[width=0.3\columnwidth]{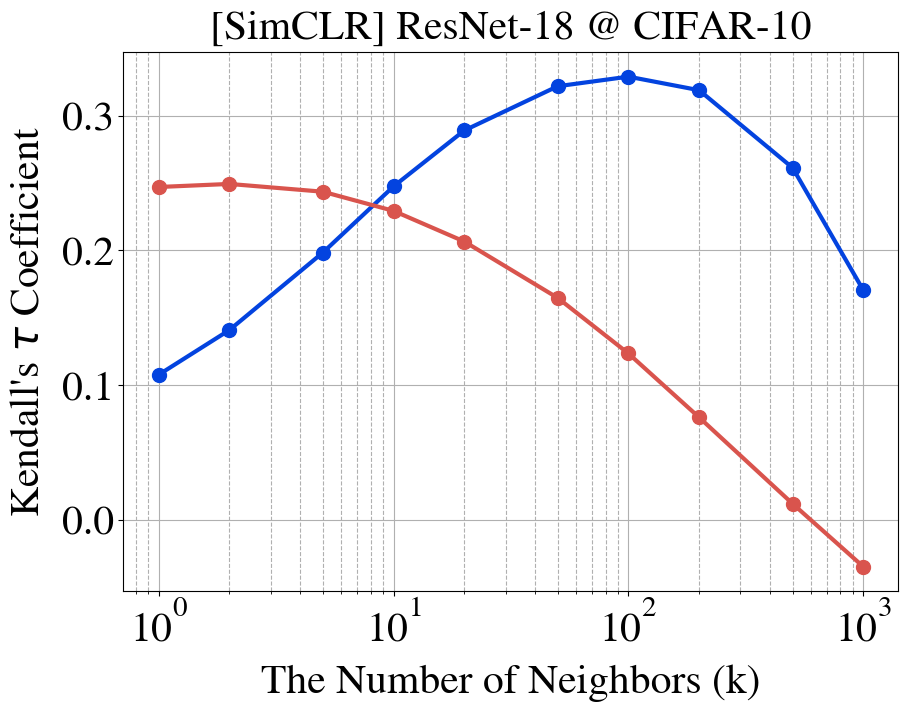}
    }
    \subfigure[]
    {   
        \includegraphics[width=0.3\columnwidth]{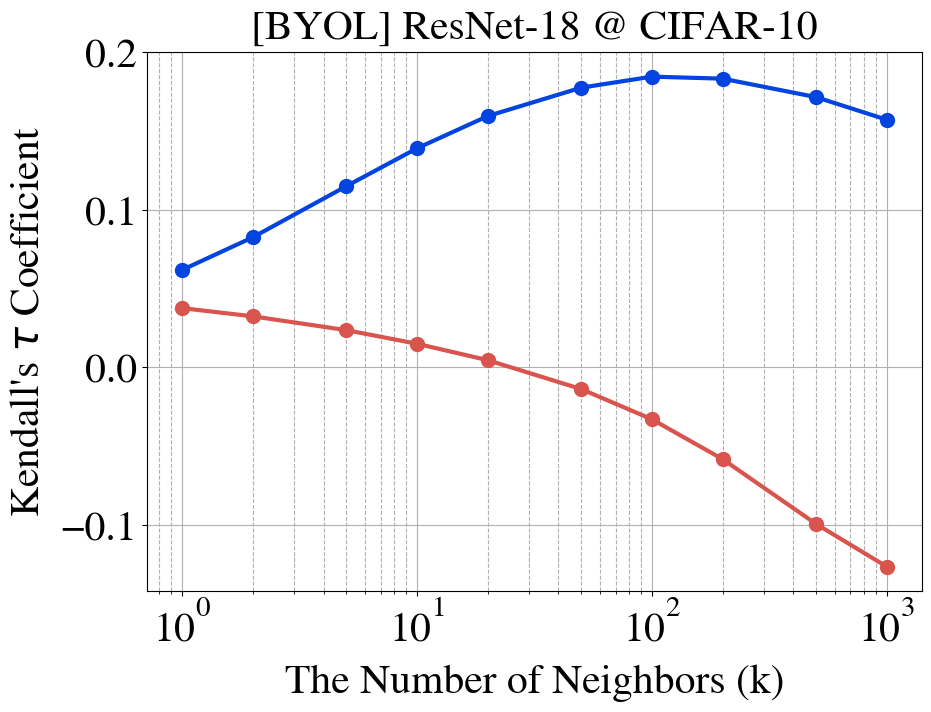}
    }
    \subfigure[]
    {   
        \includegraphics[width=0.3\columnwidth]{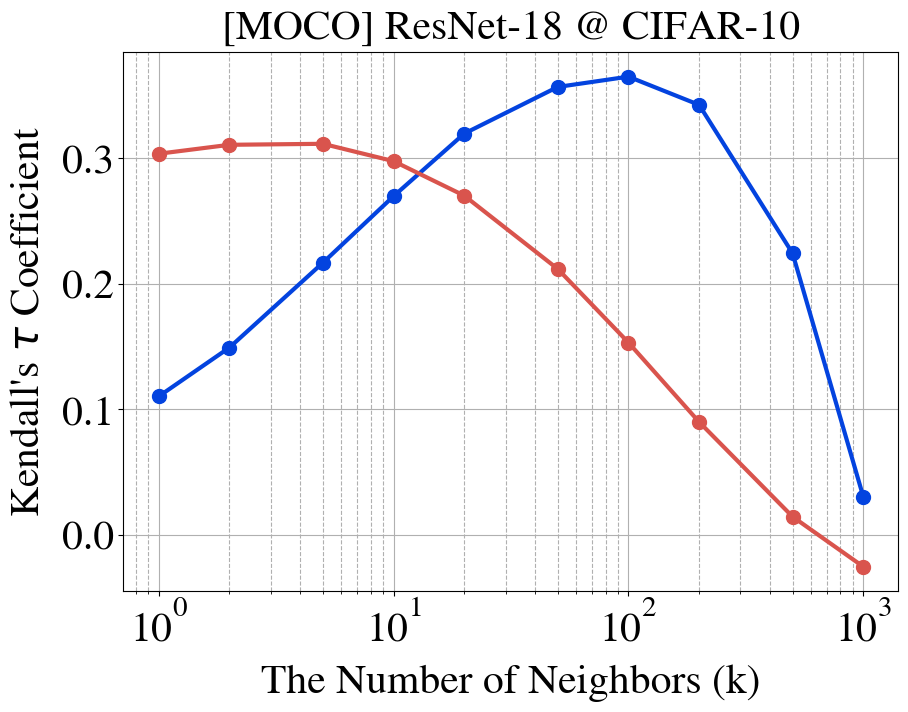}
    }
    \subfigure[]
    {
        \includegraphics[width=0.3\columnwidth]{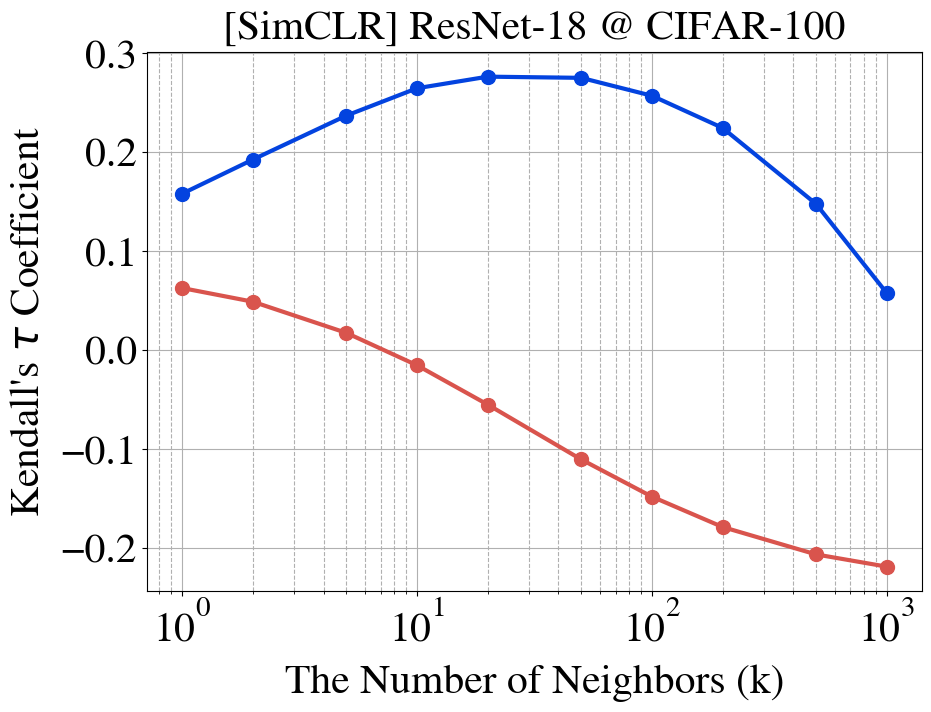}
    }
    \subfigure[]
    {   
        \includegraphics[width=0.3\columnwidth]{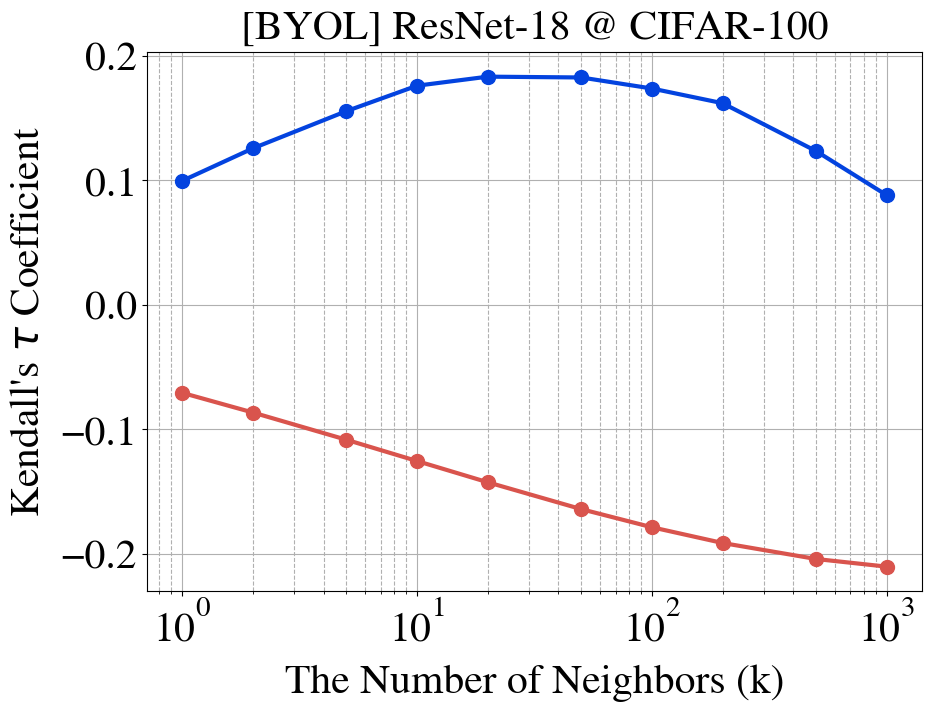}
    }
    \subfigure[]
    {   
        \includegraphics[width=0.3\columnwidth]{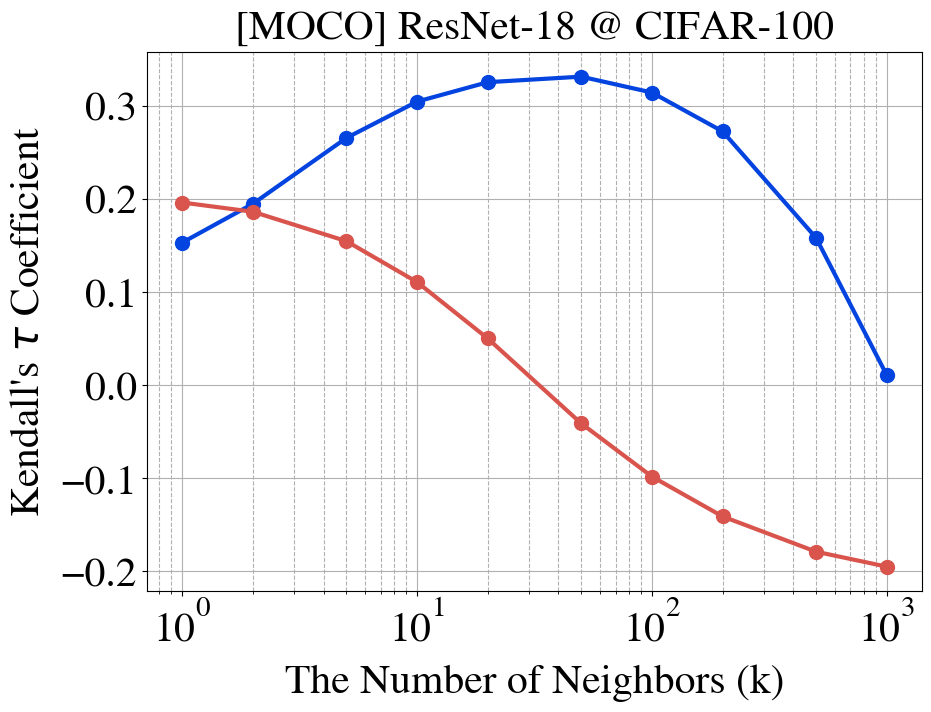}
    }
    \subfigure[]
    {
        \includegraphics[width=0.3\columnwidth]{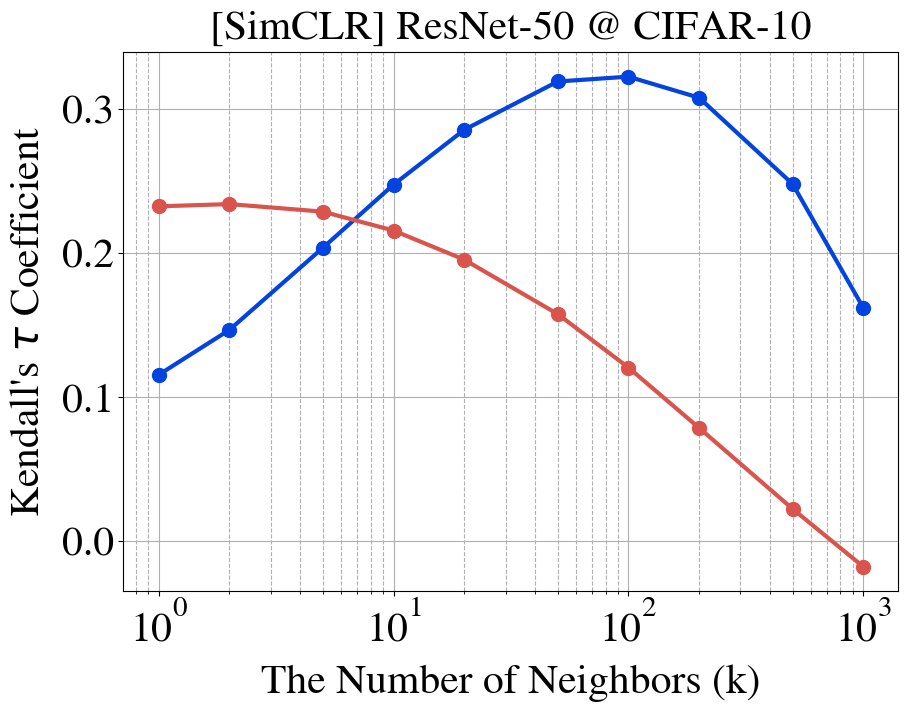}
    }
    \subfigure[]
    {   
        \includegraphics[width=0.3\columnwidth]{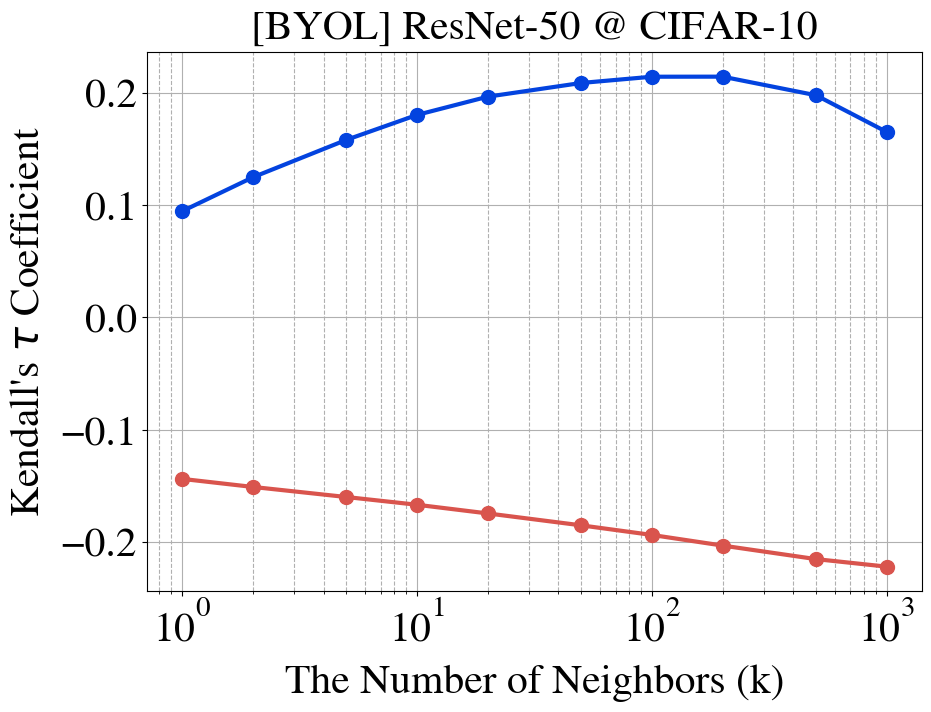}
    }
    \subfigure[]
    {   
        \includegraphics[width=0.3\columnwidth]{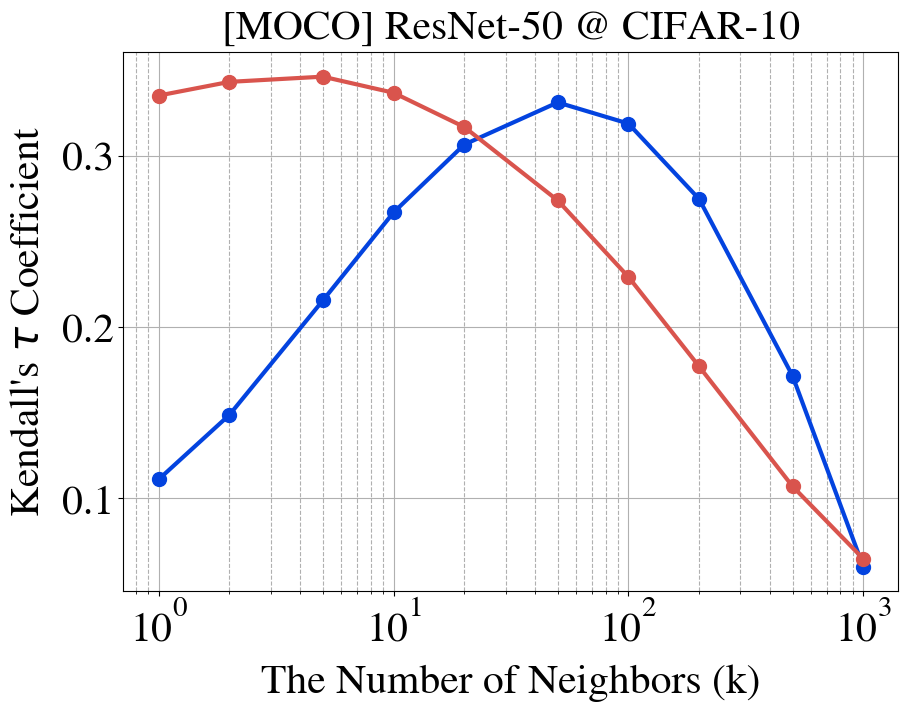}
    }
    \subfigure[]
    {
        \includegraphics[width=0.3\columnwidth]{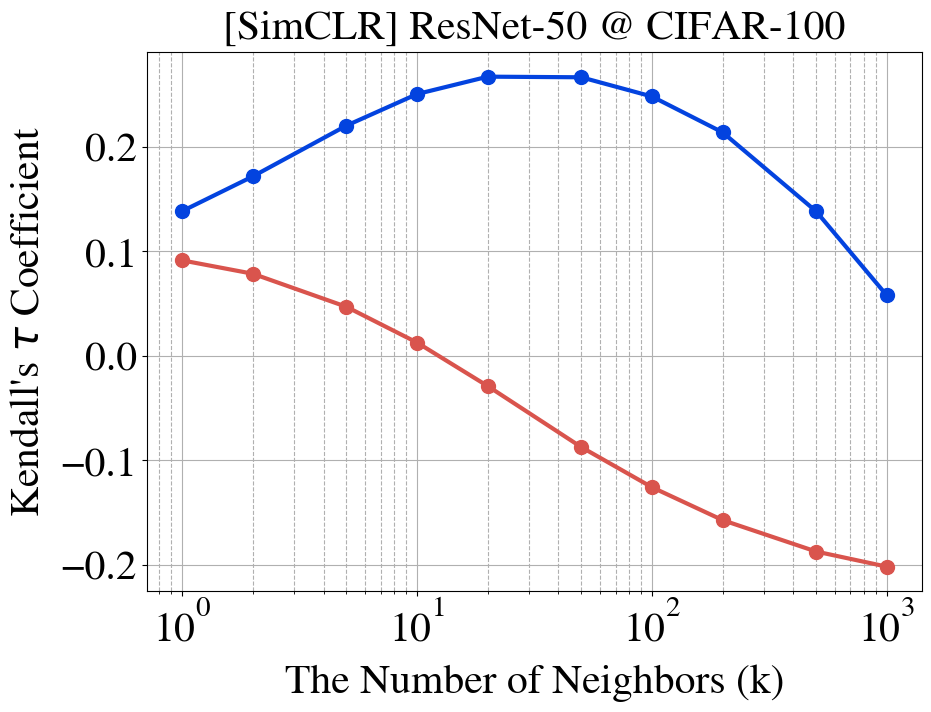}
    }
    \subfigure[]
    {   
        \includegraphics[width=0.3\columnwidth]{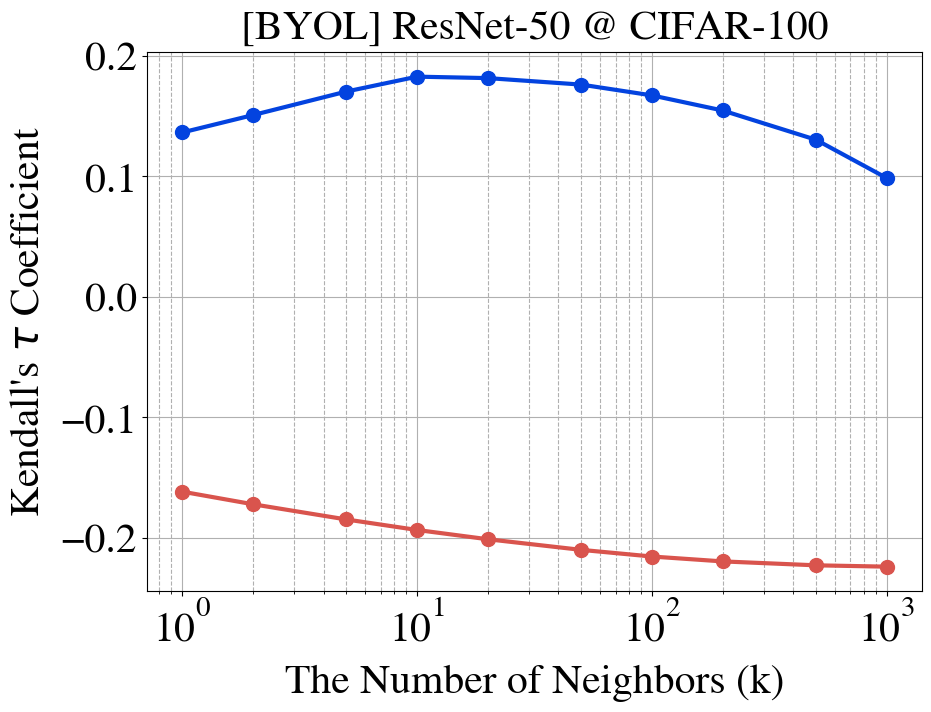}
    }
    \subfigure[]
    {   
        \includegraphics[width=0.3\columnwidth]{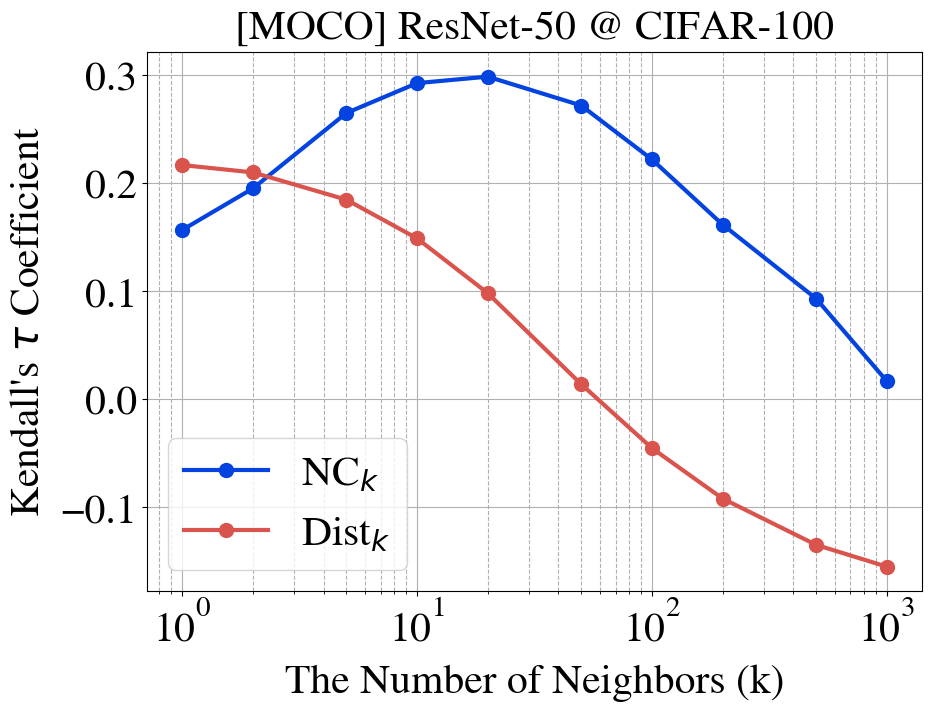}
    }
    \caption{Ablation over the number of neighbors ($k$) for $\mathsf{NC}_k$ (ours) and baselines. Brier score is used for the downstream performance metric.}
    \label{fig:abl_k_full}
\end{figure}
\clearpage

\begin{table*}[t!] 
\centering
\fontsize{9pt}{9pt}\selectfont
\caption{Comparison on the correlation between our neighborhood consistency ($\mathsf{NC}_{100}$) and baseline methods in relation to performance on in-distribution downstream tasks for a \textbf{single embedding function}.}
\renewcommand{\arraystretch}{1.25}
\begin{tabular}{cc|cc|cc}
\toprule
\multirow{2}{*}{\makecell[c]{Pretraining \\ Algorithms}} & \multirow{2}{*}{Method} & \multicolumn{2}{c|}{ResNet-18} & \multicolumn{2}{c}{ResNet-50}  \\  \cline{3-6} 
 & & \texttt{CIFAR-10}  & \texttt{CIFAR-100} & \texttt{CIFAR-10}  & \texttt{CIFAR-100} \\
\midrule
\multirow{5}{*}{SimCLR} & $\mathsf{NC}_{100}$ & \textbf{0.3069} \tiny{$\pm$ 0.0056} & \textbf{0.2463} \tiny{$\pm$ 0.0033} & \textbf{0.3067} \tiny{$\pm$ 0.0059} & \textbf{0.2428} \tiny{$\pm$ 0.0032} \\
& $\amone$ & \underline{0.2412} \tiny{$\pm$ 0.0020} & 0.0625 \tiny{$\pm$ 0.0033} & 0.2283 \tiny{$\pm$ 0.0057} & \underline{0.0898} \tiny{$\pm$ 0.0020} \\
& $\amone^*$ & 0.2058 \tiny{$\pm$ 0.0208} & 0.0572 \tiny{$\pm$ 0.0199} & 0.2048 \tiny{$\pm$ 0.0290} & 0.0838 \tiny{$\pm$ 0.0246} \\
& $\mathsf{Norm}$ ~ & 0.2368 \tiny{$\pm$ 0.0065} & \underline{0.0628} \tiny{$\pm$ 0.0048} & \underline{0.2346} \tiny{$\pm$ 0.0055} & 0.0698 \tiny{$\pm$ 0.0063} \\
& $\mathsf{Norm}^*$ & 0.2158 \tiny{$\pm$ 0.0390} & 0.0621 \tiny{$\pm$ 0.0352} & 0.2157 \tiny{$\pm$ 0.0326} & 0.0690 \tiny{$\pm$ 0.0321} \\
& $\mathsf{LL}$ ~ & 0.1304 \tiny{$\pm$ 0.0044} & -0.0983 \tiny{$\pm$ 0.0039} & 0.1316 \tiny{$\pm$ 0.0030} & -0.0907 \tiny{$\pm$ 0.0033} \\
& $\mathsf{LL}^*$ & 0.1113 \tiny{$\pm$ 0.0175} & -0.0866 \tiny{$\pm$ 0.0151} & 0.1188 \tiny{$\pm$ 0.0234} & -0.0803 \tiny{$\pm$ 0.0149} \\
& $\mathsf{FV}$ ~ & -0.0431 \tiny{$\pm$ 0.0080} & -0.1625 \tiny{$\pm$ 0.0034} & -0.0359 \tiny{$\pm$ 0.0051} & -0.1798 \tiny{$\pm$ 0.0044} \\
\midrule
\multirow{5}{*}{BYOL} & $\mathsf{NC}_{100}$ & \textbf{0.1741} \tiny{$\pm$ 0.0050} & \textbf{0.1685} \tiny{$\pm$ 0.0026} & \textbf{0.1904} \tiny{$\pm$ 0.0223} & \textbf{0.1655} \tiny{$\pm$ 0.0118} \\
& $\amone$ & 0.0399 \tiny{$\pm$ 0.0058} & -0.0623 \tiny{$\pm$ 0.0024} & -0.1221 \tiny{$\pm$ 0.0185} & -0.1441 \tiny{$\pm$ 0.0109} \\
& $\amone^*$ & 0.0324 \tiny{$\pm$ 0.0211} & -0.0576 \tiny{$\pm$ 0.0107} & -0.0849 \tiny{$\pm$ 0.0474} & -0.1270 \tiny{$\pm$ 0.0141} \\
& $\mathsf{Norm}$ ~ & 0.0241 \tiny{$\pm$ 0.0059} & \underline{0.1195} \tiny{$\pm$ 0.0042} & \underline{0.1100} \tiny{$\pm$ 0.0292} & \underline{0.1400} \tiny{$\pm$ 0.0094} \\
& $\mathsf{Norm}^*$ & 0.0246 \tiny{$\pm$ 0.0084} & 0.1112 \tiny{$\pm$ 0.0173} & 0.0895 \tiny{$\pm$ 0.0373} & 0.1142 \tiny{$\pm$ 0.0432} \\
& $\mathsf{LL}$ ~ & -0.0157 \tiny{$\pm$ 0.0060} & -0.1533 \tiny{$\pm$ 0.0028} & -0.1534 \tiny{$\pm$ 0.0149} & -0.1847 \tiny{$\pm$ 0.0090} \\
& $\mathsf{LL}^*$ & -0.0190 \tiny{$\pm$ 0.0232} & -0.1420 \tiny{$\pm$ 0.0099} & -0.1339 \tiny{$\pm$ 0.0370} & -0.1681 \tiny{$\pm$ 0.0116} \\
& $\mathsf{FV}$ ~ & \underline{0.0936} \tiny{$\pm$ 0.0042} & -0.1236 \tiny{$\pm$ 0.0056} & 0.0432 \tiny{$\pm$ 0.0153} & -0.0639 \tiny{$\pm$ 0.0107} \\
\midrule
\multirow{5}{*}{MoCo} & $\mathsf{NC}_{100}$ & \textbf{0.3521} \tiny{$\pm$ 0.0025} & \textbf{0.3038} \tiny{$\pm$ 0.0039} & 0.3137 \tiny{$\pm$ 0.0039} & \textbf{0.2173} \tiny{$\pm$ 0.0039} \\
& $\amone$ & \underline{0.2990} \tiny{$\pm$ 0.0022} & \underline{0.1873} \tiny{$\pm$ 0.0022} & \textbf{0.3296} \tiny{$\pm$ 0.0033} & \underline{0.2130} \tiny{$\pm$ 0.0033} \\
& $\amone^*$ & 0.2763 \tiny{$\pm$ 0.0280} & 0.1766 \tiny{$\pm$ 0.0124} & 0.3065 \tiny{$\pm$ 0.0188} & 0.1991 \tiny{$\pm$ 0.0277} \\
& $\mathsf{Norm}$ ~ & 0.2646 \tiny{$\pm$ 0.0041} & 0.1396 \tiny{$\pm$ 0.0021} & \underline{0.3214} \tiny{$\pm$ 0.0041} & 0.1616 \tiny{$\pm$ 0.0058} \\
& $\mathsf{Norm}^*$ & 0.2536 \tiny{$\pm$ 0.0350} & 0.1369 \tiny{$\pm$ 0.0191} & 0.2952 \tiny{$\pm$ 0.0305} & 0.1750 \tiny{$\pm$ 0.0421} \\
& $\mathsf{LL}$ ~ & 0.1697 \tiny{$\pm$ 0.0025} & -0.0710 \tiny{$\pm$ 0.0030} & 0.2071 \tiny{$\pm$ 0.0053} & -0.0331 \tiny{$\pm$ 0.0041} \\
& $\mathsf{LL}^*$ & 0.1545 \tiny{$\pm$ 0.0261} & -0.0629 \tiny{$\pm$ 0.0096} & 0.1866 \tiny{$\pm$ 0.0173} & -0.0266 \tiny{$\pm$ 0.0208} \\
& $\mathsf{FV}$ ~ & 0.0324 \tiny{$\pm$ 0.0036} & -0.1534 \tiny{$\pm$ 0.0030} & 0.1101 \tiny{$\pm$ 0.0042} & -0.1075 \tiny{$\pm$ 0.0048} \\
\bottomrule
\end{tabular}
\label{tab:single}
\end{table*}
\clearpage

\begin{table*}[t!] 
\centering
\fontsize{9pt}{9pt}\selectfont
\caption{Comparison on the correlation between our neighborhood consistency ($\mathsf{NC}_{100}$) and baseline methods in relation to performance on transfer learning tasks from $\texttt{TinyImagenet}$ to $\texttt{CIFAR-10}$, $\texttt{CIFAR-100}$, and $\texttt{STL-10}$ for a \textbf{single embedding function}.}
\renewcommand{\arraystretch}{1.25}
\begin{tabular}{cc|ccc|ccc}
\toprule
\multirow{2}{*}{\makecell[c]{Pretraining \\ Algorithms}} & \multirow{2}{*}{Method} & \multicolumn{3}{c|}{ResNet-18} & \multicolumn{3}{c}{ResNet-50} \\  \cline{3-8} 
 & & \multicolumn{3}{c|}{$\texttt{TinyImagenet} \rightarrow$} & \multicolumn{3}{c}{$\texttt{TinyImagenet} \rightarrow$} \\
  & & \texttt{CIFAR-10}  & \texttt{CIFAR-100} & \texttt{STL-10} & \texttt{CIFAR-10}  & \texttt{CIFAR-100} & \texttt{STL-10} \\
\midrule
\multirow{5}{*}{SimCLR} & $\mathsf{NC}_{100}$ & 0.1256 \tiny{$\pm$ 0.0031} & \textbf{0.1301} \tiny{$\pm$ 0.0023} & \textbf{0.1485} \tiny{$\pm$ 0.0016} & 0.0791 \tiny{$\pm$ 0.0024} & \textbf{0.1171} \tiny{$\pm$ 0.0015} & \textbf{0.1153} \tiny{$\pm$ 0.0021} \\
& $\amone$ & 0.0915 \tiny{$\pm$ 0.0036} & -0.0235 \tiny{$\pm$ 0.0018} & 0.0519 \tiny{$\pm$ 0.0030} & 0.0531 \tiny{$\pm$ 0.0032} & -0.0611 \tiny{$\pm$ 0.0016} & -0.0161 \tiny{$\pm$ 0.0037} \\
& $\amone^*$ & 0.0835 \tiny{$\pm$ 0.0112} & -0.0204 \tiny{$\pm$ 0.0045} & 0.0490 \tiny{$\pm$ 0.0129} & 0.0488 \tiny{$\pm$ 0.0143} & -0.0566 \tiny{$\pm$ 0.0092} & -0.0134 \tiny{$\pm$ 0.0215} \\
& $\mathsf{Norm}$ ~ & \textbf{0.1619} \tiny{$\pm$ 0.0032} & \underline{0.0499} \tiny{$\pm$ 0.0022} & 0.0845 \tiny{$\pm$ 0.0028} & \textbf{0.1093} \tiny{$\pm$ 0.0025} & 0.0179 \tiny{$\pm$ 0.0024} & -0.0051 \tiny{$\pm$ 0.0037} \\
& $\mathsf{Norm}^*$ & \underline{0.1542} \tiny{$\pm$ 0.0125} & 0.0494 \tiny{$\pm$ 0.0063} & \underline{0.0847} \tiny{$\pm$ 0.0172} & \underline{0.1066} \tiny{$\pm$ 0.0191} & \underline{0.0198} \tiny{$\pm$ 0.0159} & \underline{0.0012} \tiny{$\pm$ 0.0301} \\
& $\mathsf{LL}$ ~ & 0.0179 \tiny{$\pm$ 0.0035} & -0.1221 \tiny{$\pm$ 0.0020} & -0.0165 \tiny{$\pm$ 0.0028} & 0.0039 \tiny{$\pm$ 0.0029} & -0.1413 \tiny{$\pm$ 0.0023} & -0.0484 \tiny{$\pm$ 0.0036} \\
& $\mathsf{LL}^*$ & 0.0172 \tiny{$\pm$ 0.0098} & -0.1136 \tiny{$\pm$ 0.0042} & -0.0128 \tiny{$\pm$ 0.0137} & 0.0032 \tiny{$\pm$ 0.0162} & -0.1328 \tiny{$\pm$ 0.0074} & -0.0438 \tiny{$\pm$ 0.0208} \\
& $\mathsf{FV}$ ~ & -0.0753 \tiny{$\pm$ 0.0026} & -0.1607 \tiny{$\pm$ 0.0024} & -0.1554 \tiny{$\pm$ 0.0019} & -0.0859 \tiny{$\pm$ 0.0044} & -0.1802 \tiny{$\pm$ 0.0023} & -0.1801 \tiny{$\pm$ 0.0048} \\
\midrule
\multirow{5}{*}{BYOL} & $\mathsf{NC}_{100}$ & \textbf{0.1362} \tiny{$\pm$ 0.0119} & \textbf{0.1451} \tiny{$\pm$ 0.0084} & 0.0221 \tiny{$\pm$ 0.0080} & \textbf{0.1141} \tiny{$\pm$ 0.0114} & \textbf{0.0881} \tiny{$\pm$ 0.0078} & 0.0121 \tiny{$\pm$ 0.0180} \\
& $\amone$ & -0.0813 \tiny{$\pm$ 0.0066} & -0.1393 \tiny{$\pm$ 0.0040} & -0.2011 \tiny{$\pm$ 0.0077} & -0.1103 \tiny{$\pm$ 0.0209} & -0.1428 \tiny{$\pm$ 0.0200} & -0.1677 \tiny{$\pm$ 0.0195} \\
& $\amone^*$ & -0.0732 \tiny{$\pm$ 0.0167} & -0.1230 \tiny{$\pm$ 0.0146} & -0.1835 \tiny{$\pm$ 0.0205} & -0.0830 \tiny{$\pm$ 0.0357} & -0.1260 \tiny{$\pm$ 0.0243} & -0.1272 \tiny{$\pm$ 0.0452} \\
& $\mathsf{Norm}$ ~ & 0.0736 \tiny{$\pm$ 0.0109} & \underline{0.1254} \tiny{$\pm$ 0.0048} & \textbf{0.0769} \tiny{$\pm$ 0.0060} & \underline{0.0849} \tiny{$\pm$ 0.0114} & \underline{0.0716} \tiny{$\pm$ 0.0139} & \textbf{0.0392} \tiny{$\pm$ 0.0167} \\
& $\mathsf{Norm}^*$ & \underline{0.0738} \tiny{$\pm$ 0.0209} & 0.1191 \tiny{$\pm$ 0.0150} & \underline{0.0681} \tiny{$\pm$ 0.0399} & 0.0682 \tiny{$\pm$ 0.0445} & 0.0542 \tiny{$\pm$ 0.0512} & \underline{0.0186} \tiny{$\pm$ 0.0637} \\
& $\mathsf{LL}$ ~ & -0.1131 \tiny{$\pm$ 0.0058} & -0.1857 \tiny{$\pm$ 0.0030} & -0.2295 \tiny{$\pm$ 0.0063} & -0.1555 \tiny{$\pm$ 0.0131} & -0.1537 \tiny{$\pm$ 0.0123} & -0.1753 \tiny{$\pm$ 0.0092} \\
& $\mathsf{LL}^*$ & -0.1056 \tiny{$\pm$ 0.0078} & -0.1709 \tiny{$\pm$ 0.0091} & -0.2110 \tiny{$\pm$ 0.0153} & -0.1279 \tiny{$\pm$ 0.0238} & -0.1420 \tiny{$\pm$ 0.0244} & -0.1352 \tiny{$\pm$ 0.0365} \\
& $\mathsf{FV}$ ~ & -0.0690 \tiny{$\pm$ 0.0098} & -0.1137 \tiny{$\pm$ 0.0050} & -0.0776 \tiny{$\pm$ 0.0067} & -0.0691 \tiny{$\pm$ 0.0110} & -0.0634 \tiny{$\pm$ 0.0080} & 0.0152 \tiny{$\pm$ 0.0155} \\
\midrule
\multirow{5}{*}{MoCo} & $\mathsf{NC}_{100}$ & 0.1290 \tiny{$\pm$ 0.0021} & \textbf{0.1421} \tiny{$\pm$ 0.0012} & \textbf{0.1440} \tiny{$\pm$ 0.0017} & 0.1285 \tiny{$\pm$ 0.0021} & \textbf{0.1541} \tiny{$\pm$ 0.0025} & 0.1419 \tiny{$\pm$ 0.0037} \\
& $\amone$ & 0.0707 \tiny{$\pm$ 0.0019} & -0.0139 \tiny{$\pm$ 0.0013} & 0.0223 \tiny{$\pm$ 0.0017} & 0.1098 \tiny{$\pm$ 0.0017} & 0.0253 \tiny{$\pm$ 0.0010} & 0.0704 \tiny{$\pm$ 0.0021} \\
& $\amone^*$ & 0.0667 \tiny{$\pm$ 0.0093} & -0.0109 \tiny{$\pm$ 0.0059} & 0.0233 \tiny{$\pm$ 0.0158} & 0.1049 \tiny{$\pm$ 0.0106} & 0.0264 \tiny{$\pm$ 0.0085} & 0.0688 \tiny{$\pm$ 0.0091} \\
& $\mathsf{Norm}$ ~ & \textbf{0.1337} \tiny{$\pm$ 0.0031} & 0.0520 \tiny{$\pm$ 0.0017} & 0.0435 \tiny{$\pm$ 0.0019} & \textbf{0.1785} \tiny{$\pm$ 0.0011} & \underline{0.1019} \tiny{$\pm$ 0.0028} & \textbf{0.1502} \tiny{$\pm$ 0.0021} \\
& $\mathsf{Norm}^*$ & \underline{0.1314} \tiny{$\pm$ 0.0131} & \underline{0.0547} \tiny{$\pm$ 0.0090} & \underline{0.0484} \tiny{$\pm$ 0.0200} & \underline{0.1737} \tiny{$\pm$ 0.0096} & 0.1011 \tiny{$\pm$ 0.0126} & \underline{0.1456} \tiny{$\pm$ 0.0149} \\
& $\mathsf{LL}$ ~ & -0.0075 \tiny{$\pm$ 0.0025} & -0.1360 \tiny{$\pm$ 0.0020} & -0.0565 \tiny{$\pm$ 0.0022} & 0.0148 \tiny{$\pm$ 0.0022} & -0.1224 \tiny{$\pm$ 0.0016} & -0.0494 \tiny{$\pm$ 0.0027} \\
& $\mathsf{LL}^*$ & -0.0074 \tiny{$\pm$ 0.0109} & -0.1280 \tiny{$\pm$ 0.0038} & -0.0519 \tiny{$\pm$ 0.0147} & 0.0135 \tiny{$\pm$ 0.0105} & -0.1143 \tiny{$\pm$ 0.0058} & -0.0423 \tiny{$\pm$ 0.0200} \\
& $\mathsf{FV}$ ~ & -0.0687 \tiny{$\pm$ 0.0021} & -0.1627 \tiny{$\pm$ 0.0026} & -0.1550 \tiny{$\pm$ 0.0026} & -0.0391 \tiny{$\pm$ 0.0024} & -0.1559 \tiny{$\pm$ 0.0032} & -0.1555 \tiny{$\pm$ 0.0034} \\
\bottomrule
\end{tabular}
\label{tab:trasnfer_single}
\end{table*}

\clearpage
\section{Further Discussion on Neighborhood Consistency} 
\label{app:alternative}

In this section, we present an alternative viewpoint to clarify the underlying motivation behind neighborhood consistency and demonstrate how it addresses the shortcomings of existing supervised learning frameworks.
As illustrated in Section~\ref{sec:bg}, the uncertainty in supervised learning could be expressed by the variance of predictions across various functions. 
It is important to note that this approach is applicable since different functions transform the input into the "same" output space and the distance/similarity within the output space is well-defined.

From the perspective of unsupervised learning, however, there is no "unique" ground truth representation space which makes it difficult to compare the representations constructed by different embedding functions.
Therefore, to properly compare the two distinct abstract representations $h_i(\bx^*)$ and $h_j(\bx^*)$, we need a transformation function $\phi(\cdot)$ that maps different representations into a comparable form.
Then, following Equation~\eqref{eq:unc_sup}, the consistency across the set of representations $\cH = \{h_1(\bx^*), \cdots, h_M(\bx^*) \}$ can be assessed as we do in :
\begin{equation} \label{eq:repr_cs}
    \mathsf{Unc}(\bx^* ; \cH) = \frac{1}{M^2} \sum_{i < j} \Sim{\phi(\bz^*_i), \phi(\bz^*_j})
\end{equation}
where $\phi(\bz^*_i)$ can be viewed as a \emph{surrogate representation} of $\bz^*_i = h_i(\bx^*)$.

What characteristics should an appropriate $\phi(\cdot)$ have?
First and foremost, it would require an anchor that connects different representation spaces, and this paper suggests using (reliable) "pre-training data" as such anchors.
As such, we first introduce a surrogate vector representation consisting of the relative distances from the test point to each reference point:
\begin{equation}
    \phi_\text{rel-fc}(\bz_i^* ; \bX_{\text{ref}}) \defined \Big[ \mathsf{dist} \big( \bz_i^*, \bz_i^{(1)} \big), \cdots, \mathsf{dist} \big( \bz_i^*, \bz_i^{(n)} \big) \Big]^T
\end{equation}
where $\bz_i^{(l)} = h_i(\bx^{(l)})$ and $\mathsf{dist}(\cdot)$ is a distance metric in the representation space (e.g., cosine distance).
This surrogate representation captures the relational information to reference data points rather than taking account for the absolute location of the test representation.

The aforementioned surrogate, however, can be burdened with an excessive amount of information.
To address the issue, we may consider a \emph{sparsified representation} as in following:
\begin{equation} \label{eq:sparse}
    \phi_\text{rel-sparse}(\bz_i^* ; \bX_{\text{ref}}) \defined \Big[ \mathbbm{1} \big( \mathsf{dist} \big( \bz_i^*, \bz_i^{(1)} \big) \le \epsilon \big) , \cdots, \mathbbm{1} \big( \mathsf{dist} \big( \bz_i^*, \bz_i^{(n)} \big) \le \epsilon \big) \Big]^T
\end{equation}
where $\mathbbm{1}$ is a indicator function; and $\epsilon \in \Reals$ is a small real number.
The equivalent set notation of the vector representation is: $\epsilon\text{-NN}_i(\bx^*) \defined \{ l \mid \mathsf{dist}\big( h_i(\bx^*), ~ h_i(\bx^{(l)}) \big) \le \epsilon \}$.
The corresponding neighborhood consistency measure can be computed as follows:
\begin{equation}
    \mathsf{NC}_{\epsilon}(\bx^*) \defined \frac{1}{M^2} \sum_{i < j} \Sim{\epsilon\text{-NN}_i \big( \bx^* \big), ~ \epsilon\text{-NN}_j \big( \bx^*\big)} .
\end{equation}

However, selecting an appropriate value for $\epsilon$ can be challenging since determining the proper scale of the representation space is not straightforward. As a result, we introduce a scale-free version of neighborhood consistency that relies on the local $k$-nearest neighborhood:
\begin{equation}
    \mathsf{NC}_{k}(\bx^*) \defined \frac{1}{M^2} \sum_{i < j} \Sim{k\text{-NN}_i \big( \bx^* \big), ~ k\text{-NN}_j \big( \bx^*\big) }
\end{equation}
where $k\text{-NN}_i(\bx^*)$ is the index set of $k$ nearest neighbors of $h_i(\bx^*)$ among $\bZ_{i, \text{ref}}$.

For a set similarity metric $\mathsf{Sim}$, we can use either \emph{Jaccard Similarity} or \emph{Overlap Coefficient}:
\begin{align}
    \text{Jaccard Similarity}(S_1, S_2) &\defined \frac{\lvert S_1 \cap S_2 \rvert}{\lvert S_1 \cup S_2 \rvert} \\
    \text{Overlap Coefficient}(S_1, S_2) &\defined \frac{\lvert S_1 \cap S_2 \rvert}{\min (\lvert S_1\rvert, \lvert S_2\rvert)} .
\end{align}
If the size of $S_1$ and $S_2$ are equal to $k$: $\lvert S_1 \cup S_2 \rvert = 2k - \lvert S_1 \cap S_2 \rvert$ and $\min (\lvert S_1\rvert, \lvert S_2\rvert) = k$.
Thus, regardless of the selection, both similarity metrics solely depends on $\lvert S_1 \cap S_2 \rvert$ when the $k$-NN approach is used to determine the neighbors.
Additionally, given a set of test points, both metrics provide the same order (i.e., rank) in terms of their representation reliability.

\section{More on Related Work} 
\label{app:summary}

Self-supervised representation learning has become standard in many computer vision applications.
Instead of training a neural network that takes in the raw data and outputs the target value (e.g., class label), it optimizes a neural network $h_\theta$ that maps an input $\bx$ into the latent vector $\bz \in \mathbb{R}^{\rdim}$ in the $\rdim$-dimensional representation space.

Uncertainty estimation is the process of figuring out how uncertain or reliable the learned representations of the data are.
Assessing the uncertainty of the neural network's representation is a key step in making a reliable machine learning framework.
This is because the uncertainty provides information about the data and how confident the model is in its modeling.
There are several ways to estimate uncertainty in deep learning, such as Bayesian approaches and ensembling, in supervised learning settings where the ground truth output (e.g., label) is given.
Estimating uncertainty in deep representation learning, on the other hand, is still a relatively undiscovered area of research.

\subsection{Uncertainty-aware Representation Learning}
The representation model is often considered to be deterministic in recently popular frameworks \citep{chen2020simple, he2020momentum, chen2020improved}.
To address the reliability issue of those frameworks, some recent works, including \citep{oh2018modeling, wu2020simple}, extend the prior deterministic frameworks to stochastic ones, allowing for the construction of an uncertainty-aware self-Supervised representation learning framework.

\citet{oh2018modeling} introduces a hedged instance embedding (HIB) that optimizes a representation network that approximates the distribution over the representation vector $p_\theta(\bz|\bx)$ under the (soft) contrastive loss \citep{hadsell2006dimensionality} and variational information bottleneck (VIB) principle \citep{alemi2016deep, achille2017emergence}.
More specifically, HIB encoder parameterizes the distribution as the mixture of $C$ Gaussians: $p_\theta(\bz|\bx) = \sum_{c=1}^{C} \mathcal{N}(\bz; \mu_\theta(\bx,c), \Sigma_\theta(\bx,c))$.
Based on the stochastic embedding, the paper proposes an uncertainty metric, called \emph{self-mismatch} probability:
\begin{equation}
    s_{self\_mismatch}(\bx^*) \defined 1 - p(m | \bx^*, \bx^*)
\end{equation}
where $p(m | \bx_1, \bx_2) \approx \int p(m | \bz_1, \bz_2)p_\theta(\bz_1 | \bx_1)p_\theta(\bz_2 | \bx_2) dz_1 d\bz_2$ and $p(m|\bz_1, \bz_2) \defined \sigma(-a \| \bz_1 - \bz_2\|_2 + b)$ based on their contrastive learning method.
Self-mismatch probability can be interpreted as an expectation of the distance between two points randomly sampled from the output distribution.
In other words, this uncertainty metric is based on the idea that an input with a large aleatoric uncertainty will span a wider region, resulting in a smaller $p(m|x,x)$.

In \citep{wu2020simple}, a similar extension is also proposed.
The paper introduces a distribution encoder that outputs the representation of Gaussian distribution with diagonal covariance matrix $\Sigma_\theta (x)$ and extends the normalized temperature-scaled cross-entropy loss (NT-Xent) \citep{chen2020simple} to distribution-level contrastive objective.
The norm of the covariance matrix determined by the distribution encoder is used to assess the reliability of a given input:
\begin{equation}
    s_{var}(x^*) \defined || \Sigma_\theta (x^*)||
\end{equation}

Despite the benefits of stochastic representation, there are still some shortcomings.
One limitation would be that it requires re-training.
Large models that have gotten a lot of attention lately are usually trained on a lot of data and are getting bigger, which means they take more time and computing power to train.
As a result, it may not always be practical or feasible for users to re-train a model.
Additionally, new training schemes can impose unexpected inductive bias to algorithms that are already working well.
For example, most probability-based methods are based on standard distributions like the Gaussian or a mixture of them.
However, these assumptions may reduce the effectiveness of the model or slow down the training procedure.

\subsection{Novelty Detection in Representation Space}
There are several studies that introduce ways to detect out-of-distribution (OOD) samples by determining the novelty of the data representation from a deterministic model \citep{lee2018simple, van2020uncertainty, tack2020csi, mirzae2022fake}.
Although the specific details of each technique vary, this study observed that these methods commonly use the relative distance information of the query data point's representation vector to other reference points:
\begin{equation}
    s_{d}(\bx^*) \defined \amk \Big( \big\{ \mathsf{dist} \big( h(\bx^*), h(\bx) \big) \mid \bx \in \bX_{\text{ref}} \big\} \Big)
\end{equation}
where $\amk$ outputs an average of the $k$ smallest relative distances in the representation space between the query $\bx^*$ and reference points $\bX_{\text{ref}}$, measured by the distance metric $\mathsf{dist}$.

As shown in the table, some works are designed for supervised learning schemes that require instance-specific training labels.
\citet{lee2018simple} and \citet{van2020uncertainty} construct reference points by empirical class means:
\begin{equation}
\Big\{ \boldsymbol{\hat{\mu}}_c = \frac{1}{N_c}\sum_{\bx_i \in \bX_{\text{ref}}}h_\theta(\bx_i) \mathbbm{1}(y_i=c) \Big\}_{c=1}^C
\end{equation}
where $\boldsymbol{\hat{\mu}}_c$ is a centroid of training representations of which the label $y_i$ is equal to $c$, $N_c$ is the number of training instances belonging to the label $c$, and $C$ is the number of classes.
Then, \citet{lee2018simple} defines an uncertainty score as a minimum Mahalanobis distance to each of the centroids using a tied empirical covariance matrix, whereas \citet{van2020uncertainty} calculates a distance using a Radial Basis Function (RBF) kernel.
These approaches can also be viewed as estimating the probability density of the representation to each class.

Nevertheless, the aforementioned methods necessitate training labels, which is not always feasible.
In addition, evaluating the uncertainty based on the training labels does not guarantee the method's efficacy, as downstream tasks frequently use distinct labeling schemes.
In order to estimate uncertainty in the absence of class label information, \citet{tack2020csi} measures the minimum distance between the query instance and all training instances in the representation space.
\citet{tack2020csi} additionally suggest to ensemble the uncertainty score with various transformations (i.e., augmentation) $\mathcal{T}$: $s_{d\mbox{-}ens}(\bx^*) = \frac{1}{|\mathcal{T}|} \sum_{t \sim \mathcal{T}} s_d(t(\bx^*))$.
Meanwhile, \citet{mirzae2022fake} averages the distance to $k$-nearest instances rather than taking the closest one to improve the effectiveness.

The main limitation of the above two approaches is the lack of theoretical justification for the proposed metric, as it is founded upon heuristic rules.
For example, as demonstrated by our empirical studies in Section~\ref{app:unnormalized}, a wrong selection of the distance metric can lead to contradictory results.
In addition, as depicted in Figure~\ref{fig:abl_k_full}, using a larger $k$ in those schemes does not necessarily ensure its effectiveness.
Considering that the primary goal of these studies is to deploy foundation models in safety-critical settings, establishing a robust reliability measure and analyzing its theoretical validity is vital.

\end{document}